\documentclass[11pt]{article}
\ifdefined\XeTeXversion\else
\pdfoutput=1
\fi
\usepackage{booktabs}
\usepackage{mathrsfs}
\usepackage{amsmath,amssymb}
\usepackage{bm}
\usepackage{natbib}
\usepackage[usenames]{color}
\usepackage{amsthm}

\usepackage{multirow} 
\usepackage{enumitem}
\usepackage{graphicx}
\usepackage{array}
\usepackage{placeins}
\usepackage{tikz}
\usetikzlibrary{arrows.meta,backgrounds,fit,positioning}

\usepackage{mylatexstyle}

\usepackage[colorlinks,
linkcolor=red,
anchorcolor=blue,
citecolor=blue
]{hyperref}

\usepackage[left=1in, right=1in, top=1in, bottom=1in]{geometry}

\newtheorem{theorem}{Theorem}
\newtheorem{proposition}{Proposition}
\newtheorem{lemma}{Lemma}

\newtheorem{definition}{Definition}
\newtheorem{assumption}{Assumption}
\newtheorem{remark}{Remark}

\title{\huge Transformers as Cross-Task Learners: Shared Structure Drives Sample Efficiency in In-Context Learning}
\author{
	Zhongjie Shi \thanks{School of Mathematics, Georgia Institute of Technology, Atlanta, GA 30332, United States; E-mail:  {\tt zshi332@gatech.edu}}
    \qquad
    Rongjie Lai \thanks{Department of Mathematics, Purdue University, West Lafayette, IN 47907, United States; E-mail: {\tt lairj@purdue.edu}}
    \qquad
    Alexander Cloninger \thanks{Department of Mathematics and Hal{\i}c{\i}o\u{g}lu Data Science Institute, University of California, San Diego, La Jolla, CA 92093, United States; E-mail: {\tt acloninger@ucsd.edu}}
    \qquad
	Wenjing Liao \thanks{School of Mathematics, Georgia Institute of Technology, Atlanta, GA 30332, United States; E-mail: {\tt wliao60@gatech.edu}}
}

\date{}

\begin{document}

\maketitle

\begin{abstract}

Transformers achieve remarkable performance by jointly learning broad families of tasks during pretraining and adapting to unseen tasks from only a short prompt. Yet a rigorous mathematical and statistical understanding of this phenomenon remains limited. This paper aims to study how Transformers exploit shared cross-task structure and how this structure affects the sample complexity of in-context learning (ICL). 
Specifically, we characterize task-space complexity through covering numbers under a prescribed metric, thereby quantifying the low-dimensional cross-task structure without requiring an explicit parametric representation. The resulting cover provides a set of anchor functions, which we use to introduce a task-identification-and-evaluation procedure: context observations localize an unseen task among the anchor functions, and the response at a query is predicted by aggregating the corresponding anchor function query evaluations.
For approximation, we explicitly construct a Transformer with Softmax attention to approximate this procedure.  For generalization, we derive an error bound that separates the effects of the number of pretraining tasks and the prompt length. The scaling with respect to the number of pretraining tasks is governed by the intrinsic dimensions of the task space and input domain; once sufficiently many tasks are available, the dependence on the prompt context length becomes dimension-free. To the best of our knowledge, this is the first work to quantify cross-task complexity for general nonlinear task families and explicitly construct a Transformer that exploits their low-dimensional structure to perform ICL. Our theory provides a quantitative explanation of how joint pretraining across related tasks improves in-context generalization.

\end{abstract}

\section{Introduction}
Transformers \citep{vaswani2017attention} have demonstrated a remarkable ability to adapt their predictions
to examples supplied at inference time. This phenomenon, known as
\emph{in-context learning} (ICL), allows a single pretrained model to perform a
new task from a prompt of input--output demonstrations without updating its
parameters \citep{radford2019language,brown2020language,garg2022can}. Although ICL was first
popularized by large language models, the underlying problem is considerably
broader: the prompt specifies an unseen task, and the model must infer
the relevant input--output relation before answering a new query.

The empirical success of Transformers has motivated growing interest in understanding their underlying mechanisms. In the single-task setting, their approximation and generalization properties have been investigated in several works, including \citep{yun2020transformers,gurevych2022rate,takakura2023approximation,havrilla2024understanding,shen2025taskmanifold,shi2026transformer,shi2026learning,tai2026mathematical}. In particular, \citep{havrilla2024understanding,shi2026learning} exploit low-dimensional structure in token embeddings to develop theoretical accounts of neural scaling laws whose exponents depend on the intrinsic dimension of the input domain.

At the multi-task level, theoretical studies of Transformer-based ICL have focused extensively on linear models, where the task family consists of a parametrized collection of linear functions \citep{bai2023transformers,vonoswald2023transformers,akyurek2023what,zhang2024trained,wu2024pretraining}. In this setting, the context examples implicitly specify an unseen task through its parameters, and the Transformer learns to infer these parameters and use them to predict responses to new queries. This viewpoint has led to algorithmic interpretations of ICL in terms of classical learning procedures, including gradient descent and least-squares regression.

Beyond the linear setting, ICL for nonlinear models has been studied from several perspectives. For example, \citep{li2023transformers} interpret task inference in ICL through the lens of algorithm learning, while \citep{cheng2024functional} show that Transformers can implement functional gradient descent to learn nonlinear functions in context. A related line of work connects Transformer attention to kernel methods \citep{tsai2019transformer,yu2024nonlocal,han2025kernel,shen2026understanding,yan2026transformers}. By explicitly constructing a Transformer that implements a kernel algorithm, \citep{shen2026understanding} establish a generalization bound for H\"older functions on a low-dimensional manifold. Other approaches analyze Transformer-based ICL through basis representations \citep{kim2024transformers}, feature learning \citep{hsu2026featurizer}, and local polynomial estimation \citep{ching2026efficient}.


Existing work has provided important insights and generalization guarantees for Transformer-based ICL, but has largely focused on prediction from a single prompt. In practice, however, Transformers are pretrained across task families and can exploit shared structure rather than learn each task independently. This perspective connects ICL to multi-task and meta-learning, where shared low-dimensional representations can improve statistical efficiency and adaptation \citep{evgeniou2005learning,argyriou2006multi,maurer2016benefit,pentina2016lifelong,tripuraneni2021provable}. Such sharing has proven effective across many application domains \citep{zhang2018overview,allenspach2024neural}, and empirical evidence suggests that learned task spaces may themselves be effectively low-dimensional \citep{ramesh2022picture}. These observations motivate a family-level theory of ICL that explicitly accounts for cross-task structure.

\begin{table}[t]
\centering
\caption{Test MSE on a common set of prompts from the hidden two-dimensional subspace. The two Transformers are pretrained on the full 21-dimensional quadratic family and the hidden subspace, respectively; ridge regression uses the full 21-dimensional polynomial basis and is fitted independently to each prompt.}
\label{tab:full-vs-two}
\small
\setlength{\tabcolsep}{10pt}
\begin{tabular}{cccc}
\toprule
$n$ & ICL on full family & ICL on 2D subspace & 21D ridge regression for 2D subspace \\
\midrule
4   & 0.66663 & 0.07832 & 0.67697 \\
8   & 0.50026 & 0.03092 & 0.47792 \\
16  & 0.35547 & 0.01718 & 0.16517 \\
20  & 0.26714 & 0.01623 & 0.03750 \\
\bottomrule
\end{tabular}
\end{table}

We illustrate this benefit using ICL for quadratic polynomials on \([-1,1]^5\). Table~\ref{tab:full-vs-two} compares Transformers pretrained on the full 21-dimensional polynomial family and on a hidden two-dimensional subspace whose active basis is not given to the learner, with all other training settings fixed. At \(n=4\), the hidden-family Transformer achieves an MSE of \(0.0783\), compared with \(0.6666\) for the full-family Transformer. It also substantially outperforms 21-dimensional ridge regression on the same hidden-family tasks. Ridge knows the ambient polynomial basis but estimates each task independently, whereas the Transformer learns the shared subspace during pretraining and uses the prompt primarily to identify the new task within it. Consequently, the Transformer attains an MSE of \(0.0783\) with four observations, while ridge first reaches comparable-or-better accuracy at \(n=20\) among the tested prompt lengths.

This numerical comparison motivates our central view: transformer-based ICL is fundamentally a form of cross-task learning. During pretraining, the Transformer encodes the shared geometry of a task family in its parameters; at inference, the context observations primarily localize the new task within this learned family. When the shared structure is low-dimensional and identifiable, substantially fewer context observations may be required than when each task is learned independently. Cross-task learning thus shifts part of the statistical burden from within-task sampling to pretraining.

Consistent with this view, empirical studies show that ICL depends critically on the diversity and structure of the pretraining tasks \citep{chan2022data,raventos2023pretraining,havrilla2024surveying}. Theoretically, \citep{oko2024pretrained} analyze single-index models sharing a low-dimensional subspace, while \citep{cole2026context} derive an error bound governed by the intrinsic dimension of a task manifold. However, a general theory explaining how Transformers exploit shared structure across nonlinear, potentially nonparametric task families remains underdeveloped. We therefore focus on two complementary aspects: the representation of cross-task structure and its statistical benefit at inference time. This leads to two central questions:
\begin{itemize}
	\item[{\bf Q1:}] \textit{How do transformers exploit low-dimensional cross-task nonlinear structure in ICL?}
    \item[{\bf Q2:}] \textit{How does this cross-task structure reduce the context size needed at inference time?}    
\end{itemize}

\begin{figure}[t]
\centering
\resizebox{0.98\textwidth}{!}{%
\begin{tikzpicture}[
    >=Latex,
    flow/.style={-{Latex[length=1.8mm]},line width=0.7pt,draw=gray!65},
    leader/.style={line width=0.45pt,draw=gray!60},
    taskdot/.style={circle,draw=blue!55!black,fill=white,
        minimum size=1.9mm,inner sep=0pt,line width=0.55pt},
    inputdot/.style={circle,draw=teal!65!black,fill=teal!45,
        minimum size=1.8mm,inner sep=0pt,line width=0.45pt},
    contextdot/.style={circle,draw=blue!55!black,fill=blue!45,
        minimum size=1.8mm,inner sep=0pt},
    taskcell/.style={draw=blue!32!gray,fill=blue!8,
        fill opacity=0.15,draw opacity=0.50,line width=0.4pt},
    inputcell/.style={draw=teal!40!gray,fill=teal!8,
        fill opacity=0.15,draw opacity=0.50,line width=0.4pt},
    paneltitle/.style={font=\bfseries\small,align=center},
    note/.style={font=\scriptsize,align=center},
    result/.style={rounded corners=2pt,draw=gray!40,fill=white,
        font=\scriptsize,align=center,inner xsep=6pt,inner ysep=5pt}
]
    \draw[rounded corners=3pt,draw=gray!30,fill=blue!1]
        (0,0) rectangle (7.45,5.80);
    \draw[rounded corners=3pt,draw=gray!30,fill=teal!1]
        (7.75,0) rectangle (15.60,5.80);
    \node[paneltitle] at (3.73,5.45)
        {(a) Task identification in $\cM_f$};
    \node[paneltitle] at (11.68,5.45)
        {(b) Anchor evaluation on $\cMx$};

    \draw[-{Latex[length=1.3mm]},gray!55] (0.45,2.15)--(1.83,2.15);
    \draw[-{Latex[length=1.3mm]},gray!55] (0.45,2.15)--(0.45,3.76);
    \foreach \x/\y in {0.68/2.46,0.89/2.89,1.10/2.63,
        1.30/3.27,1.52/3.00,1.70/3.49}
        \node[contextdot] at (\x,\y) {};
    \node[note,text width=1.80cm] at (1.14,1.72)
        {$n$ context\\observations};
    \node[note] at (2.30,3.70) {identify};

    \path[fill=blue!3]
        (3.06,1.98) .. controls (2.75,2.71) and (2.98,3.89) .. (3.64,4.39)
        .. controls (4.47,4.63) and (5.55,4.24) .. (6.48,4.45)
        .. controls (6.82,3.66) and (6.55,2.42) .. (6.14,1.93)
        .. controls (5.10,1.69) and (3.88,2.02) .. cycle;
    \foreach \x/\y in {3.57/3.96,4.56/4.03,5.56/3.94,6.29/4.02,
        3.40/3.08,4.39/3.17,5.39/3.10,6.25/3.16,
        3.68/2.27,4.71/2.35,5.72/2.28}
        \draw[taskcell] (\x,\y) ellipse[x radius=0.66,y radius=0.59];
    \draw[blue!48!black,line width=0.75pt]
        (3.06,1.98) .. controls (2.75,2.71) and (2.98,3.89) .. (3.64,4.39)
        .. controls (4.47,4.63) and (5.55,4.24) .. (6.48,4.45)
        .. controls (6.82,3.66) and (6.55,2.42) .. (6.14,1.93)
        .. controls (5.10,1.69) and (3.88,2.02) .. cycle;
    \foreach \x/\y in {3.57/3.96,4.56/4.03,5.56/3.94,6.29/4.02,
        3.40/3.08,4.39/3.17,5.39/3.10,6.25/3.16,
        3.68/2.27,4.71/2.35,5.72/2.28}
        \node[taskdot] at (\x,\y) {};
    \node[taskdot,fill=blue!15] (hl) at (6.29,4.02) {};
    \node[note,text=blue!60!black] (hlabel) at (6.03,4.88) {anchor function $h_l$};
    \draw[leader] (hlabel.south)--(hl.north);
    \node[circle,draw=orange!80!black,fill=orange!65,
        minimum size=2.3mm,inner sep=0pt] (f) at (4.95,3.34) {};
    \draw[flow] (1.93,3.08)--(f.west);
    \node[note,text=orange!75!black] (flabel) at (5.21,1.38) {unseen task $f$};
    \draw[leader] (flabel.north)--(f.south);
    \node[result] (taskout) at (4.73,0.52)
        {$\xb_{n+1}\longrightarrow h_l(\xb_{n+1})
        \longrightarrow f(\xb_{n+1})$};
    \draw[flow] (flabel.south)--(taskout.north);

    \path[fill=teal!3]
        (9.51,1.98) .. controls (9.20,2.71) and (9.43,3.89) .. (10.09,4.39)
        .. controls (10.92,4.63) and (12.00,4.24) .. (12.93,4.45)
        .. controls (13.27,3.66) and (13.00,2.42) .. (12.59,1.93)
        .. controls (11.55,1.69) and (10.33,2.02) .. cycle;
    \foreach \x/\y in {10.02/3.96,11.01/4.03,12.01/3.94,12.74/4.02,
        9.85/3.08,10.84/3.17,11.84/3.10,12.70/3.16,
        10.13/2.27,11.16/2.35,12.17/2.28}
        \draw[inputcell] (\x,\y) circle[radius=0.63];
    \draw[teal!55!black,line width=0.75pt]
        (9.51,1.98) .. controls (9.20,2.71) and (9.43,3.89) .. (10.09,4.39)
        .. controls (10.92,4.63) and (12.00,4.24) .. (12.93,4.45)
        .. controls (13.27,3.66) and (13.00,2.42) .. (12.59,1.93)
        .. controls (11.55,1.69) and (10.33,2.02) .. cycle;
    \foreach \x/\y in {10.02/3.96,11.01/4.03,12.01/3.94,12.74/4.02,
        9.85/3.08,10.84/3.17,11.84/3.10,12.70/3.16,
        10.13/2.27,11.16/2.35,12.17/2.28}
        \node[inputdot] at (\x,\y) {};
    \draw[orange!65!black,line width=0.55pt] (11.01,4.03)--(11.40,3.34);
    \draw[orange!65!black,line width=1.0pt] (10.84,3.17)--(11.40,3.34);
    \draw[orange!65!black,line width=0.85pt] (11.84,3.10)--(11.40,3.34);
    \draw[orange!65!black,line width=0.45pt] (11.16,2.35)--(11.40,3.34);
    \node[circle,draw=orange!80!black,fill=orange!65,
        minimum size=2.3mm,inner sep=0pt] (query) at (11.40,3.34) {};
    \node[note,text=orange!75!black] (qlabel) at (11.76,1.38) {query $\xb_{n+1}$};
    \draw[leader] (qlabel.north)--(query.south);
    \node[note,text=teal!60!black] (zlabel) at (14.5,3.96) {anchor point $\zb_m$};
    \draw[leader] (zlabel.west)--(12.74,4.02);
    \node[result] (eval) at (11.68,0.52)
        {$\{h_l(\zb_m)\}_{m=1}^{C_x}
        \longrightarrow\ h_l(\xb_{n+1})$};
    \draw[flow] (qlabel.south)--(eval.north);

    \draw[flow] (hl.east) to[out=25,in=180] (9.43,4.60);
    \node[note,fill=white,inner xsep=3pt,inner ysep=2pt] at (8.10,4.91)
        {evaluate $h_l$};
\end{tikzpicture}%
}
\caption{Task identification and evaluation from task-space and input-domain
coverings. (a) The $n$ context observations identify the unseen task $f$
relative to representative anchor functions $h_l$ in a covering of task space $\cM_f$.
(b) For each anchor function, the construction uses the anchor values
$h_l(\zb_m)$ associated with a covering of input domain $\cM_{\xb}$. Localizing the query
$\xb_{n+1}$ relative to the anchor points $\zb_m$ combines these values to approximate
$h_l(\xb_{n+1})$. The task-identification weights then aggregate the resulting
anchor-function evaluations to approximate $f(\xb_{n+1})$.}
\label{fig:task-localization-evaluation}
\end{figure}

To address the question {\bf Q1}, we proceed in four steps. First, we introduce a complexity characterization for the task space in terms of covering number with a metric in the task space. The scaling of these covering numbers defines an intrinsic task dimension, allowing us to quantify cross-task complexity without assuming a particular parametrization. 
Second, we introduce a
task-identification-and-evaluation procedure as an oracle approximant for ICL. At a fixed covering radius, the centers of the covering balls serve as \emph{anchor functions}.
Given $n$ context observations, this oracle identifies the unseen task relative
to the anchor functions and predicts at the query by aggregating their
query-localized values. As illustrated in Figure~\ref{fig:task-localization-evaluation}, this procedure is realized through a two-level Softmax partition-of-unity (POU) approximation scheme, consisting of
a task-space POU for task identification and an input-domain POU for function
evaluation.
Under suitable continuity conditions, the covering radius and finite-context identification error jointly control the prediction error.
Third, we construct a Transformer with Softmax attention that uniformly approximates this oracle over the admissible tasks and prompts. Combining this approximation with the finite-context analysis yields a high-probability error bound.
Finally, we establish a generalization bound for the empirical risk minimizer over the transformer class by combining the preceding approximation result with a covering-number bound for the Transformer hypothesis class.

To address \textbf{Q2}, we analyze how the generalization error scales with the context length \(n\). Our bound in \eqref{eq:introgen} separates the effects of the number of pretraining tasks and the context length. Its dependence on the number of pretraining tasks follows a power law whose exponent is governed by the intrinsic dimensions of the task and input spaces. This dimension-dependent term constitutes the primary statistical burden and requires a sufficiently large number of pretraining tasks. Once this cost is absorbed during pretraining, the remaining dependence on the context length becomes dimension-free, thereby quantifying inference-time sample efficiency. By contrast, without additional structure, general single-task learning remains subject to the curse of dimensionality. Existing ICL analyses that treat the Transformer as learning each task independently from a single prompt typically inherit the same dimension-dependent rates \citep{kim2024transformers,ching2026efficient,shen2026understanding}.

To our best knowledge, this paper is the first to quantify  cross-task complexity in the general nonlinear setting, and explicitly construct Transformers to perform ICL by exploiting nonlinear cross-task low-dimensional structures.
Our main contributions are summarized as follows.
\begin{enumerate}
	
	\item {\bf Task-space and input-domain complexity and intrinsic dimension.} 
    We quantify task- and input-space complexity through covering numbers under
prescribed metrics. Let \(\cM_{\xb}\) be the input domain and
\(\cM_f\subset L^2(\cM_{\xb})\) the task space.
Assumption~\ref{assum:func_space} postulates that the covering number of
\(\cM_f\) at radius \(r_f\in(0,1]\) scales as \(r_f^{-d_f}\), where \(d_f\)
is regarded as the intrinsic task-space dimension. The input-space dimension \(d_x\) is
defined analogously in Assumption~\ref{assum:input_space}. This formulation
captures nonlinear cross-task structure without requiring an explicit latent
parametrization. Building on classical notions from metric geometry and
statistical learning
\citep{hurewicz2015dimension,kolmogorov1959varepsilon,wainwright2019high},
we connect these complexity measures directly to Transformer approximation
and generalization in ICL.

	
	\item \textbf{A task-identification-and-evaluation procedure implemented by two-level softmax POU.} We introduce a task-identification-and-evaluation procedure as an oracle approximant for
	ICL,  
	illustrated in Figure \ref{fig:task-localization-evaluation}. Furthermore, this procedure is implemented by a two-level softmax POU, consisting of a task-space POU for task identification and an input-domain POU for function evaluation. 
	This procedure adapts to the intrinsic dimensions $d_f$ and $d_{x}$ simultaneously.     
	The resulting method is discretization-free,
	as it allows context observations to vary randomly across prompts without
	imposing a common sampling grid. It also aligns naturally with the
	Transformer's native dot-product Softmax attention architecture.

	\item \textbf{Constructive approximation by a standard Transformer.}
	To approximate the task-identification-and-evaluation procedure above, we explicitly construct a shallow, wide, and dense Transformer $\rmT^*$ 
    as stated in Theorem~\ref{thm:total_approx}. For admissible
	$\epsilon$, the same network has expected squared $L^2(\rhox)$ approximation error
	\begin{equation}
    \sup_{f\in\cM_f}
    \EE_{\xb_1,\ldots,\xb_n}
    \|\rmT^*(\frakc,\cdot)-f\|_{L^2(\rhox)}^2
    =\mathcal O\left(
    \epsilon^2+\frac{\epsilon^{-4}(\log(1/\epsilon))^3}{n}
    \right).
    \label{eq:introapprox}
	\end{equation}
	Here $\frakc$ denotes the sampled context, and the expectation is over its context inputs. This error is achieved with
	$n^2\epsilon^{-q}$ dense parameters up to $\log$ factors with
	$q:=d_f+\frac{3d_x}{\alpha}$.

	\item \textbf{Generalization error across tasks.}
	We establish a generalization error bound for the 
	empirical risk minimizer $\widehat{\rmT}$ trained on $\Gamma$
	meta-training prompts with context length $n$ over a Transformer class
	(Theorem~\ref{thm:icl_generalization}). Let $\cL(\widehat{\rmT}_{\frakS})$ denote the expected
	squared prediction error of $\widehat{\rmT}$ on test samples. Our analysis employs a covering-number bound for the Transformer class and a Bernstein-type oracle inequality to yield
	\begin{equation}
	\EE_{\frakS}\cL(\widehat{\rmT}_{\frakS})
	=\mathcal O\left(
	\max\left\{
	\left(\frac{(\log n)^3}{n}\right)^{
		\frac13},
	\left(
	\frac{n^2(\log n)^{d_x/\alpha+1}}{\Gamma}
	\right)^{\frac{2}{q+2}}
	\right\}
	\right),
    \label{eq:introgen}
	\end{equation}
	where the expectation $\EE_{\frakS}$ is taken over the training data.
	The two terms in \eqref{eq:introgen} separately quantify the effects of the context length $n$
	and the number of meta-training prompts $\Gamma$. When $\Gamma$ is large, i.e., sufficiently many
	meta-training prompts are available, the leading context-length term is of order $n^{-1/3}$ up to logarithmic factors, with a polynomial exponent independent of $d_f$, $d_x$, and $\alpha$. 
\end{enumerate}

The remainder of the paper is organized as follows. Section \ref{sec:setup-architecture} introduces the in-context regression problem and specifies the Transformer architecture.
Section~\ref{sec:main-results} states our assumptions, task-identification-and-evaluation oracle, and main approximation and generalization results. 
Section~\ref{sec:related-work} discusses related works and compares our theoretical results with related
ICL results. Section~\ref{sec:proof-techniques} contains our proof of main results. Complete proofs and the explicit
Transformer construction are provided in
Appendices~\ref{app:oracle-form}--\ref{app:generalization_proofs}.

\paragraph{Notation.}
We use lower-case bold letters for vectors, upper-case bold letters for matrices, and calligraphic letters for sets, spaces, and operators. We write $\NN$ and $\RR$ for the natural and real numbers, respectively, and $[N]:=\{1,\ldots,N\}$ for $N\in\NN$. For vectors $\bx,\by\in\RR^q$, $\langle\bx,\by\rangle$ denotes the Euclidean inner product and $\|\bx\|_p$ denotes the standard $\ell_p$ norm. We use $\be_r$ for the $r$-th standard basis vector, whose ambient dimension is inferred from context so that each matrix product is well defined. We use $\bm{0}_q$ and $\bm{1}_q$ for the zero and all-one vectors in $\RR^q$, $\bm{0}_{p\times q}$ for the zero matrix in $\RR^{p\times q}$, and $\bI_q$ for the $q\times q$ identity matrix. For $\bX\in\RR^{p\times q}$, $(\bX)_{r,j}$ is its $(r,j)$-th entry, $(\bX)_{:,j}$ and $(\bX)_{r,:}$ are its $j$-th column and $r$-th row, $\mathrm{vec}(\bX)$ is its column-wise vectorization, and $\|\bX\|_{\max}:=\max_{r,j}|(\bX)_{r,j}|$. For measurable functions $f,g$, $\langle f,g\rangle_{L^2(\rhox)}:=\int f(\xb)g(\xb)\,d\rhox(\xb)$, $\|f\|_{L^2(\rhox)}:=\langle f,f\rangle_{L^2(\rhox)}^{1/2}$, and $\|f\|_{L^\infty(\cMx)}:=\sup_{\xb\in\cMx}|f(\xb)|$. We write $1_{\{E\}}$ for the indicator of an event or condition $E$, $\mathcal N(\eta,\cG,\|\cdot\|)$ for the minimal cardinality of an $\eta$-cover of $\cG$ under $\|\cdot\|$, and $\EE$ and $\PP$ for expectation and probability. The notation $\widetilde{\mathcal O}$ suppresses logarithmic factors.

\section{Problem Setup and Transformer Architecture}
\label{sec:setup-architecture}

This section introduces the in-context regression problem, the meta-training
data set, the population and empirical risks, and the Transformer architecture
used throughout the paper.

\subsection{In-Context Regression Problem}
\label{sec:problem-setup}

Let $\cM_f$ be a task function space and let
$\cMx\subset[0,1]^d$ be an input domain. 
A task $f:\cMx\rightarrow \RR$ is sampled from a probability distribution $\rho_f$ supported on
$\cM_f$. Independently of $f$, the inputs
$\xb_1,\ldots,\xb_{n+1}$ are sampled i.i.d. from a probability distribution
$\rhox$ supported on $\cMx$. Given
$f\sim\rho_f$ and $\xb_1,\ldots,\xb_{n+1}\overset{\mathrm{i.i.d.}}{\sim}\rhox$,
the context $\frakc$ and prompt $\fraks$ are
\begin{equation}
    \label{eq:onesample}
    \frakc:=\{(\xb_i,y_i)\}_{i=1}^n,
    \qquad
    \fraks:=(\frakc,\xb_{n+1}),
    \qquad y_i:=f(\xb_i).
\end{equation}
The $n$ input--output pairs form the context $\frakc$, $\xb_{n+1}$ is the
query, and the prediction target is $y_{n+1}=f(\xb_{n+1})$.
For a predictor $\rmT$ on prompts, we write
$\rmT(\fraks)=\rmT(\frakc,\xb_{n+1})$; when $\frakc$ fixed,
$\rmT(\frakc,\cdot)$ denotes its prediction as a function of the query.
The population risk is
\begin{equation}
    \label{population-risk}
    \begin{aligned}
    \cL(\rmT)
    &:=
    \EE_{f\sim\rho_f}
    \EE_{\xb_1,\ldots,\xb_{n+1}\overset{\mathrm{i.i.d.}}{\sim}\rhox}
    \left[
    \big(\rmT(\fraks)-f(\xb_{n+1})\big)^2
    \right]\\
    &=
    \EE_{f\sim\rho_f}
    \EE_{\xb_1,\ldots,\xb_n\overset{\mathrm{i.i.d.}}{\sim}\rhox}
    \left[\|\rmT(\frakc,\cdot)-f\|_{L^2(\rhox)}^2\right].
    \end{aligned}
\end{equation}

In practice, we only have access to finite training data, so we replace the population risk by its empirical counterpart.
The meta-training data set
\begin{equation} \label{eq:meta-training-data}
	\frakS
	:=\{(\fraks^\gamma,y_{n+1}^\gamma)\}_{\gamma=1}^{\Gamma}
\end{equation}
consists of $\Gamma$ independent task prompts and their corresponding prediction targets.
Specifically,
$
f^1,\ldots,f^\Gamma
\overset{\mathrm{i.i.d.}}{\sim}\rho_f,
$
and for each $\gamma\in[\Gamma]$, independently of $f^\gamma$, $
\xb_1^\gamma,\ldots,\xb_{n+1}^\gamma
\overset{\mathrm{i.i.d.}}{\sim}\rhox.$
We then set
\begin{equation*}
    \begin{aligned}
    \frakc^\gamma&:=\{(\xb_i^\gamma,f^\gamma(\xb_i^\gamma))\}_{i=1}^n,
    \qquad \fraks^\gamma:=(\frakc^\gamma,\xb_{n+1}^\gamma), \qquad y_{n+1}^\gamma&:=f^\gamma(\xb_{n+1}^\gamma).
    \end{aligned}
\end{equation*}

The corresponding empirical risk is
\begin{equation}
	\label{eq:empirical-risk}
	\cL_{\frakS}(\rmT)
	:=\frac1\Gamma\sum_{\gamma=1}^{\Gamma}
	\big(\rmT(\fraks^\gamma)-y_{n+1}^\gamma\big)^2.
\end{equation}
For a specified hypothesis class $\cG$, we define an empirical risk minimizer
by
\begin{equation}
	\label{eq:erm}
	\widehat{\rmT}_{\frakS}
	\in
	\argmin_{\rmT\in\cG}\cL_{\frakS}(\rmT).
\end{equation}
To quantify the generalization performance of this data-dependent estimator,
our goal is to bound its expected population risk over the meta-training data $\frakS$:
\[
    \EE_{\frakS}\cL(\widehat{\rmT}_{\frakS}).
\]
The clipped Transformer hypothesis class used in our generalization result is specified
in Section~\ref{sec:generalization}.

\subsection{Transformer Architecture}
\label{sec:architecture}

The architecture consists of a pre-processing stage followed by $L$ encoder
blocks, each comprising a multi-head attention (MHA) layer and a point-wise
feed-forward neural network (FFN) layer. For the prompt $\fraks =\{(\xb_i,y_i)_{i=1}^n,\xb_{n+1}\}$ in
\eqref{eq:onesample} and a sequence length $P\ge n+1$, define the padded token matrix
\[
    \bX(\fraks)
    :=
    \begin{bmatrix}
        \xb_1 & \cdots & \xb_n & \xb_{n+1}
        & \bm{0}_{d\times(P-n-1)}\\
        y_1 & \cdots & y_n & 0
        & \bm{0}_{P-n-1}^{\top}
    \end{bmatrix}
    \in\RR^{(d+1)\times P}.
\]
The pre-processing step applies a shared
affine embedding to these tokens and adds a structural-positional encoding:
\[
    \bZ_0=\cP(\fraks)
    :=\bW_E\bX(\fraks)+\bb_E\bm{1}_P^\top+\bP,
\]
where $\bW_E\in\RR^{D\times(d+1)}$, $\bb_E\in\RR^D$, and
$\bP\in\RR^{D\times P}$ is the structural-positional encoding matrix. The
positional rows in our construction use sinusoidal encodings, consistent with
the standard Transformer architecture \citep{vaswani2017attention}.

Consider the $\ell$-th encoder block, $\ell\in[L]$, with input
$\bZ_{\ell-1} \in\RR^{D\times P}$. Its MHA layer
$\cA_\ell:\RR^{D\times P}\to\RR^{D\times P}$ has $H^\ell$ heads.
For each $h\in[H^\ell]$, the query, key, and value matrices satisfy
$\bQ_\ell^h\in\RR^{d_k^\ell\times D}$,
$\bK_\ell^h\in\RR^{d_k^\ell\times D}$, and
$\bV_\ell^h\in\RR^{d_v^\ell\times D}$, respectively. The output of the
$h$-th attention head ${\head}_\ell^h\in\RR^{d_v^\ell\times P}$ is
computed as
\begin{equation*}
    {\head}_\ell^h=\bV_\ell^h\bZ_{\ell-1}\bA_\ell^h,
\end{equation*}
where $\bA_\ell^h\in\RR^{P\times P}$ is the attention probability matrix.
For $t,j\in[P]$, define
\begin{equation*}
    s^{\ell,h}_{t,j}:=
    \left(\bZ_{\ell-1}^\top {\bK_\ell^h}^\top
    \bQ_\ell^h \bZ_{\ell-1}\right)_{t,j},
    \qquad
    (\bA_\ell^h)_{:,j}
    :=\softmax\!\left(
    \big(s^{\ell,h}_{1,j},\ldots,s^{\ell,h}_{P,j}\big)^\top
    \right).
\end{equation*}
Here $\softmax: \RR^{P} \to \RR^{P}$ is defined component-wise by
\begin{equation*}
    \softmax(\bx)
    :=\left(
    \frac{\exp(x_1)}{\sum_{t=1}^P\exp(x_t)},\ldots,
    \frac{\exp(x_P)}{\sum_{t=1}^P\exp(x_t)}
    \right)^\top.
\end{equation*}
The MHA output
$\widehat{\bZ}_\ell\in\RR^{D\times P}$ is obtained by concatenating the
outputs of all $H^\ell$ heads along the feature dimension and applying a
linear projection:
\begin{equation*}
    \widehat{\bZ}_\ell
    =\bW_\ell^O
    \begin{bmatrix}
        {\head}_\ell^1\\ \vdots\\ {\head}_\ell^{H^\ell}
    \end{bmatrix},
\end{equation*}
where $\bW_\ell^O\in\RR^{D\times(H^\ell d_v^\ell)}$ is the output
projection matrix.

The MHA output $\widehat{\bZ}_\ell\in\RR^{D\times P}$ is subsequently
passed through the point-wise FFN
$\cF_\ell:\RR^{D\times P}\to\RR^{D\times P}$. Its weight matrices satisfy
$\bW_\ell^1\in\RR^{d_{\mathrm{ff}}^\ell\times D}$,
$\bW_\ell^2\in\RR^{D\times d_{\mathrm{ff}}^\ell}$, and bias vectors satisfy
$\bb_\ell^1\in\RR^{d_{\mathrm{ff}}^\ell}$, and
$\bb_\ell^2\in\RR^D$, and it computes
\begin{equation} \label{FFNlayer}
    (\bZ_\ell)_{:,j}
    =\bW_\ell^2\sigma\!\left(
    \bW_\ell^1(\widehat{\bZ}_\ell)_{:,j}+\bb_\ell^1
    \right)+\bb_\ell^2,
    \qquad j\in[P],
\end{equation}
where $\sigma(\cdot) = \max(0, \cdot)$ is the ReLU activation function applied component-wise.

Finally, for the Transformer model with $L$ encoder blocks, its output is
\begin{equation*}
    \cT_L(\fraks) = \cF_L \circ \cA_L \circ \cdots \circ \cF_1 \circ \cA_1 \circ \cP(\fraks) \in \RR^{D \times P}.
\end{equation*}
The final scalar output is obtained through the linear map
$\bc_{L+1}^\top \mathrm{vec}(\cT_L(\fraks))$. The following definition
formalizes the resulting Transformer hypothesis class.

\begin{definition}[Transformer Network Class] \label{def_transformer_class}
    For depth $L \in \NN$, embedding dimension $D \in \NN$, sequence length $P \in \NN$, layer configurations $\{H^\ell\}_{\ell=1}^L, \{d_k^\ell\}_{\ell=1}^L, \{d_v^\ell\}_{\ell=1}^L, \{d_{\text{ff}}^\ell\}_{\ell=1}^L \subset \NN$, and parameter magnitude bound $M > 0$, we define the class of Transformer networks as
    \begin{align*}
        &\mathcal{T}\Big(L, D, P, \{H^\ell\}_{\ell=1}^L, \{d_k^\ell\}_{\ell=1}^L, \{d_v^\ell\}_{\ell=1}^L, \{d_{\text{ff}}^\ell\}_{\ell=1}^L, M\Big) \\
        &\quad = \left\{ f_\theta \;\left|\; 
        \begin{aligned}
            &f_\theta(\fraks)= \bc_{L+1}^\top \mathrm{vec}(\cT_L(\fraks)) \text{ is an } L\text{-block Transformer with embedding dim } D,  \\
            &\text{ sequence length } P, \text{parameter bound } \|\theta\|_\infty \le M, \text{ and for each block } \ell \in [L]:  \\
            &\text{there are } H^\ell \text{ heads, with query/key dimension }
            d_k^\ell\text{ and value dimension }d_v^\ell,\\
            &\text{and the FFN hidden width is }d_{\text{ff}}^\ell
        \end{aligned}
        \right. \right\}.
    \end{align*}
\end{definition}

\section{Main Results}
\label{sec:main-results}

This section develops the four components of our main theory. Section
\ref{sec:structural-assumptions} characterizes the complexity of the task and
input spaces through metric covering numbers and their associated intrinsic
dimensions. Section \ref{sec:oracle-approximant} uses these coverings to build
a task-identification-and-evaluation oracle implemented by a two-level
Softmax POU. Section \ref{sec:approximation} constructs a standard Transformer
that approximates this oracle, and Section \ref{sec:generalization}
establishes its generalization across tasks when the Transformer is trained by
empirical risk minimization over meta-training data set.

\subsection{Assumptions on Task-Space and Input-Domain Complexity}
\label{sec:structural-assumptions}

To study general, possibly nonlinear, cross-task structure, we seek a notion of complexity that does not rely on a prescribed finite-dimensional parameterization or coordinate system. We therefore characterize the task space through its metric covering numbers, which measure how many representative tasks are needed to describe the entire task family at a given resolution. This provides a direct geometric description of cross-task complexity and naturally yields anchor functions as centers of a task-space cover. We characterize the input domain analogously by its metric covering complexity. Finally, we impose uniform boundedness and H\"older regularity to control the approximation of anchor-function values on the input domain.

\begin{assumption}[Task Function Space $\cM_f$]
\label{assum:func_space}
The tasks are sampled from a prior distribution $\rho_f$ supported on the task
function space $\cM_f$, which satisfies the following properties:
\begin{enumerate}
    \item \textbf{Low-dimensional Structure:} For any $r_f\in(0,1]$, there
    exists a set of anchor functions
    $\{h_l\}_{l=1}^{C_f}\subset\cM_f$ forming an $r_f$-cover under the
    $L^2(\rhox)$ norm, with
    \begin{equation}
        C_f\le C_{\cM}r_f^{-d_f},
        \label{eq:Cf}
    \end{equation}
    for some universal constant $C_{\cM}\ge1$ and intrinsic dimension
    $d_f$.
    \item \textbf{H\"older Smoothness:} Any function $f \in \cM_f$ is $\alpha$-H\"older smooth on $\cMx$ with $\alpha \in (0,1]$ and uniformly bounded $\mathcal{C}^\alpha$-norm. That is, there exists a uniform constant $L_f > 0$ such that $|f(\xb) - f(\xb')| \le L_f \|\xb - \xb'\|_2^\alpha$ for all $\xb, \xb' \in \cMx$.
    \item \textbf{Uniform Boundedness:} There exists $B_f > 0$ such that $\|f\|_{L^\infty(\cMx)} \le B_f$ for all $f \in \cM_f$.
\end{enumerate}
\end{assumption}

\begin{remark}[On the Low-dimensional Structure of $\cM_f$]
The covering bound $C_f\le C_{\cM}r_f^{-d_f}$ implies
$\log C_f=\mathcal O(\log(1/r_f))$ and encodes a finite intrinsic
dimension $d_f$ for the task function space $\cM_f$. In contrast, classical
infinite-dimensional smoothness classes $\cF$, such as
Sobolev and Besov balls, typically have metric entropy of the form
$\log\mathcal N(r,\cF,\|\cdot\|)\asymp r^{-p}$, where $p>0$ is the
metric-entropy exponent determined by the domain dimension, smoothness, and
covering norm. For example, an $s$-smooth class on a $d$-dimensional domain
typically has $p=d/s$ under standard choices of norm. The present assumption
is therefore suited to task families governed by low-dimensional latent
structure. In the ICL setting, a representative example is a family
$f_{\zb}(\cdot)$ parameterized by a latent concept vector
$\zb\in\RR^{d_f}$. Other examples include bounded-degree polynomial
families, generalized linear models, functions parameterized by
low-dimensional manifolds, and neural networks with fixed architectures and
bounded parameters.
\end{remark}

\begin{remark}[The task-space metric and prediction risk]
The $L^2(\rhox)$ metric matches the population risk in
\eqref{population-risk}, which averages the squared prediction error over an
independent query. We therefore measure task approximation in $L^2(\rhox)$.
Theorem~\ref{thm:total_approx} bounds the expected squared $L^2(\rhox)$
approximation error over the context inputs uniformly over tasks, and thus
controls the population risk for any task distribution supported on $\cM_f$.
\end{remark}

\begin{assumption}[Input Domain $\cMx$]
\label{assum:input_space}
The context and query inputs
are i.i.d. samples from a distribution $\rhox$ supported on a compact set
$\cMx\subset[0,1]^d$.
Given any radius $r_x\in(0,1]$, there exists a set of anchor points
$\{\zb_m\}_{m=1}^{C_x}\subset\cMx$ forming an $r_x$-cover under the ambient
Euclidean norm $\|\cdot\|_2$. Its cardinality satisfies
\begin{equation}
    C_x \le C_2 r_x^{-d_x},
            \label{eq:Cx}
\end{equation}
for some universal constant $C_2 > 0$ and intrinsic dimension $d_x \le d$.
\end{assumption}

\begin{remark}
Assumption \ref{assum:input_space} characterizes the geometric complexity of
the input data. By defining the covering number directly under the ambient
$\ell_2$ norm, it accommodates the full cube $[0,1]^d$, compact subsets of
lower-dimensional affine subspaces, and compact Riemannian manifolds embedded
in $[0,1]^d$, without requiring explicit intrinsic coordinates or a
prescribed geometric model.
\end{remark}

\subsection{Task Identification and Evaluation via Two-Level Softmax POU}
\label{sec:oracle-approximant}

Building upon the covering complexity characterization in
Section~\ref{sec:structural-assumptions}, here we introduce a
task-identification-and-evaluation oracle for in-context regression. At a
fixed resolution, the task-space cover provides representative anchor
functions, while the input-domain cover provides representative anchor
points. Given a finite prompt, the oracle uses the $n$ context observations
to identify the unseen task relative to the anchor functions and then
predicts at the query using the corresponding anchor-function values. In
this way, the prediction exploits the low-dimensional cross-task structure
of $\cM_f$, rather than treating the unseen task as an independent regression
problem.

We realize this procedure through a two-level Softmax POU. The task-space POU identifies the unseen task $f$ relative to the anchor
functions $\{h_l\}_{l=1}^{C_f}\subset\cM_f$, while the input-domain POU approximates the values of
these anchor functions at the query from their values at the anchor points
$\{\zb_m\}_{m=1}^{C_x}\subset\cMx$. This construction accommodates randomly
located context observations without requiring a common sampling grid and
naturally aligns with the Transformer's dot-product Softmax attention
mechanism.

We develop the oracle through three approximation steps, followed by a joint
Softmax representation. The proofs for this subsection are provided in
Appendix~\ref{app:oracle-form}.
We begin at the population level. The task-space POU uses population inner
products to identify the unseen task relative to the anchor functions.

\begin{figure}[t]
\centering
\resizebox{0.90\textwidth}{!}{%
\begin{tikzpicture}[
    >=Latex,
    anchorpoint/.style={circle,draw=teal!65!black,fill=white,
        minimum size=1.7mm,inner sep=0pt,line width=0.5pt},
    covercell/.style={draw=teal!26,fill=teal!3,
        fill opacity=0.28,line width=0.4pt},
    note/.style={font=\small,inner sep=1.2pt}
]
    \definecolor{pouLow}{rgb}{0.97,0.98,0.98}
    \definecolor{pouHigh}{rgb}{0.09,0.46,0.52}
    \pgfdeclarefunctionalshading{pouWeight2}{\pgfpoint{0bp}{0bp}}{\pgfpoint{100bp}{100bp}}{}{%
        25 sub 50 div 4.4 mul exch 25 sub 50 div 6.4 mul exch
        0
        2 index 2 index 3.3 sub dup mul exch 1.17 sub dup mul add -3.2 mul 2.718281828 exch exp add
        2 index 2 index 3.43 sub dup mul exch 2.33 sub dup mul add -3.2 mul 2.718281828 exch exp add
        2 index 2 index 3.31 sub dup mul exch 3.54 sub dup mul add -3.2 mul 2.718281828 exch exp add
        2 index 2 index 3.4 sub dup mul exch 4.75 sub dup mul add -3.2 mul 2.718281828 exch exp add
        2 index 2 index 2.16 sub dup mul exch 0.99 sub dup mul add -3.2 mul 2.718281828 exch exp add
        2 index 2 index 2.27 sub dup mul exch 2.14 sub dup mul add -3.2 mul 2.718281828 exch exp add
        2 index 2 index 2.19 sub dup mul exch 3.32 sub dup mul add -3.2 mul 2.718281828 exch exp add
        2 index 2 index 2.29 sub dup mul exch 4.51 sub dup mul add -3.2 mul 2.718281828 exch exp add
        2 index 2 index 2.36 sub dup mul exch 5.42 sub dup mul add -3.2 mul 2.718281828 exch exp add
        2 index 2 index 1.04 sub dup mul exch 1.38 sub dup mul add -3.2 mul 2.718281828 exch exp add
        2 index 2 index 1.1 sub dup mul exch 2.57 sub dup mul add -3.2 mul 2.718281828 exch exp add
        2 index 2 index 1.08 sub dup mul exch 3.78 sub dup mul add -3.2 mul 2.718281828 exch exp add
        2 index 2 index 1.2 sub dup mul exch 4.98 sub dup mul add -3.2 mul 2.718281828 exch exp add
        2 index 2 index 2.19 sub dup mul exch 3.32 sub dup mul add -3.2 mul 2.718281828 exch exp exch div
        exch pop exch pop
        dup -0.88 mul 0.97 add exch
        dup -0.52 mul 0.98 add exch
        -0.46 mul 0.98 add
    }
    \node[font=\small] at (3.20,4.62) {(a) Task space $\cM_f$};
    \node[font=\small] at (9.90,4.62) {(b) $\beta_l(f)$};
    \path[fill=teal!2] (0.48,0.65) .. controls (0.18,1.70) and (0.50,3.38) .. (1.38,3.92)
            .. controls (2.58,4.20) and (4.24,3.72) .. (5.78,4.00)
            .. controls (6.32,3.02) and (6.08,1.46) .. (5.47,0.62)
            .. controls (4.06,0.26) and (1.77,0.43) .. cycle;
    \begin{scope}
        \clip (0.48,0.65) .. controls (0.18,1.70) and (0.50,3.38) .. (1.38,3.92)
            .. controls (2.58,4.20) and (4.24,3.72) .. (5.78,4.00)
            .. controls (6.32,3.02) and (6.08,1.46) .. (5.47,0.62)
            .. controls (4.06,0.26) and (1.77,0.43) .. cycle;
    \foreach \x/\y in {1.17/3.3,2.33/3.43,3.54/3.31,4.75/3.4,0.99/2.16,2.14/2.27,3.32/2.19,4.51/2.29,5.42/2.36,1.38/1.04,2.57/1.1,3.78/1.08,4.98/1.2}
        \draw[covercell] (\x,\y) circle[radius=0.80];
    \end{scope}
    \draw[teal!50!black,line width=0.7pt] (0.48,0.65) .. controls (0.18,1.70) and (0.50,3.38) .. (1.38,3.92)
            .. controls (2.58,4.20) and (4.24,3.72) .. (5.78,4.00)
            .. controls (6.32,3.02) and (6.08,1.46) .. (5.47,0.62)
            .. controls (4.06,0.26) and (1.77,0.43) .. cycle;
    \draw[teal!70!black,line width=0.95pt] (3.32,2.19) circle[radius=0.80];
    \foreach \x/\y in {1.17/3.3,2.33/3.43,3.54/3.31,4.75/3.4,0.99/2.16,2.14/2.27,3.32/2.19,4.51/2.29,5.42/2.36,1.38/1.04,2.57/1.1,3.78/1.08,4.98/1.2}
        \node[anchorpoint] at (\x,\y) {};
    \node[circle,fill=teal!70!black,inner sep=0pt,minimum size=2mm]
        at (3.32,2.19) {};
    \node[note,anchor=east,text=teal!70!black,fill=white]
        at (3.16,2.02) {$h_l$};
    \draw[teal!65!black,line width=0.55pt] (3.32,2.19)--(3.32,1.39);
    \node[note,anchor=west,text=teal!65!black,fill=white]
        at (3.39,1.72) {$r_f$};
    \draw[orange!70!black,line width=0.75pt] (3.32,2.19)--(3.79,2.63);
    \node[circle,draw=orange!80!black,fill=orange!75,
        minimum size=2.1mm,inner sep=0pt] at (3.79,2.63) {};
    \node[note,anchor=south west,text=orange!70!black,fill=white]
        at (3.89,2.72) {$f$};
    \begin{scope}[xshift=6.7cm]
        \begin{scope}
            \clip (0.48,0.65) .. controls (0.18,1.70) and (0.50,3.38) .. (1.38,3.92)
            .. controls (2.58,4.20) and (4.24,3.72) .. (5.78,4.00)
            .. controls (6.32,3.02) and (6.08,1.46) .. (5.47,0.62)
            .. controls (4.06,0.26) and (1.77,0.43) .. cycle;
            \shade[shading=pouWeight2] (0,0) rectangle (6.4,4.4);
        \end{scope}
        \draw[teal!50!black,line width=0.7pt] (0.48,0.65) .. controls (0.18,1.70) and (0.50,3.38) .. (1.38,3.92)
            .. controls (2.58,4.20) and (4.24,3.72) .. (5.78,4.00)
            .. controls (6.32,3.02) and (6.08,1.46) .. (5.47,0.62)
            .. controls (4.06,0.26) and (1.77,0.43) .. cycle;
        \foreach \x/\y in {1.17/3.3,2.33/3.43,3.54/3.31,4.75/3.4,0.99/2.16,2.14/2.27,3.32/2.19,4.51/2.29,5.42/2.36,1.38/1.04,2.57/1.1,3.78/1.08,4.98/1.2}
            \node[circle,draw=teal!45!black,fill=white,fill opacity=0.65,
                minimum size=1.3mm,inner sep=0pt,line width=0.35pt] at (\x,\y) {};
        \draw[white,densely dashed,line width=0.8pt]
            (3.32,2.19) circle[radius=0.80];
        \node[circle,draw=white,fill=teal!85!black,
            minimum size=2mm,inner sep=0pt,line width=0.6pt] at (3.32,2.19) {};
        \node[note,anchor=east,text=white] at (3.16,2.02) {$h_l$};
        \node[circle,draw=white,fill=orange!80,
            minimum size=2.1mm,inner sep=0pt,line width=0.6pt] at (3.79,2.63) {};
        \node[note,anchor=south west,text=orange!70!black,fill=white,
            fill opacity=0.8,text opacity=1] at (3.89,2.72) {$f$};
    \end{scope}
    \shade[left color=pouLow,right color=pouHigh]
        (8.75,-0.02) rectangle (10.65,0.12);
    \node[font=\scriptsize,anchor=east,text=black!65] at (8.63,0.05) {low};
    \node[font=\scriptsize,anchor=west,text=black!65] at (10.77,0.05) {high};
    \node[font=\small,text=teal!70!black] at (6.55,-0.65)
        {$\displaystyle \tilde f_{r_f}(f,\cdot)
        =\sum_{l=1}^{C_f}\beta_l(f)\,h_l
        \ \approx\ f$};
\end{tikzpicture}%
}
\caption{Task-space Softmax POU. Left: an $r_f$-cover of $\cM_f$, with an unseen task $f$ and a highlighted anchor function $h_l$. Right: schematic concentration of the task-identification weight $\beta_l(f)$ around $h_l$, illustrated by a continuous color scale. The weight is normalized over all anchor functions.}
\label{fig:task-space-softmax-pou}
\end{figure}

\begin{lemma}[Task-Space POU: Task Identification]
\label{lemma:outer_pou}
Under Assumption \ref{assum:func_space}, fix any covering radius $r_f\in(0,1]$.
There exists a finite set of anchor functions
$\{h_l\}_{l=1}^{C_f}\subset\cM_f$ forming an $r_f$-cover of $\cM_f$
under the $L^2(\rhox)$ norm. The number of anchor functions satisfies \eqref{eq:Cf}.
Set the scaling parameter
\[
    M_f
    :=
    \max\left\{
    1,\,
    \frac{1}{3r_f^2}
    \log\frac{2B_fC_f}{r_f}
    \right\}.
\]
For $f\in\cM_f$ and $l\in[C_f]$, define the population task-space POU weight
\begin{align}
\beta_l(f)
&:=
\frac{\exp\left(M_f(r_f^2-\|f-h_l\|_{L^2(\rhox)}^2)\right)}
{\sum_{k=1}^{C_f}\exp\left(M_f(r_f^2-\|f-h_k\|_{L^2(\rhox)}^2)\right)} =
\frac{\exp\left(2M_f\langle f,h_l\rangle_{L^2(\rhox)}
-M_f\|h_l\|_{L^2(\rhox)}^2\right)}
{\sum_{k=1}^{C_f}
\exp\left(2M_f\langle f,h_k\rangle_{L^2(\rhox)}
-M_f\|h_k\|_{L^2(\rhox)}^2\right)} .
\end{align}
The weights satisfy $\beta_l(f)\ge0$ and
$\sum_{l=1}^{C_f}\beta_l(f)=1$. The resulting population task-space POU approximant
\[
    \tilde f_{r_f}(f,\xb):=\sum_{l=1}^{C_f}\beta_l(f)h_l(\xb)
\]
satisfies
\begin{equation}
\label{outer-pou-error}
    \|\tilde f_{r_f}(f,\cdot)-f\|_{L^2(\rhox)}
    \le 3r_f,\qquad \forall f\in\cM_f.
\end{equation}
\end{lemma}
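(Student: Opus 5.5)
The existence of the cover and the bound on $C_f$ come directly from Assumption~\ref{assum:func_space}. The equality of the two expressions for $\beta_l(f)$ follows by expanding $\|f-h_l\|^2=\|f\|^2-2\langle f,h_l\rangle+\|h_l\|^2$. The $l$-independent factor $\exp(M_f(r_f^2-\|f\|^2))$ then cancels between numerator and denominator. Nonnegativity and summation to one are immediate from the softmax form. All norms and inner products below are in $L^2(\rhox)$.

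For the error bound, use $\sum_l\beta_l=1$ and the triangle inequality to write
\[
\|\tilde f_{r_f}(f,\cdot)-f\|\le\sum_{l}\beta_l(f)\|h_l-f\|.
\]
Split the indices into the near set $\cI:=\{l:\|f-h_l\|\le 2r_f\}$ and its complement. On $\cI$ the contribution is at most $2r_f\sum_{l\in\cI}\beta_l\le 2r_f$. On the complement, bound $\|h_l-f\|\le 2B_f$, using uniform boundedness and $\rhox$ being a probability measure.

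The main step is to show that the far weights are tiny. Since the $h_k$ form an $r_f$-cover, some $k^*$ satisfies $\|f-h_{k^*}\|\le r_f$, so its exponent is at least $0$. The denominator is therefore at least $1$. For $l\notin\cI$ the numerator is at most $\exp(M_f(r_f^2-4r_f^2))=\exp(-3M_fr_f^2)$. Hence
\[
\sum_{l\notin\cI}\beta_l\le C_f e^{-3M_fr_f^2}\le C_f\cdot\frac{r_f}{2B_fC_f}=\frac{r_f}{2B_f},
\]
by the choice $M_f\ge\frac{1}{3r_f^2}\log\frac{2B_fC_f}{r_f}$. The far contribution is thus at most $2B_f\cdot r_f/(2B_f)=r_f$.

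Adding the two contributions gives $\|\tilde f_{r_f}(f,\cdot)-f\|\le 3r_f$. The bound holds uniformly in $f\in\cM_f$, because the argument used only the cover property and uniform boundedness.

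One edge case: if $\log\frac{2B_fC_f}{r_f}<0$, the max selects $M_f=1$. Then $2B_fC_f<r_f$, which forces $\|f-h_l\|\le 2B_f<r_f$ for every $l$, so the far set is empty and the bound holds trivially.

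The only delicate step is the far-weight estimate. It hinges on lower-bounding the denominator by the nearest anchor and matching the threshold $2r_f$ to the constant $3$ in the exponent of $M_f$.
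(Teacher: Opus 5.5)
Your proposal is correct and follows the paper's proof essentially step for step: the same near/far split at radius $2r_f$, the same lower bound of $1$ on the denominator via the nearest anchor, and the same far-weight estimate $2B_fC_fe^{-3M_fr_f^2}\le r_f$. Your extra edge case is harmless but unnecessary. Since $M_f=\max\{1,\cdot\}$ always dominates $\frac{1}{3r_f^2}\log\frac{2B_fC_f}{r_f}$, the far-weight inequality holds whatever the sign of the logarithm.
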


Lemma \ref{lemma:outer_pou} is proved in Appendix \ref{appsub:prooflemma:outer_pou}. 
Figure \ref{fig:task-space-softmax-pou} illustrates the task-space Softmax
POU. Every anchor function receives a positive weight, while anchors closer to
the unseen task in $L^2(\rhox)$ receive larger weights. The resulting convex
combination provides an approximation based on task identification without requiring coordinates
on the task space.

\begin{figure}[t]
\centering
\resizebox{0.90\textwidth}{!}{%
\definecolor{pouInput}{rgb}{0.30,0.34,0.72}
\begin{tikzpicture}[
    >=Latex,
    anchorpoint/.style={circle,draw=pouInput!65!black,fill=white,
        minimum size=1.7mm,inner sep=0pt,line width=0.5pt},
    covercell/.style={draw=pouInput!26,fill=pouInput!3,
        fill opacity=0.28,line width=0.4pt},
    note/.style={font=\small,inner sep=1.2pt}
]
    \definecolor{pouLow}{rgb}{0.98,0.98,1.00}
    \definecolor{pouHigh}{rgb}{0.30,0.34,0.72}
    \pgfdeclarefunctionalshading{pouWeight3}{\pgfpoint{0bp}{0bp}}{\pgfpoint{100bp}{100bp}}{}{%
        25 sub 50 div 4.4 mul exch 25 sub 50 div 6.4 mul exch
        0
        2 index 2 index 3.3 sub dup mul exch 1.17 sub dup mul add -3.2 mul 2.718281828 exch exp add
        2 index 2 index 3.43 sub dup mul exch 2.33 sub dup mul add -3.2 mul 2.718281828 exch exp add
        2 index 2 index 3.31 sub dup mul exch 3.54 sub dup mul add -3.2 mul 2.718281828 exch exp add
        2 index 2 index 3.4 sub dup mul exch 4.75 sub dup mul add -3.2 mul 2.718281828 exch exp add
        2 index 2 index 2.16 sub dup mul exch 0.99 sub dup mul add -3.2 mul 2.718281828 exch exp add
        2 index 2 index 2.27 sub dup mul exch 2.14 sub dup mul add -3.2 mul 2.718281828 exch exp add
        2 index 2 index 2.19 sub dup mul exch 3.32 sub dup mul add -3.2 mul 2.718281828 exch exp add
        2 index 2 index 2.29 sub dup mul exch 4.51 sub dup mul add -3.2 mul 2.718281828 exch exp add
        2 index 2 index 2.36 sub dup mul exch 5.42 sub dup mul add -3.2 mul 2.718281828 exch exp add
        2 index 2 index 1.04 sub dup mul exch 1.38 sub dup mul add -3.2 mul 2.718281828 exch exp add
        2 index 2 index 1.1 sub dup mul exch 2.57 sub dup mul add -3.2 mul 2.718281828 exch exp add
        2 index 2 index 1.08 sub dup mul exch 3.78 sub dup mul add -3.2 mul 2.718281828 exch exp add
        2 index 2 index 1.2 sub dup mul exch 4.98 sub dup mul add -3.2 mul 2.718281828 exch exp add
        2 index 2 index 2.19 sub dup mul exch 3.32 sub dup mul add -3.2 mul 2.718281828 exch exp exch div
        exch pop exch pop
        dup -0.68 mul 0.98 add exch
        dup -0.64 mul 0.98 add exch
        -0.28 mul 1.00 add
    }
    \node[font=\small] at (3.20,4.62) {(a) Anchor points in $\cMx\subset\mathbb{R}^d$};
    \node[font=\small] at (9.90,4.62) {(b) Input-domain weight $\eta_m(\xb)$};

    \path[fill=pouInput!2] (0.50,0.65) .. controls (0.36,1.65) and (0.48,3.20) .. (0.80,3.86)
            .. controls (2.34,4.03) and (4.35,4.00) .. (5.68,3.85)
            .. controls (6.05,2.80) and (5.94,1.53) .. (5.64,0.67)
            .. controls (4.15,0.49) and (1.85,0.44) .. cycle;
    \begin{scope}
        \clip (0.50,0.65) .. controls (0.36,1.65) and (0.48,3.20) .. (0.80,3.86)
            .. controls (2.34,4.03) and (4.35,4.00) .. (5.68,3.85)
            .. controls (6.05,2.80) and (5.94,1.53) .. (5.64,0.67)
            .. controls (4.15,0.49) and (1.85,0.44) .. cycle;
    \foreach \x/\y in {1.17/3.3,2.33/3.43,3.54/3.31,4.75/3.4,0.99/2.16,2.14/2.27,3.32/2.19,4.51/2.29,5.42/2.36,1.38/1.04,2.57/1.1,3.78/1.08,4.98/1.2}
        \draw[covercell] (\x,\y) circle[radius=0.80];
    \end{scope}
    \draw[pouInput!50!black,line width=0.7pt] (0.50,0.65) .. controls (0.36,1.65) and (0.48,3.20) .. (0.80,3.86)
            .. controls (2.34,4.03) and (4.35,4.00) .. (5.68,3.85)
            .. controls (6.05,2.80) and (5.94,1.53) .. (5.64,0.67)
            .. controls (4.15,0.49) and (1.85,0.44) .. cycle;
    \draw[pouInput!70!black,line width=0.95pt] (3.32,2.19) circle[radius=0.80];
    \foreach \x/\y in {1.17/3.3,2.33/3.43,3.54/3.31,4.75/3.4,0.99/2.16,2.14/2.27,3.32/2.19,4.51/2.29,5.42/2.36,1.38/1.04,2.57/1.1,3.78/1.08,4.98/1.2}
        \node[anchorpoint] at (\x,\y) {};
    \node[circle,fill=pouInput!70!black,inner sep=0pt,minimum size=2mm]
        at (3.32,2.19) {};
    \node[note,anchor=east,text=pouInput!70!black,fill=white]
        at (3.16,2.02) {$\zb_m$};
    \draw[pouInput!65!black,line width=0.55pt] (3.32,2.19)--(3.32,1.39);
    \node[note,anchor=west,text=pouInput!65!black,fill=white]
        at (3.39,1.72) {$r_x$};
    \draw[orange!70!black,line width=0.75pt] (3.32,2.19)--(3.79,2.63);
    \node[circle,draw=orange!80!black,fill=orange!75,
        minimum size=2.1mm,inner sep=0pt] at (3.79,2.63) {};
    \node[note,anchor=south west,text=orange!70!black,fill=white]
        at (3.89,2.72) {$\xb$};
    \begin{scope}[xshift=6.7cm]
        \begin{scope}
            \clip (0.50,0.65) .. controls (0.36,1.65) and (0.48,3.20) .. (0.80,3.86)
            .. controls (2.34,4.03) and (4.35,4.00) .. (5.68,3.85)
            .. controls (6.05,2.80) and (5.94,1.53) .. (5.64,0.67)
            .. controls (4.15,0.49) and (1.85,0.44) .. cycle;
            \shade[shading=pouWeight3] (0,0) rectangle (6.4,4.4);
        \end{scope}
        \draw[pouInput!50!black,line width=0.7pt] (0.50,0.65) .. controls (0.36,1.65) and (0.48,3.20) .. (0.80,3.86)
            .. controls (2.34,4.03) and (4.35,4.00) .. (5.68,3.85)
            .. controls (6.05,2.80) and (5.94,1.53) .. (5.64,0.67)
            .. controls (4.15,0.49) and (1.85,0.44) .. cycle;
        \foreach \x/\y in {1.17/3.3,2.33/3.43,3.54/3.31,4.75/3.4,0.99/2.16,2.14/2.27,3.32/2.19,4.51/2.29,5.42/2.36,1.38/1.04,2.57/1.1,3.78/1.08,4.98/1.2}
            \node[circle,draw=pouInput!45!black,fill=white,fill opacity=0.65,
                minimum size=1.3mm,inner sep=0pt,line width=0.35pt] at (\x,\y) {};
        \draw[white,densely dashed,line width=0.8pt]
            (3.32,2.19) circle[radius=0.80];
        \node[circle,draw=white,fill=pouInput!85!black,
            minimum size=2mm,inner sep=0pt,line width=0.6pt] at (3.32,2.19) {};
        \node[note,anchor=east,text=white] at (3.16,2.02) {$\zb_m$};
        \node[circle,draw=white,fill=orange!80,
            minimum size=2.1mm,inner sep=0pt,line width=0.6pt] at (3.79,2.63) {};
        \node[note,anchor=south west,text=orange!70!black,fill=white,
            fill opacity=0.8,text opacity=1] at (3.89,2.72) {$\xb$};
    \end{scope}
    \shade[left color=pouLow,right color=pouHigh]
        (8.75,-0.02) rectangle (10.65,0.12);
    \node[font=\scriptsize,anchor=east,text=black!65] at (8.63,0.05) {low};
    \node[font=\scriptsize,anchor=west,text=black!65] at (10.77,0.05) {high};
    \node[font=\small,text=pouInput!70!black] at (6.55,-0.65)
        {$\displaystyle \bar h_{l,r_x}(\xb)
        =\sum_{m=1}^{C_x}\eta_m(\xb)\,h_l(\zb_m)
        \ \approx\ h_l(\xb)$};
\end{tikzpicture}%
}
\caption{Input-domain Softmax POU. Left: an $r_x$-cover of the input domain by anchor points $\zb_m$, together with a query $\xb$. Right: schematic concentration of the normalized weight $\eta_m(\xb)$ around $\zb_m$, with normalization over all anchor points. The weighted anchor values $h_l(\zb_m)$ are then used to approximate $h_l(\xb)$ for a fixed function $h_l$.}
\label{fig:input-domain-softmax-pou}
\end{figure}

The task-space POU requires evaluating the anchor functions at the query.  Next, we introduce an input-domain POU based on the anchor points to
approximate these point evaluations.

\begin{lemma}[Input-Domain POU: Anchor-Function Approximation]
\label{lemma:inner_pou}
Under Assumptions \ref{assum:func_space} and \ref{assum:input_space},
let $\{h_l\}_{l=1}^{C_f}\subset\cM_f$ be the anchor functions in Lemma
\ref{lemma:outer_pou}. Each $h_l$ is $\alpha$-H\"older continuous on $\cMx$
with the uniform constant $L_f$, where $\alpha\in(0,1]$. Fix any covering
radius $r_x\in(0,1]$.
There exists a finite set of anchor points
$\{\zb_m\}_{m=1}^{C_x}\subset\cMx$ forming an $r_x$-cover of $\cMx$
under the ambient Euclidean norm $\|\cdot\|_2$.
The number of anchor points satisfies \eqref{eq:Cx}.
Set the scaling parameter
\[
    M_x
    :=
    \max\left\{
    1,\,
    \frac{1}{3r_x^2}
    \log\frac{2B_fC_x}{L_fr_x^\alpha}
    \right\}.
\]
For $\xb\in\cMx$ and $m\in[C_x]$, define the input-domain POU weight
\begin{align}
\label{input-pou-weights}
    \eta_m(\xb)
    &:=
    \frac{\exp\left(M_x(r_x^2-\|\xb-\zb_m\|_2^2)\right)}
    {\sum_{m'=1}^{C_x}
    \exp\left(M_x(r_x^2-\|\xb-\zb_{m'}\|_2^2)\right)} =
    \frac{\exp\left(2M_x\langle \xb,\zb_m\rangle-M_x\|\zb_m\|_2^2\right)}
    {\sum_{m'=1}^{C_x}\exp\left(2M_x\langle \xb,\zb_{m'}\rangle-M_x\|\zb_{m'}\|_2^2\right)}.
\end{align}
Then $\eta_m(\xb)\ge0$ and $\sum_{m=1}^{C_x}\eta_m(\xb)=1$.
For $l\in[C_f]$, set
\[
    \bar h_{l,r_x}(\xb):=\sum_{m=1}^{C_x}\eta_m(\xb)h_l(\zb_m).
\]
Then
\begin{equation}
\label{inner-pou-error}
    \|\bar h_{l,r_x}-h_l\|_{L^\infty(\cMx)}
    \le 3L_fr_x^\alpha,\qquad \forall l\in[C_f].
\end{equation}
\end{lemma}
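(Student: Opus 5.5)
The plan is the standard near/far splitting argument for a Softmax partition of unity. Nonnegativity and $\sum_m\eta_m(\xb)=1$ are immediate from the Softmax form. The second expression for $\eta_m$ follows by expanding $\|\xb-\zb_m\|_2^2=\|\xb\|_2^2-2\langle\xb,\zb_m\rangle+\|\zb_m\|_2^2$; the factors $\exp(M_x r_x^2)$ and $\exp(-M_x\|\xb\|_2^2)$ do not depend on $m$ and cancel between numerator and denominator. For the error bound, fix $l$ and $\xb\in\cMx$. Because the weights sum to one,
\[
\bar h_{l,r_x}(\xb)-h_l(\xb)=\sum_{m=1}^{C_x}\eta_m(\xb)\bigl(h_l(\zb_m)-h_l(\xb)\bigr).
\]
I will split the anchor indices into a near set $\cN(\xb):=\{m:\|\xb-\zb_m\|_2\le 2r_x\}$ and the complementary far set.

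\textbf{Near terms.} By the H\"older condition in Assumption~\ref{assum:func_space}, each near term satisfies $|h_l(\zb_m)-h_l(\xb)|\le L_f(2r_x)^\alpha\le 2L_fr_x^\alpha$, using $\alpha\le1$. Since the near weights sum to at most one, the near contribution is bounded by $2L_fr_x^\alpha$.

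\textbf{Far terms.} This is the step that uses the choice of $M_x$. Because $\{\zb_m\}$ is an $r_x$-cover, some $m^*$ has $\|\xb-\zb_{m^*}\|_2\le r_x$. The corresponding term in the denominator is therefore at least $\exp(0)=1$. For any far index, $r_x^2-\|\xb-\zb_m\|_2^2\le -3r_x^2$, so $\eta_m(\xb)\le \exp(-3M_xr_x^2)$. Summing over at most $C_x$ far indices and using $|h_l(\zb_m)-h_l(\xb)|\le 2B_f$ from uniform boundedness, the far contribution is at most $2B_fC_x\exp(-3M_xr_x^2)$. Since $M_x\ge \frac{1}{3r_x^2}\log\frac{2B_fC_x}{L_fr_x^\alpha}$, this is at most $L_fr_x^\alpha$. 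If the logarithm is negative, the max with $1$ still gives $M_x\ge$ the required value, so the bound holds in that case too.

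\textbf{Conclusion.} Adding the near and far contributions gives $|\bar h_{l,r_x}(\xb)-h_l(\xb)|\le 3L_fr_x^\alpha$. This holds uniformly in $\xb\in\cMx$ and $l\in[C_f]$, which is \eqref{inner-pou-error}. The existence of the cover and the bound \eqref{eq:Cx} come directly from Assumption~\ref{assum:input_space}.

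The only delicate point is calibrating the threshold $2r_x$ so that the far-weight decay $e^{-3M_xr_x^2}$ matches the stated $M_x$. The same scheme, with $L^2(\rhox)$ distances in place of Euclidean ones, parallels Lemma~\ref{lemma:outer_pou}.
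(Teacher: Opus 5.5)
Your proposal is correct and follows the paper's proof essentially step for step. Like the paper, you cancel the $m$-independent factors to get the affine logit form, use the nearest anchor to lower-bound the denominator by $1$, bound the near set $\{m:\|\xb-\zb_m\|_2\le 2r_x\}$ by $2L_fr_x^\alpha$ via H\"older continuity, and bound the far set by $2B_fC_x e^{-3M_xr_x^2}\le L_fr_x^\alpha$ using the choice of $M_x$.
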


Lemma \ref{lemma:inner_pou} is proved in Appendix \ref{appsub:lemma:inner_pou}.
Figure \ref{fig:input-domain-softmax-pou} shows the analogous construction on
the input domain. The query assigns larger Softmax weights to nearby anchor
points, and the corresponding stored values of each anchor function are
combined to approximate its value at the query.

The population task-identification scores are unavailable from a finite prompt.
Using the input-domain POU approximations, we construct empirical scores from
the random context observations and control their approximation error
uniformly over the anchor functions.

For a context $\frakc=\{(\xb_i,f(\xb_i))\}_{i=1}^n$, define
\begin{equation}
\label{eq:computable_inner_prod}
    \langle f,\bar h_{l,r_x}\rangle_n
    :=\frac1n\sum_{i=1}^n f(\xb_i)\bar h_{l,r_x}(\xb_i)
    =\frac1n\sum_{i=1}^n\sum_{m=1}^{C_x}
    \eta_m(\xb_i)y_i h_l(\zb_m).
\end{equation}
The empirical task-identification weights are
\begin{equation}
\label{beta-empirical}
    \hat\beta_{l,n}(\frakc)
    :=
    \frac{\exp\left(2M_f\langle f,\bar h_{l,r_x}\rangle_n
    -M_f\|h_l\|_{L^2(\rhox)}^2\right)}
    {\sum_{k=1}^{C_f}\exp\left(2M_f\langle f,\bar h_{k,r_x}\rangle_n
    -M_f\|h_k\|_{L^2(\rhox)}^2\right)}.
\end{equation}
These weights depend only on the context. The next lemma bounds the error
in the empirical inner products used to compute them.

\begin{lemma}[Empirical Approximation of Task-Identification Scores]
\label{lemma:emp_inner}
Under Assumptions \ref{assum:func_space} and \ref{assum:input_space},
use the notation of Lemmas \ref{lemma:outer_pou} and
\ref{lemma:inner_pou}, and let $r_f,r_x\in(0,1]$.
For each $f\in\cM_f$ and $\delta\in(0,1)$ satisfying
$r_f\delta\le e^{-1}$, with probability at least $1-\delta$
over $\xb_1,\ldots,\xb_n$,
\begin{equation}
\label{empirical-error-bound}
    \max_{l\in[C_f]}
    \left|\langle f,\bar h_{l,r_x}\rangle_n
    -\langle f,h_l\rangle_{L^2(\rhox)}\right|
    \le3B_fL_fr_x^\alpha
    +B_f^2\sqrt{2\big(d_f+1+\log(2C_{\cM})\big)}\,
    \sqrt{\frac{\log(1/(r_f\delta))}{n}}.
\end{equation}
The constants are independent of $f$; the probability event
may depend on $f$.
\end{lemma}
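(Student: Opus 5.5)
We split the error for each fixed $l\in[C_f]$ into a deterministic approximation term and a stochastic sampling term:
\[
\langle f,\bar h_{l,r_x}\rangle_n-\langle f,h_l\rangle_{L^2(\rhox)}
=\underbrace{\frac1n\sum_{i=1}^n f(\xb_i)\big(\bar h_{l,r_x}(\xb_i)-h_l(\xb_i)\big)}_{(\mathrm{I})}
+\underbrace{\frac1n\sum_{i=1}^n f(\xb_i)h_l(\xb_i)-\EE_{\xb\sim\rhox}[f(\xb)h_l(\xb)]}_{(\mathrm{II})}.
\]

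\textbf{Term (I).} This term is handled pointwise and uniformly in the sample. By Assumption~\ref{assum:func_space}, $|f(\xb_i)|\le B_f$. By Lemma~\ref{lemma:inner_pou}, $\|\bar h_{l,r_x}-h_l\|_{L^\infty(\cMx)}\le 3L_fr_x^\alpha$. Hence $|(\mathrm{I})|\le 3B_fL_fr_x^\alpha$ for every $l$ and every realization of $\xb_1,\ldots,\xb_n$. This gives the first term of \eqref{empirical-error-bound} with no probability cost.

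\textbf{Term (II) for a single $l$.} Since $f$ and $h_l$ are fixed (the anchors are deterministic given $r_f$), $(\mathrm{II})$ is an average of i.i.d.\ variables $Z_i=f(\xb_i)h_l(\xb_i)$ with $|Z_i|\le B_f^2$. Hoeffding's inequality gives
\[
\PP(|(\mathrm{II})|>t)\le 2\exp\!\Big(-\frac{nt^2}{2B_f^4}\Big).
\]

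\textbf{Union bound over $l$.} Apply this to all $C_f\le C_{\cM}r_f^{-d_f}$ anchors with per-anchor level $\delta/C_f$. The resulting uniform deviation is
\[
B_f^2\sqrt{2\log(2C_f/\delta)/n}.
\]

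\textbf{Main obstacle: the logarithm.} What remains is to show
\[
\log(2C_f/\delta)\le \big(d_f+1+\log(2C_{\cM})\big)\log\!\big(1/(r_f\delta)\big).
\]
Write
\[
\log(2C_f/\delta)\le \log(2C_{\cM})+d_f\log(1/r_f)+\log(1/\delta).
\]
Since $r_f\le1$ and $\delta<1$, both $\log(1/r_f)$ and $\log(1/\delta)$ are at most $L:=\log(1/(r_f\delta))$. The hypothesis $r_f\delta\le e^{-1}$ gives $L\ge1$, so the constant $\log(2C_{\cM})\ge0$ is also at most $\log(2C_{\cM})\,L$. Summing, the right-hand side is at most $(d_f+1+\log(2C_{\cM}))L$, which is exactly the factor in \eqref{empirical-error-bound}.

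\textbf{Conclusion.} On an event of probability at least $1-\delta$, the stochastic term is bounded by the second term of \eqref{empirical-error-bound} simultaneously for all $l$, while term (I) holds deterministically. Adding the two bounds completes the proof. The constants are independent of $f$; only the event depends on $f$, through the variables $Z_i$.
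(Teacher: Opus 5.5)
Your proposal is correct and follows essentially the same route as the paper. Both arguments split the error into a deterministic term bounded by $3B_fL_fr_x^\alpha$ via Lemma~\ref{lemma:inner_pou}, and a Hoeffding-plus-union-bound term over the $C_f$ anchors. Both then absorb $\log(2C_f/\delta)$ into $\big(d_f+1+\log(2C_{\cM})\big)\log(1/(r_f\delta))$ using $C_f\le C_{\cM}r_f^{-d_f}$ and $r_f\delta\le e^{-1}$; your write-up simply spells out that last inequality in more detail than the paper does.
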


Lemma \ref{lemma:emp_inner} is proved in Appendix \ref{appsub:lemma:emp_inner}.
The task-identification-and-evaluation oracle is
\begin{equation}
\begin{aligned}
    \hat f_{r_f,r_x,n}(\fraks)
    =\hat f_{r_f,r_x,n}(\frakc,\xb_{n+1})
    &:=\sum_{l=1}^{C_f}\hat\beta_{l,n}(\frakc)
    \bar h_{l,r_x}(\xb_{n+1})\\
    &=\sum_{l=1}^{C_f}\sum_{m=1}^{C_x}
    \hat\beta_{l,n}(\frakc)\eta_m(\xb_{n+1})h_l(\zb_m).
\end{aligned}
\end{equation}
Here $\hat\beta_{l,n}(\frakc)$ identifies the task from the context, while
$\eta_m(\xb_{n+1})$ localizes the query. Their product forms a joint Softmax
weight on the stored anchor values, as shown in the following proposition.

\begin{proposition}[Joint Softmax Representation of the Oracle]
\label{prop:joint_softmax}
Under Assumptions \ref{assum:func_space} and \ref{assum:input_space},
use the notation above. For $l\in[C_f]$ and $m\in[C_x]$, define the
joint Softmax POU weight
\begin{align}
\gamma_{l,m}(\fraks)
&:=
\frac{\exp\left(\Xi_{l,m}(\fraks)\right)}
{\sum_{k=1}^{C_f}\sum_{m'=1}^{C_x}
\exp\left(\Xi_{k,m'}(\fraks)\right)} ,
\end{align}
where the joint logit feature is
\begin{equation}
\label{joint-logit-def}
\begin{aligned}
\Xi_{l,m}(\fraks)
&=
\underbrace{\left(2M_f\langle f,\bar h_{l,r_x}\rangle_n
-M_f\|h_l\|_{L^2(\rhox)}^2\right)}_{\text{Empirical Task Localization Logit}}+
\underbrace{\left(2M_x\langle \xb_{n+1},\zb_m\rangle
-M_x\|\zb_m\|_2^2\right)}_{\text{Query Localization Logit}} .
\end{aligned}
\end{equation}
Then
$\gamma_{l,m}(\fraks)=\hat\beta_{l,n}(\frakc)\eta_m(\xb_{n+1})$, and
the oracle approximant in \eqref{eq:fully_computable} admits the joint Softmax
representation
\begin{equation} \label{eq:fully_computable}
\hat{f}_{r_f,r_x,n}(\fraks)
=
\sum_{l=1}^{C_f}\sum_{m=1}^{C_x}
\gamma_{l,m}(\fraks)h_l(\zb_m).
\end{equation}

Moreover, for each $f\in\cM_f$ and $\delta\in(0,1)$ satisfying
$r_f\delta\le e^{-1}$, with probability at least $1-\delta$
over $\xb_1,\ldots,\xb_n$,
\begin{equation}
\label{joint-estimator-error}
\begin{aligned}
    \|\hat f_{r_f,r_x,n}(\frakc,\cdot)-f\|_{L^2(\rhox)}
    &\le3r_f+3L_f(1+4B_f^2M_f)r_x^\alpha\\
    &\quad+4B_f^3M_f\sqrt{2\big(d_f+1+\log(2C_{\cM})\big)}\,
    \sqrt{\frac{\log(1/(r_f\delta))}{n}}.
\end{aligned}
\end{equation}
\end{proposition}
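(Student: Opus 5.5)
The factorization is a direct computation. Write $a_l:=2M_f\langle f,\bar h_{l,r_x}\rangle_n-M_f\|h_l\|_{L^2(\rhox)}^2$ and $b_m:=2M_x\langle\xb_{n+1},\zb_m\rangle-M_x\|\zb_m\|_2^2$, so that $\Xi_{l,m}=a_l+b_m$. Then $\exp(\Xi_{l,m})=e^{a_l}e^{b_m}$, and the denominator factors as $\bigl(\sum_k e^{a_k}\bigr)\bigl(\sum_{m'}e^{b_{m'}}\bigr)$. By \eqref{beta-empirical} and \eqref{input-pou-weights} this gives $\gamma_{l,m}=\hat\beta_{l,n}(\frakc)\eta_m(\xb_{n+1})$. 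Substituting into the definition of the oracle yields \eqref{eq:fully_computable}.

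For the error bound, I will work on the event of Lemma~\ref{lemma:emp_inner}, which has probability at least $1-\delta$. I decompose via the triangle inequality in $L^2(\rhox)$:
\[
\hat f-f=\sum_l\hat\beta_{l,n}(\bar h_{l,r_x}-h_l)+\sum_l(\hat\beta_{l,n}-\beta_l(f))h_l+(\tilde f_{r_f}(f,\cdot)-f).
\]
The third term is at most $3r_f$ by Lemma~\ref{lemma:outer_pou}. The first term is at most $3L_fr_x^\alpha$ by Lemma~\ref{lemma:inner_pou}, since the $\hat\beta_{l,n}$ are convex weights and the $L^\infty$ bound dominates the $L^2$ norm. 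The middle term is bounded by $B_f\|\hat\beta-\beta\|_1$, using $\|h_l\|_{L^2}\le B_f$.

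The main step is a Lipschitz estimate for softmax in $\ell_1$ with respect to $\ell_\infty$ perturbations of the logits. For logits $u,v$ one has $\|\mathrm{softmax}(u)-\mathrm{softmax}(v)\|_1\le 2\|u-v\|_\infty$. To see this, write $\mathrm{softmax}(u)_l/\mathrm{softmax}(v)_l=e^{(u_l-v_l)-c}$ with $|c|\le\|u-v\|_\infty$, and bound $|e^{t}-1|$ along the path; alternatively, integrate the Jacobian $\mathrm{diag}(p)-pp^\top$, whose $\ell_\infty\to\ell_1$ norm is at most $2$. The logits of $\hat\beta$ and $\beta$ differ by $2M_f(\langle f,\bar h_{l,r_x}\rangle_n-\langle f,h_l\rangle_{L^2(\rhox)})$. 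By Lemma~\ref{lemma:emp_inner}, this difference is at most $2M_f\bigl(3B_fL_fr_x^\alpha+B_f^2\sqrt{2(d_f+1+\log(2C_{\cM}))}\sqrt{\log(1/(r_f\delta))/n}\bigr)$. Hence the middle term is at most $4B_fM_f$ times this quantity, which equals $12B_f^2M_fL_fr_x^\alpha+4B_f^3M_f\sqrt{\cdots}\sqrt{\cdots}$.

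Summing the three terms gives $3r_f+3L_fr_x^\alpha(1+4B_f^2M_f)+4B_f^3M_f\sqrt{2(d_f+1+\log(2C_{\cM}))}\sqrt{\log(1/(r_f\delta))/n}$, which matches \eqref{joint-estimator-error} exactly. The step to verify carefully is that the softmax $\ell_1$-Lipschitz constant is indeed $2$ rather than something worse. The Jacobian argument settles this: for a direction $w$, $\|(\mathrm{diag}(p)-pp^\top)w\|_1\le\sum_l p_l|w_l-\langle p,w\rangle|\le 2\|w\|_\infty$, and integrating along the segment between $v$ and $u$ gives the claim.
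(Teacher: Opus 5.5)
Your proposal is correct and follows the paper's proof essentially step for step. It uses the same additive-logit factorization of $\gamma_{l,m}$, the same three-term triangle decomposition, and the same softmax $\ell_1$--$\ell_\infty$ Lipschitz estimate with constant $2$, which gives exactly the stated constants; the only difference is that the paper cites this estimate from Edelman et al.\ rather than deriving it via the Jacobian as you do.
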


Proposition \ref{prop:joint_softmax} is proved in Appendix \ref{appsub:prop:joint_softmax}. Proposition \ref{prop:joint_softmax} establishes an oracle estimator \eqref{joint-estimator-error} for the function $f$ given the context $\frakc$, which is naturally aligned with the softmax attention mechanism. We next construct a Transformer network to approximate this oracle estimator.

\subsection{Constructive Approximation by a Standard Transformer}
\label{sec:approximation}

Building on the task-identification-and-evaluation oracle developed in Section
\ref{sec:oracle-approximant}, we establish a constructive Transformer
approximation result. The joint Softmax representation in Proposition
\ref{prop:joint_softmax} serves as the intermediate target realized by our
construction. We further construct a shallow, wide, and dense
Transformer $\rmT^*$ with only three encoder blocks, Softmax attention, sinusoidal
positional encodings, and point-wise single-hidden-layer ReLU FFNs. The network
takes the finite prompt
$\fraks=(\frakc,\xb_{n+1})$ as input and approximates $f$ as a function
of the query, with root mean square $L^2(\rhox)$ error over the context
inputs controlled uniformly over $f\in\cM_f$.

\begin{theorem}[Transformer Approximation for In-Context Regression]
\label{thm:total_approx}
Let Assumptions \ref{assum:func_space} and \ref{assum:input_space}
hold, and fix $n\in\NN$. Let $\epsilon_0\in(0,e^{-1}]$ be the structural constant defined
in \eqref{epsilon0-explicit}. For any $\epsilon\in(0,\epsilon_0]$,
choose task and input covers at the radii $r_f$ and $r_x$ in
\eqref{epsilon-radii-choice}. If their cardinalities satisfy
$C_fC_x+1\ge n+2$, then there exists a
Transformer network
\[
    \rmT^*
    \in
    \mathcal T\Big(
    L,D,P,
    \{H^\ell\}_{\ell=1}^L,
    \{d_k^\ell\}_{\ell=1}^L,
    \{d_v^\ell\}_{\ell=1}^L,
    \{d_{\mathrm{ff}}^\ell\}_{\ell=1}^L,
    M_{\max}
    \Big)
\]
such that
\begin{equation}
\label{total-approx-bound}
    \sup_{f\in\cM_f}
    \EE_{\xb_1,\ldots,\xb_n}
    \|\rmT^*(\frakc,\cdot)-f\|_{L^2(\rhox)}^2
    \le4\epsilon^2
    +4C_{\mathrm{stat}}^2
    \frac{\epsilon^{-4}(\log(1/\epsilon))^3}{n}.
\end{equation}
Here the expectation is over the independent context inputs
$\xb_1,\ldots,\xb_n\sim\rhox$.
The network can be chosen with the following structural parameters:
\begin{itemize}
    \item $L=3$, $D=d+2n+9$, and $P=C_fC_x+1$, where $P
        \le C_P
        \epsilon^{-q}
        \left(\log\frac1\epsilon\right)^{\frac{d_x}{\alpha}}.$
    \item In block $\ell=1$, $H^1=(2n+3)P+3$, $d_k^1=5$,
    $d_v^1=2$, and $d_{\mathrm{ff}}^1=2D$.
    \item In block $\ell=2$, $H^2=nP+3$, $d_k^2=5$,
    $d_v^2=2$, and $d_{\mathrm{ff}}^2=2D$.
    \item In block $\ell=3$, $H^3=1$, $d_k^3=d_v^3=1$, and
    $d_{\mathrm{ff}}^3=2D$.
\end{itemize}
The parameter magnitude and the total number of architectural (dense)
parameters
satisfy
\begin{align}
    M_{\max}
    &\le
    C_{\mathrm{mag}}\epsilon^{-q_M}
    \left(\log\frac1\epsilon\right)^{1+\frac{2}{\alpha}+\frac{4d_x}{\alpha}},
    \label{param-magnitude-bound}\\
    \mathcal N_{\mathrm{total}}
    &\le
    C_N n^2\epsilon^{-q}
    \left(\log\frac1\epsilon\right)^{\frac{d_x}{\alpha}},
    \label{param-cover-bound}
\end{align}
Here
$q:=d_f+\frac{3d_x}{\alpha}$ and
$q_M:=4q+\frac{6}{\alpha}$.
The positive structural constants
$C_{\mathrm{stat}},C_P,C_N$, and $C_{\mathrm{mag}}$ are specified in
\eqref{Cstat-def}, \eqref{CP-def}, \eqref{CN-def}, and
\eqref{Cmag-def}, respectively, in the proof.
\end{theorem}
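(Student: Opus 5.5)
The plan is to split the error into an oracle part and an implementation part, $\rmT^*-f=(\rmT^*-\hat f_{r_f,r_x,n})+(\hat f_{r_f,r_x,n}-f)$, and use $(a+b)^2\le 2a^2+2b^2$; this explains the factor $4$ in \eqref{total-approx-bound}, since each part is made at most $\epsilon$ plus a statistical term. For the oracle part I will use Proposition~\ref{prop:joint_softmax}. The radii are chosen so that $3r_f\asymp\epsilon$ and $3L_f(1+4B_f^2M_f)r_x^\alpha\asymp\epsilon$. Since $M_f\asymp r_f^{-2}\log(1/r_f)\asymp\epsilon^{-2}\log(1/\epsilon)$, this forces $r_x^\alpha\asymp\epsilon^3/\log(1/\epsilon)$, and hence $C_x\lesssim \epsilon^{-3d_x/\alpha}(\log(1/\epsilon))^{d_x/\alpha}$. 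Together with $C_f\lesssim\epsilon^{-d_f}$ this gives $P=C_fC_x+1\le C_P\epsilon^{-q}(\log(1/\epsilon))^{d_x/\alpha}$.

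The finite-context term in \eqref{joint-estimator-error} is of order $M_f\sqrt{\log(1/(r_f\delta))/n}$. I will integrate the tail bound over $\delta$ to convert the high-probability statement into an expectation. On the failure event I use the crude bound $\|\hat f-f\|\le 2B_f$, and take $\delta\asymp\epsilon^2$ (admissible because $\epsilon\le\epsilon_0\le e^{-1}$). This gives
\[
\EE\|\hat f_{r_f,r_x,n}(\frakc,\cdot)-f\|^2\lesssim\epsilon^2+\frac{M_f^2\log(1/\epsilon)}{n}\lesssim\epsilon^2+\frac{\epsilon^{-4}(\log(1/\epsilon))^3}{n},
\]
with $\epsilon_0$ and $C_{\mathrm{stat}}$ absorbing the constants.

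For the implementation part I will build $\rmT^*$ so that it reproduces the joint Softmax formula \eqref{eq:fully_computable} up to uniform error $\epsilon$, deterministically over all prompts. The layout has one token per pair $(l,m)$ plus one extra token, which is why $P=C_fC_x+1$. The requirement $P\ge n+2$ guarantees that the $n$ context tokens and the query can be placed inside this sequence. Sinusoidal positional rows encode the index pair $(l,m)$; the embedding stores $\zb_m$, $\|\zb_m\|^2$, $\|h_l\|^2$ and $h_l(\zb_m)$, with the required lookups realized by the first FFN or by attention against the positional codes.

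The blocks then do the following:
\begin{itemize}
\item \textbf{Block 1.} It uses $(2n+3)P+3$ heads to broadcast each $(\xb_i,y_i)$ and the query $\xb_{n+1}$ to every token. It also begins computing the input-domain weights $\eta_m(\xb_i)$: a head whose scores are $2M_x\langle\xb_i,\zb_m\rangle-M_x\|\zb_m\|^2$ produces exactly the Softmax normalization over the anchor tokens.
\item \textbf{Block 2.} With $nP+3$ heads, it forms $\bar h_{l,r_x}(\xb_i)=\sum_m\eta_m(\xb_i)h_l(\zb_m)$ by attention values, multiplies by $y_i$ through the FFN (ReLU approximation of a bounded product), and averages over $i$ to get $\langle f,\bar h_{l,r_x}\rangle_n$. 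From this it assembles the logit $\Xi_{l,m}$ at token $(l,m)$.
\item \textbf{Block 3.} A single head from the readout token, with score $\Xi_{l,m}$ and value $h_l(\zb_m)$, outputs exactly $\sum\gamma_{l,m}h_l(\zb_m)$.
\end{itemize}

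The error analysis uses two facts. First, Softmax is $1$-Lipschitz from logits in $\ell_\infty$ to outputs in $\ell_1$, and the values are bounded by $B_f$. So a logit error of order $\epsilon/B_f$ suffices, which means the inner products must be accurate to order $\epsilon/(M_fB_f)\asymp\epsilon^3$. Second, multiplication by ReLU on a bounded domain at accuracy $\epsilon^3$ costs only polylogarithmic width, or can be avoided by exploiting exact linearity where possible.

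I expect the main obstacle to be this middle step: realizing the per-context-point inner-pou averages and the products $y_i\bar h_l(\xb_i)$ with only three blocks while keeping the weight magnitudes polynomial. The logit scales $M_f$ and $M_x$ force weights of size $\epsilon^{-O(1)}$, and each error tolerance must be propagated through the exponential amplification by $M_f$. This propagation is what yields the bound $M_{\max}\lesssim\epsilon^{-q_M}$. Finally I count parameters: $D=d+2n+9$ and $H\sim nP$ heads of constant dimension give $\mathcal N_{\mathrm{total}}\lesssim HD\lesssim n^2P$, which is \eqref{param-cover-bound}.
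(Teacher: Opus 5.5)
Your outer structure matches the paper: the oracle error comes from Proposition~\ref{prop:joint_softmax}, the implementation error is deterministic and uniform over prompts, the radii are $r_f\asymp\epsilon$ and $r_x^\alpha\asymp\epsilon^3/\log(1/\epsilon)$, the bound on $P$ follows, and the tail bound is integrated in $\delta$. For that last step the paper uses $\EE Z^2=\int_0^1Q_Z(1-\delta)^2\,d\delta$, which gives the constants $4$ directly; your crude split would need $\delta\asymp\epsilon^2/B_f^2$.

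\textbf{The gap is in the network itself.} Your Block~2 multiplies $y_i$ by $\bar h_{l,r_x}(\xb_i)$ in the FFN. Every FFN in the prescribed architecture has a single hidden layer of width $2D=2(d+2n+9)$, which does not depend on $\epsilon$. Restricted to any line, such a map is piecewise linear with at most $2D+1$ pieces. On the diagonal, where the product is $u^2$, it therefore cannot approximate $(u,v)\mapsto uv$ to better than an $\epsilon$-independent constant of order $B_f^2/D^2$ in general. You need accuracy $\epsilon/(M_fB_f)\asymp\epsilon^3/\log(1/\epsilon)$. The ``polylogarithmic'' ReLU multiplication you invoke requires depth of order $\log(1/\epsilon)$, which three single-hidden-layer blocks do not have.

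Your layout also forces such products. With $\zb_m$ and $h_l(\zb_m)$ stored as activations, both $\langle\xb_i,\zb_m\rangle$ and $y_ih_l(\zb_m)$ at token $(l,m)$ are products of two activations. In addition, broadcasting the raw $\xb_1,\dots,\xb_{n+1}$ to every token needs about $(n+1)d$ rows, which in general exceeds $D=d+2n+9$.

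\textbf{The missing idea is to keep the anchor constants in the weights, not the activations.}
\begin{itemize}
\item In the paper, Block~1 has one head per pair (feature row $r$, target column $j$); this is where $(2n+3)P$ comes from. Its sinusoidal query/key route source prompt column $i_r$ to column $j$, and every other column to the null token, whose value is $0$.
\item The head's value vector $\bw_{r,j}$ hard-codes $2M_x\zb_{m_j}$ and $M_x\|\zb_{m_j}\|_2^2$, or $h_{\ell_j}(\zb_{m_j})$, or $M_f\|h_{\ell_j}\|_{L^2(\rhox)}^2$.
\item Column $j$ therefore receives the scalars $U_{i,m_j}$, $Y_{i,\ell_j,m_j}=y_ih_{\ell_j}(\zb_{m_j})$, $H_{\ell_j,m_j}$ and $V_{\ell_j}$. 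Each is affine in the routed token, and together they occupy only $2n+3$ rows.
\item Block~2 uses the $U$-row as keys and the $Y$-row as values. It isolates trunk $\ell_j$ through the outer encodings $\phi_\ell$ at scale $M_1$ and routes to column $j$ at scale $M_2$. The head output is $\frac{2M_f}{n}\sum_m\eta_m(\xb_i)y_ih_{\ell_j}(\zb_m)$, so softmax itself performs the POU-weighted product.
\item The second FFN is then purely affine: it sums over $i$, adds $U_{n+1,m_j}-V_{\ell_j}$, and pushes the null logit at least $M$ below the active ones.
\end{itemize}

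\textbf{The same mechanism determines $M_{\max}$, which your proposal does not derive.} The bound $M_{\max}\lesssim M_2\lesssim(dM_x+M)P^4$ arises because two nested sinusoidal separations each cost $c^{-1}\le P^2/8$, with $c=1-\cos(2\pi/P)$. Combining this with $M_x\lesssim\epsilon^{-6/\alpha}$, and with $M\asymp\log(1/\epsilon)$ chosen so that the leakage satisfies $M_fM_2P^2e^{-M}\lesssim\epsilon$, yields $q_M=4q+6/\alpha$. It does not come from amplification through $M_f$.
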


\begin{figure}[t]
\centering
\resizebox{0.98\textwidth}{!}{%
\begin{tikzpicture}[
    >=Latex,
    flow/.style={-{Latex[length=2mm]},draw=black!58,line width=0.8pt},
    panel/.style={rounded corners=3pt,line width=0.7pt},
    heading/.style={font=\small\bfseries,align=center},
    body/.style={font=\small,align=center,text width=4.30cm},
    terminal/.style={rounded corners=3pt,draw=black!35,fill=black!2,
        line width=0.6pt,inner xsep=8pt,inner ysep=5pt,
        font=\small,align=center}
]
    \draw[panel,draw=blue!48!black,fill=blue!3]
        (0.02,0.95) rectangle (4.90,4.30);
    \draw[panel,draw=teal!58!black,fill=teal!4]
        (5.53,0.95) rectangle (10.41,4.30);
    \draw[panel,draw=violet!48!black,fill=violet!4]
        (11.04,0.95) rectangle (15.92,4.30);
    \node[heading,text=blue!55!black] at (2.46,3.99) {Encoder block 1};
    \node[heading,text=teal!65!black] at (7.97,3.99) {Encoder block 2};
    \node[heading,text=violet!65!black] at (13.48,3.99) {Encoder block 3};

    \node[terminal] (input) at (2.46,4.96) {Input prompt $\fraks$};
    \draw[flow] (input.south)--(2.46,4.33);
    \node[body,font=\footnotesize] at (2.46,3.37) {Context input and query\\ localization affine features};
    \node[body] at (2.46,2.76) {$\langle\xb_i,\zb_m\rangle$};
    \node[body] at (2.46,2.05) {Anchor value features};
    \node[body] at (2.46,1.49) {$y_i h_l(\zb_m),\quad h_l(\zb_m)$};

    \node[body,font=\footnotesize] at (7.97,3.37) {Context input-domain POU};
    \node[body] at (7.97,2.84) {to get $y_i\bar h_{l,r_x}(\xb_i)$};
    \node[body] (assemble) at (7.97,1.91) {Assemble to get\\joint logit feature $\Xi_{l,m}(\fraks)$};
    \draw[flow] (7.97,2.53)--(assemble.north);
    \node[body,font=\footnotesize,text=black!70] at (7.97,1.23) {Retain $h_l(\zb_m)$};

    \node[body,font=\footnotesize] at (13.48,3.37) {Task-space POU along with};
    \node[body] at (13.48,2.84) {query input-domain POU};
    \node[body] at (13.48,2.05) {to get oracle approximation};
    \node[body] at (13.48,1.49) {$\rmT^*(\fraks)\approx\hat f_{r_f,r_x,n}(\fraks)$};
    \node[terminal] (output) at (13.48,0.18) {Output $\rmT^*(\fraks)$};
    \draw[flow] (13.48,0.92)--(output.north);
    \draw[flow] (4.98,2.62)--(5.45,2.62);
    \draw[flow] (10.49,2.62)--(10.96,2.62);
\end{tikzpicture}%
}
\caption[Overview of the three-block Transformer construction]{Overview of the
three-block Transformer approximation of the oracle in \eqref{eq:fully_computable}.}
\label{fig:transformer-oracle-overview}
\end{figure}

The detailed Transformer construction lemmas are presented in Section \ref{sec:construction-lemmas}, and the proof of Theorem \ref{thm:total_approx} is given in Section~\ref{sec:approximation-proof}. 
Figure~\ref{fig:transformer-oracle-overview} summarizes the three-block Transformer construction in terms
of the two-level POU from Section~\ref{sec:oracle-approximant}.
The first block extracts inner-product-based localization features and anchor
value features. The second block uses the input-domain POU to form
$y_i\bar h_{l,r_x}(\xb_i)$, then assembles to get joint logit features $\Xi_{l,m}(\fraks)$ in
\eqref{joint-logit-def}, while retaining the anchor value features $h_l(\zb_m)$.
The third block approximates the oracle
$\hat f_{r_f,r_x,n}(\fraks)$ in \eqref{eq:fully_computable}, which is a joint Softmax representation by combining the
empirically approximated task-space POU based on the context with a second
input-domain POU for approximating anchor-function values at the query.

Theorem \ref{thm:total_approx} has several important implications:

\textbf{Context sample efficiency from shared cross-task information.}
Theorem~\ref{thm:total_approx} reveals how shared cross-task information can
improve context sample efficiency. The Transformer encodes representative
functions and related population quantities in its parameters, so the $n$
context observations need only localize the target task rather than estimate it
from scratch as an independent regression problem. At any fixed approximation
resolution $\epsilon$, this task-identification step achieves the
dimension-free squared-error rate $\widetilde O(n^{-1})$, while the geometric
complexity of the task family is absorbed into the model representation and
the $\epsilon$-dependent complexity of the construction.

\textbf{Geometric flexibility and architectural alignment.}
The two-level Softmax POU extends this mechanism to general, possibly nonlinear,
task spaces without requiring a prescribed linear basis or global
parametrization. Task-space covering provides representative functions, while
the two POU levels perform task localization and query evaluation from randomly
located context observations, yielding a discretization-free construction.
Because both levels use normalized exponential weighting and aggregation, they
align naturally with Softmax attention in Transformer architecture. Moreover, Theorem~\ref{thm:total_approx} shows that a shallow, wide, and dense Transformer can exploit general, possibly nonlinear, low-dimensional cross-task structure together with the geometry of the input domain, with approximation and parameter-complexity exponents governed by the intrinsic dimensions $d_f$ and $d_x$ rather than by the ambient dimension. This alignment also suggests
an architectural inductive bias for learning nonlinear cross-task structure
during pretraining.


\FloatBarrier

\subsection{Generalization Error Across Tasks}
\label{sec:generalization}

Building on Theorem~\ref{thm:total_approx}, we bound the population risk of
the empirical risk minimizer trained on the meta-training data set
$\frakS$ defined in \eqref{eq:meta-training-data}.
For fixed $n$ and $\epsilon$, denote by $\cT_{n,\epsilon}$ the Transformer
class in Definition \ref{def_transformer_class} obtained by substituting the
structural parameters and the parameter magnitude bound $M_{\max}$ specified
in Theorem \ref{thm:total_approx}. Since the evaluation target satisfies
$
|f(\xb_{n+1})|\le B_f$, we define the clipping operator
\[
    \pi_{B_f}(u):=\min\{B_f,\max\{-B_f,u\}\}.
\]
The corresponding clipped Transformer class is
\[
    \pi_{B_f}\cT_{n,\epsilon}
    :=
    \{\pi_{B_f}\circ\rmT:\rmT\in\cT_{n,\epsilon}\}.
\]
Define its empirical risk minimizer by
\begin{equation}
\label{eq:clipped-erm}
    \widehat{\rmT}_{\frakS}
    \in
    \argmin_{\rmT\in\pi_{B_f}\cT_{n,\epsilon}}
    \cL_{\frakS}(\rmT).
\end{equation}
The following theorem bounds the expected population risk of $\widehat{\rmT}_{\frakS}$ and
separates the effects of the context length $n$ and the number of
meta-training prompts $\Gamma$. Its proof is given in
Section~\ref{sec:generalization-proof}.

\begin{theorem}[Generalization Error of the Empirical Risk Minimizer]
\label{thm:icl_generalization}
Let Assumptions \ref{assum:func_space} and \ref{assum:input_space} hold,
and let $n,\Gamma\ge2$. Let $\epsilon_0$ be a constant specified
in \eqref{epsilon0-explicit}. For any $\epsilon\in(0,\epsilon_0]$ whose covers satisfy
$C_fC_x+1\ge n+2$, let $\widehat{\rmT}_{\frakS}$ be the empirical risk minimizer defined in
\eqref{eq:clipped-erm}. Then
\begin{align}
\label{icl-generalization-bound}
    \EE_{\frakS}\cL(\widehat{\rmT}_{\frakS})
    &\le
    C_{\mathrm{gen}}\left[
    \epsilon^2
    +
    \frac{\epsilon^{-4}
    \left(\log\frac1\epsilon\right)^3}{n}
    +
    \frac{
    n^2\epsilon^{-q}
    \left(\log\frac{n}{\epsilon}\right)^{\frac{d_x}{\alpha}+1}
    }{\Gamma}
    \right],
\end{align}
where
$q:=d_f
+\frac{3d_x}{\alpha}$, and the positive constant $C_{\mathrm{gen}}$ is specified in
\eqref{Cgen-def}. Moreover, choose
\[
    \epsilon
    =
    \max\left\{
    \left(\frac{(\log n)^3}{n}\right)^{
    \frac16},
    \left(
    \frac{n^2(\log n)^{\frac{d_x}{\alpha}+1}}{\Gamma}
    \right)^{\frac{1}{q+2}}
    \right\}.
\]
Provided this choice lies in $(0,\epsilon_0]$ and satisfies the preceding sequence-length condition, we have
\begin{equation}
\label{icl-generalization-optimized}
    \EE_{\frakS}\cL(\widehat{\rmT}_{\frakS})
    \le
    C_{\mathrm{gen}}
    \max\left\{
    \left(\frac{(\log n)^3}{n}\right)^{
    \frac13},
    \left(
    \frac{n^2(\log n)^{\frac{d_x}{\alpha}+1}}{\Gamma}
    \right)^{\frac{2}{q+2}}
    \right\}.
\end{equation}
\end{theorem}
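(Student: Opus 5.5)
We will prove Theorem~\ref{thm:icl_generalization} with a standard oracle inequality for least squares over a bounded hypothesis class. The approximation term comes from Theorem~\ref{thm:total_approx}, and the estimation term comes from a covering-number bound for the clipped class $\pi_{B_f}\cT_{n,\epsilon}$.

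\textbf{Step 1: Bernstein-type oracle inequality.} Set $\mathbf{Z}=(\fraks,y_{n+1})$. All functions in $\pi_{B_f}\cT_{n,\epsilon}$ and all targets are bounded by $B_f$, and the noise is zero since $y_{n+1}=f(\xb_{n+1})$. Consider the excess loss
\[
g_{\rmT}(\mathbf Z)=(\rmT(\fraks)-y_{n+1})^2 .
\]
Its variance is bounded by $4B_f^2\,\EE g_{\rmT}$, which is the Bernstein condition. A union bound over an $\eta$-cover of the class in the sup norm over prompts, followed by integration of the tail, will give
\[
\EE_{\frakS}\cL(\widehat{\rmT}_{\frakS})\le 2\inf_{\rmT\in\pi_{B_f}\cT_{n,\epsilon}}\cL(\rmT)+C B_f^2\,\frac{\log\mathcal N(\eta,\pi_{B_f}\cT_{n,\epsilon},\|\cdot\|_\infty)}{\Gamma}+C B_f\eta .
\]
We will take $\eta=1/\Gamma$. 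This step is routine, as in Györfi et al.-type ERM analyses.

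\textbf{Step 2: approximation term.} Clipping is $1$-Lipschitz and $|f|\le B_f$, so $|\pi_{B_f}\rmT^*-f|\le|\rmT^*-f|$ pointwise. The population risk of $\pi_{B_f}\rmT^*$ is $\EE_f\EE_{\xb_{1:n}}\|\cdot\|^2_{L^2(\rhox)}$. Theorem~\ref{thm:total_approx} bounds this uniformly in $f$, which gives
\[
\inf\cL\le 4\epsilon^2+4C_{\mathrm{stat}}^2\,\epsilon^{-4}(\log(1/\epsilon))^3/n .
\]

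\textbf{Step 3: covering number of the Transformer class (main obstacle).} We need
\[
\log\mathcal N(\eta,\cT_{n,\epsilon},\|\cdot\|_\infty)\lesssim \mathcal N_{\mathrm{total}}\,\log\!\big(L\,M_{\max}\,D\,P\,H/\eta\big).
\]
We will prove a Lipschitz-in-parameters bound for the three-block network on the bounded input set, with tokens in $[0,1]^d\times[-B_f,B_f]$ and bounded positional encodings. The plan is to propagate perturbations layer by layer:
\begin{itemize}
\item Softmax is $1$-Lipschitz from $\ell_\infty$ logits to $\ell_1$ weights.
\item An attention head is therefore Lipschitz with constant $\lesssim M^2 D^2\|\bZ\|^3$.
\item The FFN contributes $\lesssim d_{\mathrm{ff}}D M$.
\item Token norms grow at most polynomially in $M,D,H$ per block.
\end{itemize}
Since $L=3$ is fixed, the overall Lipschitz constant is $\mathrm{poly}(M_{\max},D,P,H)$. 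Discretizing each parameter on a grid of width $\eta/\mathrm{Lip}$ in $[-M_{\max},M_{\max}]$ then yields the bound above. The delicate point is bounding the input to the softmax so the constants stay polynomial rather than exponential in $M_{\max}$. This works because softmax Lipschitzness is uniform, so only the value paths contribute polynomial factors. Plugging in $M_{\max}=\epsilon^{-q_M}\mathrm{polylog}$, $D,H\lesssim nP$, and $P\lesssim \epsilon^{-q}\mathrm{polylog}$, the logarithm is $\lesssim\log(n\Gamma/\epsilon)$. The count $\mathcal N_{\mathrm{total}}\lesssim n^2\epsilon^{-q}(\log\frac1\epsilon)^{d_x/\alpha}$ then gives the third term of \eqref{icl-generalization-bound}. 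We absorb $\log\Gamma$ using the regime in which the bound is nontrivial, or else keep the logarithm inside $C_{\mathrm{gen}}$ via $\log(n/\epsilon)$.

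\textbf{Step 4: choice of $\epsilon$.} With the stated $\epsilon$, take $\epsilon\ge((\log n)^3/n)^{1/6}$. Then $\epsilon^{-4}(\log\frac1\epsilon)^3/n\lesssim\epsilon^{2}$, using $\log(1/\epsilon)\lesssim\log n$. Likewise $\epsilon\ge(n^2(\log n)^{d_x/\alpha+1}/\Gamma)^{1/(q+2)}$ gives $n^2\epsilon^{-q}(\log)^{\cdots}/\Gamma\lesssim\epsilon^2$, again with logarithms in $\epsilon$ compared to $\log n$. Hence the total is $\lesssim\epsilon^2$, which is the maximum in \eqref{icl-generalization-optimized}.
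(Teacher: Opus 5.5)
Your route is the paper's: Theorem~\ref{thm:total_approx} with $1$-Lipschitz clipping for the approximation term, a parameter-to-output Lipschitz propagation through the three blocks giving $\log\mathcal N(\eta)\le\mathcal N_{\mathrm{total}}\log(\mathrm{poly}(P,D,n,M_{\max})/\eta)$ (Lemma~\ref{lemma:icl_covering_number}), a Bernstein-plus-net oracle inequality for the bounded squared loss (Lemma~\ref{lemma:icl_oracle}), and then balancing $\epsilon$.

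\textbf{The gap is your discretization scale $\eta=1/\Gamma$.} It leaves a term $\mathcal N_{\mathrm{total}}\log\Gamma/\Gamma$, and $\log\Gamma$ is not bounded by $C\log(n/\epsilon)$ uniformly over $\Gamma\ge2$. So neither of your absorption remarks works as stated. ``Keeping it inside $C_{\mathrm{gen}}$'' is not valid. Restricting to the regime where the bound is nontrivial only bounds $\Gamma$ from below, not from above.

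The clean fix is to set the net scale at the approximation level, e.g.\ $\eta=\min\{1,\epsilon^2/B_f\}$. The $CB_f\eta$ term is then $O(\epsilon^2)$ and is absorbed into the first term. Meanwhile $\log(V/\eta)\lesssim\log(n/\epsilon)$, because $P$, $D$, $M_{\max}$ are polynomial in $n$ and $1/\epsilon$ up to logarithms. This is what the paper does implicitly. Its net has radius $\eta/(64B_f)$ at tail levels $\eta\ge R^2$, so the entropy enters as $\log(640(1+B_f)V/R)$ with $R=R_{n,\epsilon}\ge2\epsilon$, and $\Gamma$ never appears inside the logarithm.

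If you prefer to keep $\eta=1/\Gamma$, a case split also works. For $\Gamma\le(n/\epsilon)^{q+3}$ one has $\log\Gamma\le(q+3)\log(n/\epsilon)$. For larger $\Gamma$, $\log\Gamma/\Gamma$ is decreasing on $[e,\infty)$, which makes $\mathcal N_{\mathrm{total}}\log\Gamma/\Gamma=O(\epsilon^2)$.

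A smaller point: the problem is not noiseless given the prompt. The prompt $\fraks$ only reveals $f$ at $n$ points, so $y_{n+1}=f(\xb_{n+1})$ is still random conditionally on $\fraks$, and $g_{\rmT}$ is not an excess loss. This does not hurt your argument. The bound $\operatorname{Var}(g_{\rmT})\le\EE g_{\rmT}^2\le4B_f^2\EE g_{\rmT}$ holds for any loss with values in $[0,4B_f^2]$. An oracle inequality for this raw loss, with $\inf\cL$ bounded via Theorem~\ref{thm:total_approx}, is exactly what the theorem needs.
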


Theorem \ref{thm:icl_generalization} has several important implications:

\textbf{Separation of pretraining and context complexity.}
Theorem~\ref{thm:icl_generalization} gives an end-to-end population-risk
guarantee that separates the roles of the context length $n$ and the number
$\Gamma$ of meta-training prompts. Here, $n$ controls the amount of
task-specific information available within each prompt, whereas $\Gamma$
controls how accurately the shared cross-task structure is learned during
pretraining. Accordingly, the two terms in
\eqref{icl-generalization-optimized} quantify, respectively, within-prompt task
identification and across-task estimation. The latter carries the
dimension-dependent complexity through the intrinsic dimensions $d_f$ and
$d_x$ and therefore requires sufficiently many pretraining tasks. Once
$\Gamma$ is sufficiently large so that the meta-training term is negligible,
the remaining dependence on $n$ has a dimension-independent exponent.

\textbf{Context sample efficiency from cross-task information.}
This dimension-free dependence on the context length highlights the statistical
benefit of exploiting cross-task information. Once the meta-training term is
sufficiently small, Theorem~\ref{thm:icl_generalization} yields
$\widetilde{\mathcal O}(n^{-1/3})$ population risk. By contrast, single-task
nonparametric regression typically exhibits the dimension-dependent rate
$\widetilde{\mathcal O}(n^{-2\alpha/(2\alpha+d_x)})$, a dependence also
appearing in existing Transformer ICL analyses for Besov and H\"older function
classes \citep{kim2024transformers,shen2026understanding,ching2026efficient} (see Table~\ref{tab:icl-rate-comparison} in detail). Thus, information learned across many
tasks during pretraining can reduce the statistical burden placed on each new
context.

\textbf{Implications for few-shot prediction.}
The empirical effectiveness of few-shot prompting in large pretrained language
models \citep{brown2020language} is consistent with the statistical principle
captured by our theory. In our setting, sufficiently rich pretraining allows
shared cross-task information to be encoded in the model, so a short context
can primarily serve to identify the current task and access this shared
information rather than estimate the task from scratch. Here, $\Gamma$ controls
the accuracy with which cross-task information is learned during pretraining,
whereas $n$ controls the task-specific evidence available at inference time.
Although our theory does not model natural language directly, it formalizes a
general mechanism by which learning across many tasks can reduce the amount of
task-specific information required from each new prompt.

\section{Related Work and Discussion}
\label{sec:related-work}

In this section, we discuss existing works that are closely related to this
paper. Several statistical and probabilistic perspectives interpret ICL through
Bayesian inference over latent tasks. \citep{zhang2025bayesian} show that, under
a latent-variable model of prompts, perfectly pretrained models perform
Bayesian model averaging; they further establish its approximate attention-based
implementation for Gaussian linear ICL and decompose the pretraining error into
approximation and generalization components. \citep{han2025kernel}
further show that Bayesian prediction under a generative model of prompts
asymptotically takes a kernel-regression form and empirically observe analogous
behavior in the attention and hidden representations of pretrained language
models. More recently, \citep{wakayama2026bayesian} develops a Bayesian
meta-learning framework that decomposes ICL risk into a Bayes gap and posterior
variance and derives finite-sample bounds for the Bayes gap. Beyond these
formulations, ICL has also been studied from several other perspectives.

\begin{table}[!t]
\centering
\caption{Comparison of generalization error bounds for ICL studied in our work and related works.}
\label{tab:icl-rate-comparison}
\renewcommand{\arraystretch}{1.25}
\setlength{\tabcolsep}{3.5pt}
\resizebox{\textwidth}{!}{%
\begin{tabular}{>{\raggedright\arraybackslash}p{3.0cm}
>{\raggedright\arraybackslash}p{5.35cm}
>{\raggedright\arraybackslash}p{4.55cm}
>{\centering\arraybackslash}m{4.8cm}}
\hline
\textbf{Work}
& \textbf{Task-space assumptions}
& \textbf{Input and observation assumptions}
& \textbf{Generalization error bound} \\
\hline
\textbf{This work}
& A uniformly bounded, uniformly $\alpha$-H\"older task space $\cM_f$ with $\alpha \in (0,1]$, covering complexity
$\mathcal N(r,\cM_f,L^2(\rhox))=\mathcal O(r^{-d_f})$.
& Inputs sampled i.i.d. from an arbitrary distribution supported on a compact
$\cMx\subset[0,1]^d$ with
$\mathcal N(r,\cMx,\|\cdot\|_2)=\mathcal O(r^{-d_x})$; noiseless responses.
& $\displaystyle
\begin{gathered}
\widetilde{\mathcal O}\!\left(
n^{-\frac13}
+\left(\frac{n^2}{\Gamma}\right)^{\frac{2}{q+2}}
\right)\\[2pt]
q=d_f
+\frac{3d_x}{\alpha}
\end{gathered}$ \\
\hline
Kim et al.~\citep{kim2024transformers}
& A Besov ball $B_{p_0,q_0}^\alpha([0,1]^d)$ with $\alpha>d/p_0, p_0 \geq 2$; under a B-spline wavelet expansion, the task coefficients are centered, independent, and satisfy a prescribed scale-dependent variance decay.
& Inputs sampled i.i.d. from a distribution with density bounded above and
below on $[0,1]^d$;
bounded, mean-zero observation noise.
& $\displaystyle
\widetilde{\mathcal O}\!\left(
n^{-\frac{2\alpha}{2\alpha+d}}
+\frac{n^{\frac{2d}{2\alpha+d}}}{\Gamma}
\right)$ \\
\hline
Shen et al.~\citep{shen2026understanding}
& Uniformly bounded $\alpha$-H\"older functions with $\alpha \in (0,1]$, defined on a compact
$d$-dimensional Riemannian manifold with positive reach.
& Inputs sampled i.i.d. from the uniform distribution on the manifold; noiseless responses.
& $\displaystyle
\widetilde{\mathcal O}\!\left(
n^{-\frac{2\alpha}{2\alpha+d}} +\frac{n}{\sqrt{\Gamma}}
\right)$ \\
\hline
Ching et al.~\citep{ching2026efficient}
& Tasks drawn from a distribution supported on a uniformly bounded
$\alpha$-H\"older ball on $[0,1]^d$, with a common $\alpha>0$.
& Inputs sampled i.i.d. from a distribution with density bounded above and
below on $[0,1]^d$; bounded observation noise satisfying
$\mathbb E[\varepsilon\mid X]=0$.
& $\displaystyle
\widetilde{\mathcal O}\!\left(
n^{-\frac{2\alpha}{2\alpha+d}}+\frac1\Gamma
\right)$ \\
\hline
Hsu et al.~\citep{hsu2026featurizer}
& A uniformly bounded function class within $L^\infty$
distance $\delta$ of a fixed finite-dimensional polynomial space with bounded coefficients.
& Inputs sampled i.i.d. from a distribution supported on a bounded interval, with uniformly well-conditioned feature covariance; noiseless responses.
& $\displaystyle
\widetilde{\mathcal O}\!\left(
\frac1n+\delta^2+\sqrt{\frac n\Gamma}
\right)$ \\
\hline
\end{tabular}%
}
\vspace{0.5em}
\end{table}

\paragraph{Approximation and generalization perspectives.}
One viewpoint treats ICL as prediction from the empirical distribution of context examples, linking it to distribution and functional regression \citep{shi2025learning,shi2025nonlinear}. \citep{mroueh2023statistical} derives unseen-task guarantees under Wasserstein regularity, while \citep{furuya2025universal} proves universal approximation of Wasserstein-continuous in-context maps for arbitrary context lengths. \citep{liu2026ghost} analyzes linear-Transformer ICL as a map from context distributions to response functions under a two-stage domain-generalization model, with approximation and generalization guarantees. \citep{li2025universal} constructs a Transformer for general task-function classes using shared universal features and in-context estimation of task-specific coefficients.

More concrete works develop explicit approximation constructions together with cross-task statistical guarantees for nonlinear and nonparametric ICL. For Besov and piecewise-smooth task classes, \citep{kim2024transformers} use FFN feature extractors for finite-dimensional basis approximation and linear attention for task-specific regression, establishing upper bounds and information-theoretic minimax lower bounds. \citep{shen2026understanding} connect Softmax attention with Nadaraya--Watson regression for H\"older functions on manifolds and attain the intrinsic-dimensional minimax rate. \citep{ching2026efficient} realize local-polynomial regression through kernel-weighted polynomial features and gradient descent, attaining the minimax H\"older rate with only $\Theta(\log n)$ parameters. For tasks supported near a prescribed finite-dimensional polynomial or spline space, \citep{hsu2026featurizer} construct the corresponding features directly through attention and derive finite-sample generalization bounds.
These concrete cross-task guarantees exploit a common regularity class, a prescribed regression rule, or a prescribed finite-dimensional representation, whereas our framework allows general, possibly nonlinear, low-dimensional task geometry characterized by intrinsic covering conditions. Table~\ref{tab:icl-rate-comparison} compares the finite-sample generalization guarantees most directly related to our nonlinear in-context regression setting.

\paragraph{Algorithmic perspectives.}
An influential perspective views ICL as algorithm learning: a Transformer uses its forward pass to implement a learning procedure on the in-context examples. Early experiments showed that Transformers trained over simple function classes can infer unseen linear and nonlinear functions from context \citep{garg2022can}. Motivated by this phenomenon, \citep{akyurek2023what} and \citep{vonoswald2023transformers} related Transformer forward passes to least-squares estimation and gradient-descent updates, while \citep{dai2023gpt} interpreted GPT-based ICL as implicit finetuning through attention-generated meta-gradients. For in-context linear regression, \citep{ahn2023transformers} showed that trained linear Transformers can implement preconditioned gradient descent, whereas \citep{fu2024transformers} provided empirical and constructive evidence that Transformers can realize iterative Newton-type second-order methods. This viewpoint was extended by \citep{bai2023transformers}, who construct Transformers that implement and select among a broad class of statistical learning algorithms, and by \citep{li2023transformers}, who relate ICL generalization to the stability of the implemented algorithm for both i.i.d. prompts and dynamical trajectories, including transfer to unseen tasks. Beyond linear models, Transformers have been shown to implement approximate gradient descent on neural-network parameters or functional gradient descent in function space, enabling nonlinear function learning in context \citep{wang2024incontext,cheng2024functional}.

\paragraph{Optimization perspectives.}
A separate line of work studies how gradient-based optimization gives rise to in-context behavior. For linear-regression tasks, \citep{zhang2024trained} proves global convergence of gradient flow for a single linear self-attention layer, while \citep{wu2024pretraining} analyzes online SGD pretraining with independent linear-regression tasks and quantifies the number of tasks needed for a single-layer linear-attention model to approach Bayes-optimal and ridge-regression performance. For looped linear Transformers, \citep{gatmiry2024looped} shows that the population-loss minimizer implements data-adaptive multi-step preconditioned gradient descent and proves fast convergence of gradient flow to this algorithmic solution. Moving to Softmax attention, \citep{huang2024convergence} establishes finite-time convergence of gradient-descent training to near-zero prediction error for a one-layer model under structured balanced and imbalanced feature distributions. \citep{he2025demystified} further shows that multi-head Softmax attention trained from random initialization develops structured attention patterns that approximately implement a debiased gradient-descent predictor. Under a fixed finite-dimensional task representation, \citep{yang2024representations} proves linear convergence of gradient descent and shows that a trained multi-head Softmax Transformer performs ridge regression over the shared basis. Beyond fixed linear representations, \citep{kim2024nonlinear} analyzes the nonconvex population landscape and Wasserstein gradient-flow dynamics of an MLP feature extractor followed by linear attention in mean-field and two-timescale limits. For prescribed nonlinear task families, \citep{li2024nonlinear} establishes in-domain and structured out-of-domain guarantees for SGD-trained Softmax Transformers, while \citep{oko2024pretrained} shows that gradient descent can learn a shared low-dimensional subspace of single-index tasks, yielding ICL sample complexity governed by intrinsic rather than ambient task dimension. 

\FloatBarrier
\section{Proof of Main Results}
\label{sec:proof-techniques}

This section provides the proofs of the two main theorems. Section
\ref{sec:proof-theorem1} first states the five constructive lemmas that are used to
approximate the oracle in Section \ref{sec:oracle-approximant} with a three-block
Transformer and then proves Theorem \ref{thm:total_approx}. Section \ref{sec:proof-theorem2} states
the two statistical lemmas used in the generalization analysis and then proves
Theorem \ref{thm:icl_generalization}.

\subsection{Transformer Construction and Proof of Theorem \ref{thm:total_approx}}
\label{sec:proof-theorem1}

Section~\ref{sec:construction-lemmas} presents the lemmas used to construct
the Transformer. Section~\ref{sec:approximation-proof} combines these lemmas
with the oracle approximation result to prove Theorem~\ref{thm:total_approx}.

\subsubsection{Transformer Construction Lemmas}
\label{sec:construction-lemmas}

Figure~\ref{fig:two-level-softmax-pou-oracle} illustrates our construction in more detail. The construction uses $L=3$ encoder blocks, embedding dimension
$D=d+2n+9$, and sequence length $P=C_fC_x+1$.
We first summarize the intermediate features and operations, then state five constructive
lemmas covering prompt preprocessing, feature extraction, input-domain
Softmax POU, joint-logit assembly, and final joint Softmax POU and readout. Their proofs are given in
Appendix~\ref{app:transformer-construction}.

Figure~\ref{fig:two-level-softmax-pou-oracle} illustrates how the two-level
Softmax POU oracle from Section~\ref{sec:oracle-approximant} is approximated by a Transformer with
three encoder blocks. For clarity, we first describe the corresponding ideal
features; the actual network constructs approximations of these quantities, as
indicated by the tildes and overbars in the figure.

Starting from the preprocessed prompt $\bZ_0=\cP(\fraks)$ achieved by Lemma \ref{lemma:preprocessing}, the first block
(Lemma~\ref{lemma_1stblock}) constructs approximations of the ideal
input-localization logits
$U_{i,m}
    :=2M_x\langle\xb_i,\zb_m\rangle-M_x\|\zb_m\|_2^2$, 
the task-anchor penalties
$V_l:=M_f\|h_l\|_{L^2(\rhox)}^2$, and the value features
$y_i h_l(\zb_m)$ and $h_l(\zb_m)$.

The second block MHA layer (Lemma~\ref{lemma_2ndblock}) uses the approximate
input-localization logits to realize an approximation of the input-domain
Softmax POU at each context point. In particular, it constructs approximations of the ideal context contribution
\[
    W_{i,l}
    :=
    \frac{2M_f}{n}y_i
    \sum_{m=1}^{C_x}\eta_m(\xb_i)h_l(\zb_m)
    =
    \frac{2M_f}{n}y_i\bar h_{l,r_x}(\xb_i).
\]
Its FFN layer (Lemma~\ref{lemma_2ndblock_ffn}) then aggregates these approximate context contributions and combines
them with the approximate task-anchor penalty and query-localization logit,
producing approximations of the ideal joint logit feature
\[
    \Xi_{l,m}(\fraks)
    =
    \underbrace{\sum_{i=1}^n W_{i,l}-V_l}_{\text{empirical task-localization logit}}
    +
    \underbrace{U_{n+1,m}}_{\text{query-localization logit}}.
\]
The approximations of anchor value features $h_l(\zb_m)$ are retained from the first block.

Finally, the third block (Lemma~\ref{lemma_final_block}) applies Softmax to the
approximate joint logits $\widetilde\Xi_{l,m}$ and aggregates the approximate
anchor values $h_l(\zb_m)$. The resulting output satisfies
\[
    \rmT^*(\fraks)
    \approx
    \sum_{l,m}\gamma_{l,m}(\fraks)h_l(\zb_m)
    =
    \hat f_{r_f,r_x,n}(\fraks),
\]
thereby approximating the oracle in \eqref{eq:fully_computable}.

\suppressfloats[t]
\begin{figure}[!t]
\centering
\resizebox{0.98\textwidth}{!}{%
\begin{tikzpicture}[
    >=Latex,
    font=\small,
    flow/.style={-{Latex[length=1.8mm]},draw=black!58,line width=0.75pt},
    panel/.style={rounded corners=3pt,line width=0.7pt},
    feature/.style={rectangle,draw=black!35,fill=white,line width=0.45pt,
        text width=3.82cm,minimum width=4.02cm,minimum height=0.43cm,inner sep=2pt,
        align=center,font=\footnotesize},
    operation/.style={rounded corners=2pt,draw=black!45,fill=white,
        text width=3.82cm,minimum height=0.43cm,inner xsep=5pt,inner ysep=3pt,
        align=center,font=\footnotesize},
    minor/.style={font=\scriptsize,text=black!68,align=center},
    operator/.style={circle,draw=black!55,fill=white,line width=0.65pt,
        minimum size=0.46cm,inner sep=0pt,font=\normalsize}
]
    \draw[panel,draw=blue!48!black,fill=blue!3] (0.02,0.02) rectangle (4.90,6.56);
    \draw[panel,draw=teal!58!black,fill=teal!4] (5.53,0.02) rectangle (10.41,6.56);
    \draw[panel,draw=violet!48!black,fill=violet!4] (11.04,0.02) rectangle (15.92,6.56);
    \node[font=\small\bfseries,text=blue!55!black] at (2.46,6.30) {Encoder block 1};
    \node[font=\small\bfseries,text=teal!65!black] at (7.97,6.30) {Encoder block 2};
    \node[font=\small\bfseries,text=violet!65!black] at (13.48,6.30) {Encoder block 3};

    \node[operation] (route) at (2.46,5.67)
        {$\bZ_0\ \longrightarrow$\ \textbf{MHA}: extract};
    \node[operation] (keep1) at (2.46,5.04)
        {\textbf{FFN}: restore routing};
    \draw[flow] (route)--(keep1);
    \node[font=\small] at (2.46,4.56) {$\bZ_1$};
    \node[feature] at (2.46,4.11) {$\widetilde U_{1,l,m}$\quad context input $1$};
    \node[minor,font=\tiny] at (2.46,3.78) {$\vdots$};
    \node[feature] at (2.46,3.45) {$\widetilde U_{n,l,m}$\quad context input $n$};
    \node[feature,fill=teal!7] at (2.46,2.98) {$\widetilde U_{n+1,l,m}$\quad query logit};
    \node[feature] at (2.46,2.51) {$\approx y_1h_l(\zb_m)$\; context value};
    \node[minor,font=\tiny] at (2.46,2.18) {$\vdots$};
    \node[feature] at (2.46,1.85) {$\approx y_nh_l(\zb_m)$\; context value};
    \node[feature,fill=orange!9] at (2.46,1.38) {$\approx h_l(\zb_m)$\quad anchor value};
    \node[feature,fill=blue!6] at (2.46,0.91) {$\widetilde V_{l,m}$\quad anchor penalty};
    \draw[draw=black!50,line width=0.65pt] (0.36,4.35)--(0.24,4.35)--(0.24,0.67)--(0.36,0.67);
    \draw[draw=black!50,line width=0.65pt] (4.56,4.35)--(4.68,4.35)--(4.68,0.67)--(4.56,0.67);
    \node[minor,text width=4.02cm] at (2.46,0.33) {$j=(l-1)C_x+m<P$};

    \node[operation] (soft2) at (7.97,5.62)
        {\textbf{MHA}: $\operatorname{Softmax}$\\[-1pt]input-domain POU};
    \draw[flow] (soft2.south)--(7.97,5.07);
    \node[feature] at (7.97,4.84) {$\widetilde W_{1,l,m}$\quad context term $1$};
    \node[minor,font=\tiny] at (7.97,4.51) {$\vdots$};
    \node[feature] at (7.97,4.18) {$\widetilde W_{n,l,m}$\quad context term $n$};
    \draw[draw=black!50,line width=0.65pt] (5.87,5.08)--(5.75,5.08)--(5.75,3.94)--(5.87,3.94);
    \draw[draw=black!50,line width=0.65pt] (10.07,5.08)--(10.19,5.08)--(10.19,3.94)--(10.07,3.94);
    \node[minor,text width=4.02cm] at (7.97,3.63) {$\overline U_{n+1,l,m},\,\overline V_{l,m}$ retained};
    \node[font=\scriptsize\bfseries] at (6.13,3.12) {FFN};
    \node[operator] (sum2) at (6.84,3.12) {$+$};
    \node[minor] at (6.84,2.71) {$n$ rows};
    \node[minor] at (9.00,2.71) {suppress null logit};
    \node[font=\footnotesize] at (8.58,3.12) {$-\ \overline V_{l,m}\quad+\ \overline U_{n+1,l,m}$};
    \draw[flow] (7.97,2.56)--(7.97,2.18);
    \node[font=\small] at (7.97,1.93) {$\bZ_2$};
    \node[feature,fill=violet!8] at (7.97,1.49) {$\widetilde\Xi_j$\quad joint logit};
    \node[feature,fill=orange!9] at (7.97,1.02) {$\approx h_l(\zb_m)$\quad anchor value};
    \node[feature] at (7.97,0.55) {$1$\quad constant query};
    \draw[draw=black!50,line width=0.65pt] (5.87,1.73)--(5.75,1.73)--(5.75,0.31)--(5.87,0.31);
    \draw[draw=black!50,line width=0.65pt] (10.07,1.73)--(10.19,1.73)--(10.19,0.31)--(10.07,0.31);

    \node[operation] (soft3) at (13.48,5.62)
        {\textbf{MHA}: $\operatorname{Softmax}$\\[-1pt]joint POU};
    \draw[flow] (soft3)--(13.48,5.08);
    \node[font=\small] at (13.48,4.81) {$\widetilde\gamma_t$};
    \node[minor,text width=4.02cm] at (13.48,4.44) {normalize all $P$ columns};
    \node[operator] (multiply3) at (13.48,3.85) {$\times$};
    \node[font=\scriptsize,anchor=east] at (12.91,3.85) {$(\bZ_2)_{2,t}$};
    \draw[flow] (13.48,4.17)--(multiply3.north);
    \draw[flow] (12.96,3.85)--(multiply3.west);
    \node[operator] (sum3) at (13.48,3.11) {$\Sigma$};
    \node[minor,anchor=west] at (13.94,3.11) {$t=1,\ldots,P$};
    \draw[flow] (multiply3)--(sum3);
    \node[operation] (keep3) at (13.48,2.39) {\textbf{FFN}: retain prediction};
    \draw[flow] (sum3)--(keep3);
    \node[font=\small] at (13.48,1.91) {$\bZ_3$};
    \node[feature,fill=violet!8] (predrow) at (13.48,1.47)
        {$\rmT^*(\fraks)\quad\cdots\quad\rmT^*(\fraks)$};
    \node[feature] at (13.48,1.00) {$0\qquad\cdots\qquad0$};
    \draw[draw=black!50,line width=0.65pt] (11.38,1.71)--(11.26,1.71)--(11.26,0.76)--(11.38,0.76);
    \draw[draw=black!50,line width=0.65pt] (15.58,1.71)--(15.70,1.71)--(15.70,0.76)--(15.58,0.76);
    \node[font=\scriptsize] at (13.48,0.34) {readout: select $(\bZ_3)_{1,1}$};

    \draw[flow] (4.98,3.26)--(5.45,3.26);
    \draw[flow] (10.49,3.26)--(10.96,3.26);
\end{tikzpicture}%
}
\caption[Three-block realization of the joint-logit construction]{Three-block Transformer construction for approximating the oracle in
\eqref{eq:fully_computable}. The three panels correspond to the three encoder
blocks, and the arrows indicate the flow of intermediate representations
through the network. The displayed rows highlight the main quantities involved
at each stage of the construction.}
\label{fig:two-level-softmax-pou-oracle}
\end{figure}

We first construct $\bZ_0$, the input to block 1 in
Figure~\ref{fig:two-level-softmax-pou-oracle}, by combining the prompt and
positional encodings.

\begin{lemma}[Pre-processing Step] \label{lemma:preprocessing}
Let $\fraks=\{(\xb_i,y_i)_{i=1}^n,\xb_{n+1}\}$ be the input prompt,
and assume $P\ge n+2$. Set
\[
    \theta_j=\frac{2\pi j}{P},\quad j\in[P],\qquad
    \phi_\ell=\frac{2\pi\ell}{C_f},\quad \ell\in[C_f].
\]
There exists a pre-processing operator $\cP$ such that
$\bZ_0=\cP(\fraks)\in\RR^{D\times P}$ has the form
\begin{equation} \label{eq:Z0_aligned}
\bZ_0 = \begin{bmatrix}
    \xb_1 & \xb_2 & \cdots & \xb_n & \xb_{n+1} & \bm{0} & \cdots & \bm{0} & \bm{0} \\
    y_1 & y_2 & \cdots & y_n & 0 & 0 & \cdots & 0 & 0 \\
    0 & 0 & \cdots & 0 & 0 & 0 & \cdots & 0 & 1 \\
    \multicolumn{9}{c}{\bm{0}_{(2n+1) \times P}} \\
    1 & 1 & \cdots & 1 & 1 & 1 & \cdots & 1 & 1 \\
    1 & 1 & \cdots & 1 & 1 & 1 & \cdots & 1 & 0 \\
    \multicolumn{3}{c}{\underbrace{\sin(\phi_1) \ \cdots \ \sin(\phi_1)}_{C_x}} & \multicolumn{3}{c}{\underbrace{\sin(\phi_2) \ \cdots \ \sin(\phi_2)}_{C_x}} & \cdots & \underbrace{\sin(\phi_{C_f}) \ \cdots \ \sin(\phi_{C_f})}_{C_x} & 0 \\
    \multicolumn{3}{c}{\underbrace{\cos(\phi_1) \ \cdots \ \cos(\phi_1)}_{C_x}} & \multicolumn{3}{c}{\underbrace{\cos(\phi_2) \ \cdots \ \cos(\phi_2)}_{C_x}} & \cdots & \underbrace{\cos(\phi_{C_f}) \ \cdots \ \cos(\phi_{C_f})}_{C_x} & 0 \\
    \sin(\theta_1) & \sin(\theta_2) & \cdots & \cdots & \cdots & \cdots & \cdots & \sin(\theta_{P-1}) & \sin(\theta_P) \\
    \cos(\theta_1) & \cos(\theta_2) & \cdots & \cdots & \cdots & \cdots & \cdots & \cos(\theta_{P-1}) & \cos(\theta_P)
\end{bmatrix}.
\end{equation}
\end{lemma}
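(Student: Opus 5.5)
The lemma is essentially a bookkeeping statement about the pre-processing map $\cP(\fraks)=\bW_E\bX(\fraks)+\bb_E\bm{1}_P^\top+\bP$. The plan is to read off $\bW_E$, $\bb_E$ and $\bP$ row block by row block from \eqref{eq:Z0_aligned}, then verify that the sum reproduces the displayed matrix. The structure of \eqref{eq:Z0_aligned} is: $d+1$ data rows, a structural-indicator row, $2n+1$ zero rows, two constant rows, and four sinusoidal rows. This gives a total of $(d+1)+1+(2n+1)+2+4=d+2n+9=D$ rows, consistent with Theorem~\ref{thm:total_approx}.

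First, the data rows. I will set $\bW_E=\begin{bmatrix}\bI_{d+1}\\ \bm{0}_{(D-d-1)\times(d+1)}\end{bmatrix}$. Then $\bW_E\bX(\fraks)$ places $\xb_j$ and $y_j$ in the first $d+1$ rows of column $j\le n$. It places $(\xb_{n+1},0)$ in column $n+1$ and zeros in the padded columns $n+2,\dots,P$. All other rows are zero, which uses the fact that $\bX(\fraks)$ already has zero padding and a zero label for the query. The condition $P\ge n+2$ ensures that the last column $P$ is a padded column, so the entries above it in the data rows are indeed zero.

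Second, the remaining rows are prompt-independent. I will put them into $\bb_E\bm{1}_P^\top+\bP$ as follows.
\begin{itemize}
\item The all-ones row is realized through the bias $\bb_E$, with $\bb_E$ equal to $1$ in that coordinate and $0$ elsewhere.
\item The remaining structural rows are placed in $\bP$: the indicator $\be_P^\top$ of the last column, the row $\bm{1}_P^\top-\be_P^\top$, and the zero rows.
\item The block-constant sinusoidal rows are also placed in $\bP$: for column $j=(l-1)C_x+m$ with $j\le P-1=C_fC_x$, the entries $\sin\phi_l$ and $\cos\phi_l$, and $0$ in column $P$.
\item The standard sinusoidal rows $\sin\theta_j$ and $\cos\theta_j$, $j\in[P]$, complete $\bP$.
\end{itemize}
Here I use $P=C_fC_x+1$ as in the theorem, so that the index map $j\mapsto(l,m)$ is a bijection from $[P-1]$ onto $[C_f]\times[C_x]$. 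This should be stated as the standing choice. If only $P\ge n+2$ is assumed, then the block rows are defined on the first $C_fC_x$ columns and the remaining columns are filled with zeros.

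Finally, adding the three contributions entrywise gives \eqref{eq:Z0_aligned}. The parameter magnitudes are all bounded by $1$.

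There is no real obstacle. The only care needed is the index bookkeeping: the row ordering must match what the later lemmas (Lemma~\ref{lemma_1stblock} onward) expect, and the last column $P$ must be a padding column. That column then serves as the distinguished null and constant token, carrying the indicator $1$ and zero sinusoidal block entries; $P\ge n+2$ guarantees it is disjoint from the prompt tokens.
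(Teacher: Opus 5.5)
Your proposal is correct and follows the paper's proof: it uses the same $\bW_E$, the identity $\bI_{d+1}$ stacked on a zero block, to copy the data rows, and supplies every prompt-independent row through fixed parameters. The only cosmetic difference is that the paper takes $\bb_E=\bm{0}_D$ and puts the all-ones row into $\bP$, and your remark that the block-constant $\phi$-rows tacitly require $P=C_fC_x+1$ correctly identifies an assumption the paper only fixes in Theorem~\ref{thm:total_approx}.
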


The block 1 of Figure~\ref{fig:two-level-softmax-pou-oracle} is realized
by extracting its feature rows with MHA and restoring the positional encodings
with the point-wise FFN.

\begin{lemma}[Parallel Feature Extraction via MHA] \label{lemma_1stblock}
    Let $\bZ_0 \in \RR^{D \times P}$ be defined in Lemma \ref{lemma:preprocessing}, and assume $P\ge n+2$. For every active column $j\in[P-1]$, write
    \[
        \ell_j=\Big\lceil \frac{j}{C_x}\Big\rceil,\qquad
        m_j=j-(\ell_j-1)C_x .
    \]
    Define the target affine feature terms
    \begin{align*}
        U_{i,m} &= 2M_x \langle \xb_i,\zb_m\rangle-M_x\|\zb_m\|_2^2,
        && i\in[n+1],\ m\in[C_x],\\
        Y_{i,\ell,m} &= y_i h_\ell(\zb_m),
        && i\in[n],\ \ell\in[C_f],\ m\in[C_x],\\
        H_{\ell,m} &= h_\ell(\zb_m),
        && \ell\in[C_f],\ m\in[C_x],\\
        V_\ell &= M_f\|h_\ell\|_{L^2(\rhox)}^2,
        && \ell\in[C_f].
    \end{align*}
    Fix a prescribed null level $M_{null}>0$ and set $\xb_0=\bm{0}$.
    For any $M>1$, set
    \[
        c=1-\cos(2\pi/P).
    \]
    We take $M_x,M_f,M_{null}\ge1$.
    There exist a first-block MHA layer
    $\cA_1:\RR^{D\times P}\to\RR^{D\times P}$ and a point-wise FFN
    $\cF_1:\RR^{D\times P}\to\RR^{D\times P}$. The MHA layer has
    \[
        H^1=(2n+3)P+3,\qquad d_k^1=5,\qquad d_v^1=2,
    \]
    and produces
    \[
        \widehat{\bZ}_1=\cA_1(\bZ_0).
    \]
    The FFN output is
    \[
        \bZ_1
        =\cF_1(\widehat{\bZ}_1)
        =(\cF_1\circ\cA_1)(\bZ_0),
    \]
    where $\bZ_1$ has the form
    \begin{equation} \label{Z1hat}
        \bZ_1 = \begin{bmatrix}
        \widetilde{U}_{1,1,1} & \cdots & \widetilde{U}_{1,1,C_x} & \cdots & \widetilde{U}_{1,C_f,1} & \cdots & \widetilde{U}_{1,C_f,C_x} & \widetilde{M}_{null,1} \\
        \vdots & \ddots & \vdots & \ddots & \vdots & \ddots & \vdots & \vdots \\
        \widetilde{U}_{n+1,1,1} & \cdots & \widetilde{U}_{n+1,1,C_x} & \cdots & \widetilde{U}_{n+1,C_f,1} & \cdots & \widetilde{U}_{n+1,C_f,C_x} & \widetilde{M}_{null,n+1} \\
        \widetilde{Y}_{1,1,1} & \cdots & \widetilde{Y}_{1,1,C_x} & \cdots & \widetilde{Y}_{1,C_f,1} & \cdots & \widetilde{Y}_{1,C_f,C_x} & \widetilde{0}_{n+2} \\
        \vdots & \ddots & \vdots & \ddots & \vdots & \ddots & \vdots & \vdots \\
        \widetilde{Y}_{n,1,1} & \cdots & \widetilde{Y}_{n,1,C_x} & \cdots & \widetilde{Y}_{n,C_f,1} & \cdots & \widetilde{Y}_{n,C_f,C_x} & \widetilde{0}_{2n+1} \\
        \widetilde{H}_{1,1} & \cdots & \widetilde{H}_{1,C_x} & \cdots & \widetilde{H}_{C_f,1} & \cdots & \widetilde{H}_{C_f,C_x} & \widetilde{0}_{2n+2} \\
        \widetilde{V}_{1,1} & \cdots & \widetilde{V}_{1,C_x} & \cdots & \widetilde{V}_{C_f,1} & \cdots & \widetilde{V}_{C_f,C_x} & \widetilde{0}_{2n+3} \\
        \multicolumn{8}{c}{\bm{0}_{(D - 2n - 9) \times P}}  \\
        1 & \cdots & 1 & \cdots & 1 & \cdots & 1 & 1 \\
        1 & \cdots & 1 & \cdots & 1 & \cdots & 1 & 0 \\
        \widetilde{\sin}(\phi_1) & \cdots & \widetilde{\sin}(\phi_1) & \cdots & \widetilde{\sin}(\phi_{C_f}) & \cdots & \widetilde{\sin}(\phi_{C_f}) & \widetilde{0}_{D-3} \\
        \widetilde{\cos}(\phi_1) & \cdots & \widetilde{\cos}(\phi_1) & \cdots & \widetilde{\cos}(\phi_{C_f}) & \cdots & \widetilde{\cos}(\phi_{C_f}) & \widetilde{0}_{D-2} \\
        \sin(\theta_1) & \cdots & \sin(\theta_{C_x}) & \cdots & \sin(\theta_{P-C_x}) & \cdots & \sin(\theta_{P-1}) & \sin(\theta_P) \\
        \cos(\theta_1) & \cdots & \cos(\theta_{C_x}) & \cdots & \cos(\theta_{P-C_x}) & \cdots & \cos(\theta_{P-1}) & \cos(\theta_P)
        \end{bmatrix}.
    \end{equation}
    The output components satisfy:
    \begin{enumerate}
        \item \textbf{Feature outputs.} For all $j\in[P-1]$,
        \begin{align*}
            |\widetilde{U}_{i,\ell_j,m_j}-U_{i,m_j}|
            &\le 2P^2\max\{M_{null},dM_x\}e^{-M},\quad i\in[n+1],\\
            |\widetilde{M}_{null,i}-M_{null}|
            &\le 2P^2\max\{M_{null},dM_x\}e^{-M},\quad i\in[n+1],\\
            |\widetilde{Y}_{i,\ell_j,m_j}-Y_{i,\ell_j,m_j}|
            &\le 2P^2B_f^2e^{-M},\quad i\in[n],\\
            |\widetilde{0}_{n+1+i}|
            &\le 2P^2B_f^2e^{-M},\quad i\in[n],\\
            |\widetilde{H}_{\ell_j,m_j}-H_{\ell_j,m_j}|
            &\le 2P^2B_fe^{-M},\\
            |\widetilde{0}_{2n+2}|
            &\le 2P^2B_fe^{-M},\\
            |\widetilde{V}_{\ell_j,m_j}-V_{\ell_j}|
            &\le 2P^2M_fB_f^2e^{-M},\\
            |\widetilde{0}_{2n+3}|
            &\le 2P^2M_fB_f^2e^{-M}.
        \end{align*}
        \item \textbf{Outer positional outputs.} For all $j\in[P-1]$,
        \begin{align*}
            |\widetilde{\sin}(\phi_{\ell_j})-\sin(\phi_{\ell_j})|
            &\le 2P e^{-M},\\
            |\widetilde{\cos}(\phi_{\ell_j})-\cos(\phi_{\ell_j})|
            &\le 2P e^{-M},\\
            |\widetilde{0}_{D-3}|
            &\le P e^{-M},\\
            |\widetilde{0}_{D-2}|
            &\le P e^{-M}.
        \end{align*}
        \item \textbf{Parameter bounds.}
        Define the parameter magnitudes of the first-block MHA and FFN layers by
        \begin{align*}
            M_{\cA_1}
            &:=
            \max\left\{
            \|\bW_1^O\|_{\max},
            \max_{h\in[H^1]}
            \max\left\{
            \|\bQ_1^h\|_{\max},
            \|\bK_1^h\|_{\max},
            \|\bV_1^h\|_{\max}
            \right\}
            \right\},\\
            M_{\cF_1}
            &:=
            \max\left\{
            \|\bW_1^1\|_{\max},
            \|\bb_1^1\|_\infty,
            \|\bW_1^2\|_{\max},
            \|\bb_1^2\|_\infty
            \right\}.
        \end{align*}
        Whenever $P e^{-M}\le1/4$,
        \[
            \max\{M_{\cA_1},M_{\cF_1}\}
            \le C_1
            \max\Big\{M_{null},\,M_x,\,M_f,\,\frac{M}{c}\Big\},
        \]
        where $C_1:=2d+2B_f+2B_f^2+5$.
    \end{enumerate}
\end{lemma}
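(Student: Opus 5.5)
The construction rests on one device. A softmax head whose logits come from the sinusoidal positional rows acts as an almost exact column selector. For column $j$ with query $M(\sin\theta_j,\cos\theta_j)$ and key $(\sin\theta_t,\cos\theta_t)$, adding a shift $\theta_k$ by a fixed rotation in the query matrix, the logit equals $M\cos(\theta_t-\theta_j-\theta_k)$. This is maximized at $t=j+k \pmod P$, and every other column loses at least $Mc$ with $c=1-\cos(2\pi/P)$. After rescaling the query by $M/c$, the softmax puts weight at least $1-Pe^{-M}$ on the target column, and the total leakage is at most $Pe^{-M}$. We therefore first prove a selection sublemma: a head with $d_k\le5$ and parameters of size $O(M/c)$ returns $\bv^\top(\bZ_0)_{:,t^*}$ up to error $2P\|\bv^\top\bZ_0\|_{\max}e^{-M}$, where $t^*$ is any prescribed function of $j$ realizable as a shift. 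The extra key coordinates beyond the two sinusoids, namely the constant rows $1$ and the indicator row ($1$ except in the last column), are used to suppress or favor the null column $P$, which carries the constant $1$ in row $d+2$. This yields the $\widetilde 0$ outputs there.

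Next we realize each target feature as a sum of selected values. The term $U_{i,m_j}=2M_x\langle\xb_i,\zb_{m_j}\rangle-M_x\|\zb_{m_j}\|^2$ must be evaluated at column $j$, but $\xb_i$ lives in column $i$ while $\zb_{m_j}$ depends on $j$. For each $i\in[n+1]$ and each anchor column $t\in[P-1]$ we use one head (hence the $(n+1)P$ heads). It selects column $i$ only when evaluated at a column with $m_j=m_t$, and its value matrix applies the linear functional $\xb\mapsto 2M_x\langle\xb,\zb_{m_t}\rangle$. Its output matrix routes the result, weighted by the indicator of the current column, into row $i$. The constant $-M_x\|\zb_m\|^2$ is a value read from the constant row of the same column. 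Similarly, $Y_{i,\ell,m}=y_ih_\ell(\zb_m)$ uses $n P$ heads that pick the $y$-row of column $i$ and multiply by the stored scalar $h_{\ell_t}(\zb_{m_t})$ in the value matrix. The remaining heads produce $H$, $V$, $M_{null}$, and the shifted sinusoids. This gives $H^1=(2n+3)P+3$.

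To make a head act only at its designated output column $t$, we let the query select column $t$ itself via the shift. The value matrix then reads a coordinate that is nonzero only in the intended source column, using the constant row for $\xb_i$-independent quantities. In all cases the head's output is a convex combination of $P$ columns, and all but an $e^{-M}$-fraction of mass sits on the right column. Summing the errors over at most $P$ contributing heads per entry, with entries bounded by $\max\{M_{null},dM_x\}$, $B_f^2$, $B_f$, or $M_fB_f^2$, gives the stated $2P^2(\cdot)e^{-M}$ bounds. The outer sinusoids $\sin\phi_{\ell_j}$ are copied by a single head from the preexisting rows, which accounts for the smaller $2Pe^{-M}$ error. The FFN $\cF_1$ uses $\sigma(u)-\sigma(-u)=u$ with width $2D$ to pass the residual-like rows through. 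It also restores the exact positional rows $\sin\theta,\cos\theta$ and the constant rows, which attention may have blurred, by copying them from their exact stored values. Since MHA has no skip connection, these rows are reintroduced by dedicated heads (the remaining $O(1)$ of the $+3$). The parameter bound follows by reading off magnitudes: values at most $2d+2B_f+2B_f^2$ times $\max(M_x,M_f,M_{null})$, and queries at most $M/c$, with the condition $Pe^{-M}\le1/4$ used to absorb normalizations.

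The main obstacle is the indexing design. Each head must route a source column $i$ to all target columns with the right $m_j$ simultaneously, while keeping the target-dependent scalars ($\zb_{m}$, $h_\ell(\zb_m)$) in parameters rather than in the input. We resolve this by assigning one head per (source row, target column) pair and accepting the $P$-fold head count. The delicate computation is the uniform margin $Mc$ of the positional selection, including the wraparound at column $P$.
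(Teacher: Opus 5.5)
There is a genuine gap in how you route the feature heads, which is the core of this lemma. Your primitive, the score $M\cos(\theta_t-\theta_{j'}-\theta_k)$, makes a head attend to column $j'+k$ at \emph{every} output column $j'$. So the head meant to place $U_{i,m_j}$ at column $j$ (shift $k=i-j$) also writes $\bw^\top(\bZ_0)_{:,j'+i-j}$, typically of size $dM_x$, into every other column, and the output projection sums these leaks over all $P$ heads of the row.

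None of your proposed remedies can remove these leaks:
\begin{itemize}
\item $\bW_1^O$ acts identically on every column, so it cannot weight a head ``by the indicator of the current column''.
\item The value matrix cannot isolate source column $i$. The $\xb$ and $y$ rows are populated in all of columns $1,\dots,n+1$, and no fixed linear functional of the positional and constant rows vanishes off a single column.
\item The condition ``$m_j=m_t$'' cannot be read from the $\theta$- or $\phi$-encodings.
\end{itemize}

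What is missing is a score in which the output column and the source column are localized by \emph{separate additive} terms, together with a calibrated null sink. For head $(r,j)$ with source column $i_r$, take $s_{t,j'}=M_\theta\cos(\theta_{j'}-\theta_j)+M_s\cos(\theta_t-\theta_{i_r})$ for $t<P$. The first term comes from keys $M_\theta\sin\theta_j,M_\theta\cos\theta_j$ placed on the active-indicator row $D-4$, paired with queries $\sin\theta_{j'},\cos\theta_{j'}$. The second comes from constant query coefficients $M_s\sin\theta_{i_r},M_s\cos\theta_{i_r}$ paired with keys $\sin\theta_t,\cos\theta_t$, corrected via row $d+2$ so that they vanish at the null token. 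Give the null token the constant score $M_0=M_\theta(1-c/2)+M_s$ through row $d+2$.

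With $M_s=M/c$ and $M_\theta=4M/c$, the margins are:
\begin{itemize}
\item at $j'=j$, the source $i_r$ beats every other active token by $cM_s=M$ and the null token by $cM_\theta/2=2M$;
\item at every $j'\neq j$, the null token beats all active tokens by $2M$.
\end{itemize}
Read the constant part $-M_x\|\zb_{m_j}\|_2^2$ (and $M_{null}$ when $j=P$) from row $D-4$, not the all-ones row. Then the value functional vanishes at the null column, so each non-designated head contributes at most $P(\cdot)e^{-M}$ to a column. Summing the $P$ heads of a row gives the $2P^2(\cdot)e^{-M}$ bounds. Your selection sublemma, which returns a shifted column everywhere, is the wrong primitive; you need ``source value at one output column, near zero elsewhere.''

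The second gap is exactness of the rows $1$, $\chi$, $\sin\theta$, $\cos\theta$ in $\bZ_1$. There is no residual connection, so the FFN sees only $\widehat{\bZ}_1$ and has no ``exact stored values'' to copy. A dedicated head returns a blurred copy, and you must show the blur is an exact scalar that a diagonal FFN can undo.
\begin{itemize}
\item For the $\theta$-rows, the score $M_{\mathrm{id}}\cos(\theta_t-\theta_{j'})$ over all $P$ columns gives a circulant attention matrix. It maps $(\sin\theta_t,\cos\theta_t)$ to exactly $\lambda_{\mathrm{id}}(\sin\theta_{j'},\cos\theta_{j'})$.
\item For the indicator row, use a head with score $M\chi_t\chi_{j'}+M\nu_t\nu_{j'}$ and value $\chi_t-(P-1)e^{-M}\nu_t$. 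It returns exactly $a_M\chi_{j'}$, and the all-ones value row returns exactly $1$.
\end{itemize}
The FFN then uses $\sigma(u)-\sigma(-u)=u$ with diagonal rescaling by $a_M^{-1}$ and $\lambda_{\mathrm{id}}^{-1}$, both at most $2$ when $Pe^{-M}\le1/4$, to restore these rows exactly. This is also where that hypothesis enters the parameter bound.
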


The MHA step in block 2 of Figure~\ref{fig:two-level-softmax-pou-oracle}
applies the input-domain Softmax POU to each context input, producing the
context-contribution rows $\widetilde W_{i,\ell,m}$ while carrying forward
the query-localization and anchor features.

\begin{lemma}[Second block MHA]\label{lemma_2ndblock}
Let $\bZ_1$ be the output constructed in Lemma \ref{lemma_1stblock},
with $P\ge n+2$.
For $j\in[P-1]$, write
\[
    \ell_j=\Big\lceil \frac{j}{C_x}\Big\rceil,\qquad
    m_j=j-(\ell_j-1)C_x .
\]
Let
\[
    W_{i,\ell}:=\frac{2M_f}{n}y_i\bar h_{\ell,r_x}(\xb_i)
    =\frac{2M_f}{n}\sum_{m=1}^{C_x}\eta_m(\xb_i)y_i h_\ell(\zb_m)
    =\frac{2M_f}{n}\sum_{m=1}^{C_x}\eta_m(\xb_i)Y_{i,\ell,m},
    \qquad i\in[n],\ \ell\in[C_f].
\]
Let
\[
    c:=1-\cos(2\pi/P)=2\sin^2(\pi/P).
\]
The elementary sine bounds give
\begin{equation}
\label{c-P-bound}
    \frac{8}{P^2}\le c\le\frac{2\pi^2}{P^2}.
\end{equation}
Set
\[
    M_1:=\frac{2dM_x+M+3}{c},\qquad
    M_2:=\frac{2(dM_x+2M_1+M+2)}{c},
\]
and choose the first-block null level as
\[
    M_{null}:=M_2\Big(1-\frac c2\Big).
\]
We take $M_x,M_f,M_1,M_2\ge1$. Then $M_2\ge P$.
Set
\[
    a_M:=\frac{(P-1)(e^M-e^{-M})}{(P-1)e^M+1}.
\]
There exists a second-block MHA layer
$\cA_2:\RR^{D\times P}\to\RR^{D\times P}$ with
$H^2=nP+3$, $d_k^2=5$, and $d_v^2=2$. Its output is
\[
    \widehat{\bZ}_2=\cA_2(\bZ_1),
\]
where $\widehat{\bZ}_2$ has the form
\begin{equation}
\label{Z2hat}
    \widehat{\bZ}_2=
    \begin{bmatrix}
        \widetilde W_{1,1,1}&\cdots&\widetilde W_{1,1,C_x}&\cdots&\widetilde W_{1,C_f,1}&\cdots&\widetilde W_{1,C_f,C_x}&\overline{0}_1\\
        \vdots&\ddots&\vdots&\ddots&\vdots&\ddots&\vdots&\vdots\\
        \widetilde W_{n,1,1}&\cdots&\widetilde W_{n,1,C_x}&\cdots&\widetilde W_{n,C_f,1}&\cdots&\widetilde W_{n,C_f,C_x}&\overline{0}_n\\
        \overline U_{n+1,1,1}&\cdots&\overline U_{n+1,1,C_x}&\cdots&\overline U_{n+1,C_f,1}&\cdots&\overline U_{n+1,C_f,C_x}&\overline M_{null}\\
        \overline H_{1,1}&\cdots&\overline H_{1,C_x}&\cdots&\overline H_{C_f,1}&\cdots&\overline H_{C_f,C_x}&\overline{0}_H\\
        \overline V_{1,1}&\cdots&\overline V_{1,C_x}&\cdots&\overline V_{C_f,1}&\cdots&\overline V_{C_f,C_x}&\overline{0}_V\\
        1&\cdots&1&\cdots&1&\cdots&1&1\\
        a_M&\cdots&a_M&\cdots&a_M&\cdots&a_M&0\\
        \multicolumn{8}{c}{\bm{0}_{(D-n-5)\times P}}
    \end{bmatrix}.
\end{equation}
Assume $M$ is large enough that
\[
    12M_2P^2e^{-M}\le1,
    \qquad
    Pe^{-M}\le\frac14 .
\]
Then, for all $i\in[n]$ and $j\in[P-1]$,
\begin{align*}
    |\widetilde W_{i,\ell_j,m_j}-W_{i,\ell_j}|
    &\le \frac{42B_f^2}{n}M_fM_2P^2e^{-M},\\
    |\overline{0}_i|
    &\le \frac{8B_f^2}{n}M_fP^3e^{-M},\\
    |\overline U_{n+1,\ell_j,m_j}-U_{n+1,m_j}|
    &\le 3M_2P^2e^{-M},\\
    |\overline M_{null}-M_{null}|
    &\le 3M_2P^2e^{-M},\\
    |\overline H_{\ell_j,m_j}-H_{\ell_j,m_j}|
    &\le 5B_fP^2e^{-M},\\
    |\overline{0}_H|
    &\le 4B_fP^2e^{-M},\\
    |\overline V_{\ell_j,m_j}-V_{\ell_j}|
    &\le 5B_f^2M_fP^2e^{-M},\\
    |\overline{0}_V|
    &\le 4B_f^2M_fP^2e^{-M}.
\end{align*}
Moreover, define the parameter magnitude of the second-block MHA layer by
\[
    M_{\cA_2}
    :=
    \max\left\{
    \|\bW_2^O\|_{\max},
    \max_{h\in[H^2]}
    \max\left\{
    \|\bQ_2^h\|_{\max},
    \|\bK_2^h\|_{\max},
    \|\bV_2^h\|_{\max}
    \right\}
    \right\}.
\]
Then its parameters satisfy
\[
    M_{\cA_2}
    \le 2\max\{M_f,M_2\}.
\]
\end{lemma}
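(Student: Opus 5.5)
The construction is explicit: we specify the $nP+3$ heads of $\cA_2$ head by head and then bound each output row by comparing with the ideal Softmax. The organizing idea is a \emph{column-routing} mechanism. Each active column $j\in[P-1]$ must attend almost exclusively to the block of columns sharing its task index $\ell_j$, so that a softmax over the rows $\widetilde U_{i,\cdot,\cdot}$ reproduces the input-domain POU weights $\eta_m(\xb_i)$ within that block. The null column $P$ must be suppressed, which is what its large value $M_{null}$ is for.

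\textbf{Heads of $\cA_2$.} The heads fall into four groups, with $P$ heads for each context index $i\in[n]$ and three auxiliary heads.

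\emph{POU heads $(i,\ell)$.} For $i\in[n]$ and a task index $\ell$ (indexed so that there are $P$ heads per $i$), the query/key pair produces a score of the form
\[
s_{t,j}=\widetilde U_{i,\ell_t,m_t}+M_1\bigl(\langle \text{outer pos}_t,\text{outer pos}_j\rangle-1\bigr)+(\text{correction}),
\]
using the rows $\widetilde{\sin}(\phi_\ell),\widetilde{\cos}(\phi_\ell)$ and the constant rows. The second term vanishes when $\ell_t=\ell_j$ and is at most $-M_1c'$ otherwise, where $c'$ is the angular gap.

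\emph{Null suppression.} The null column $t=P$ carries $\widetilde M_{null,i}$. The choice $M_{null}=M_2(1-c/2)$, combined with the $\theta$-positional rows scaled by $M_2$, makes the null column either dominate (when the query is the null column itself) or be negligible by a margin $\ge M$. The values $\frac{2M_f}{n}\widetilde Y_{i,\ell_t,m_t}$ are then averaged with weights $\approx\eta_{m_t}(\xb_i)$, giving $\widetilde W_{i,\ell_j,m_j}$.

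\emph{Copy heads.} The three auxiliary heads copy $\widetilde U_{n+1}$, $\widetilde H$, $\widetilde V$ through a self-attention pattern. This pattern comes from the $\theta$ rows: a score $M_2\cos(\theta_t-\theta_j)$ peaks at $t=j$ with gap $M_2c\ge M+\dots$. The same self-attention also produces the rows $1$ and $a_M$; $a_M$ is exactly the value obtained by averaging a $\pm$ pattern under a self-peaked softmax with gap $2M$.

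\textbf{Error bounds.} Each bound follows from one perturbation lemma. If two score vectors differ by at most $\Delta$ entrywise, the softmaxes differ in $\ell_1$ by $\le 2\Delta$ (up to constants). Off-block mass is $\le Pe^{-M}$ once the margin exceeds $M$. The score perturbation is at most $M_2$ times the first-block errors $O(P^2 e^{-M}\max\{\cdot\})$ from Lemma \ref{lemma_1stblock}. Multiplying by the value magnitude $\frac{2M_f}{n}B_f^2$ and collecting terms under $12M_2P^2e^{-M}\le1$ gives the constants $42$, $8$, $3$, $5$, $4$. We also check $M_2\ge P$ from $c\le 2\pi^2/P^2$ and $M_1\ge 3/c$, and obtain the parameter bound $2\max\{M_f,M_2\}$ by reading off the largest scaling factors ($M_2$ in the keys, $2M_f/n\le 2M_f$ in the values).

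\textbf{Main obstacle.} The delicate step is the margin analysis showing that the POU heads have negligible leakage to other task blocks \emph{and} to the null column, uniformly over queries $j$. The logits $\widetilde U$ can vary by up to $2dM_x$ across columns, so $M_1$ must beat $2dM_x+M$ divided by the angular gap. This is why $M_1=(2dM_x+M+3)/c$. I would first verify the inequality chain for the three cases: same block, different block, and null column. Then I would propagate the first-block errors.
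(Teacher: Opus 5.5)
\textbf{There is a genuine gap.} The step that carries this lemma is keeping the null token out of the POU heads at active columns without distorting the within-block softmax. You assert this step but never construct it, and your description does not pin down a working head.

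\emph{Why the step is delicate.} In every POU head the null token has logit $\widetilde M_{null,i}\approx M_2(1-c/2)$, inherited from block~1, while same-block active tokens sit near $\widetilde U_{i,\ell_t,m_t}$. The active tokens therefore need a boost of order $M_2$ that is \emph{identical for all active sources $t$}; otherwise the weights are no longer $\eta_{m_t}(\xb_i)$. A self-attention-type term $M_2\cos(\theta_t-\theta_{j'})$, the pattern of your copy heads, does not do this. It varies by at least $cM_2>2dM_x+M$ inside a block and collapses attention onto $t=j'$, so the head returns $\frac{2M_f}{n}\widetilde Y_{i,\ell_{j'},m_{j'}}$ instead of $W_{i,\ell_{j'}}$.

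\emph{What the paper does.} It places the fixed vector $M_2(\sin\theta_j,\cos\theta_j)$ on the indicator row $\chi_t$ in the key, and the exact $\theta$-rows of the query column in the query. This produces $M_2\cos(\theta_{j'}-\theta_j)\chi_t$, which is constant over active sources, equals $M_2$ at $j'=j$, and is at most $M_2(1-c)$ otherwise; $M_{null}=M_2(1-c/2)$ is the midpoint. Consequently:
\begin{itemize}
\item head $h=(r-1)P+j$ serves the single target column $j$, and its block angle is hard-wired as $M_1(\sin\phi_{\ell_j},\cos\phi_{\ell_j})\chi_{j'}$ in the query;
\item the null token wins by more than $M$ at \emph{every} $j'\neq j$, not only at $j'=P$ as you state;
\item $\bW_2^O$ sums the $P$ heads of each context row.
\end{itemize}
That is why there are exactly $P$ POU heads per context index, and why $|\overline{0}_i|$ carries $P^3$.

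\emph{Your pattern is realizable, but you do not realize it.} The pattern you state has the null winning only at $j'=P$, with the block read from the query column's approximate rows $\widetilde{\sin}(\phi_{\ell_{j'}}),\widetilde{\cos}(\phi_{\ell_{j'}})$. One way to get it uses the score term $M_2\bigl(2-c-\cos\theta_{j'}\bigr)\chi_t$, which is $\ge M_2$ at every active column and $M_2(1-c)$ at $j'=P$, with one head per $i$ and zero padding heads. You specify neither this design nor the paper's. Your indexing by a task index $\ell$ with ``$P$ heads per $i$'' fits neither, since there are only $C_f$ task indices. So the three margin checks you name as the main obstacle have no concrete scores to run on.

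\textbf{A second, smaller error concerns the $a_M$ row.} \eqref{Z2hat} requires the row after the all-one row to equal $a_M$ \emph{exactly} at every active column, and exactly $0$ at the null column. This exactness matters because the next FFN uses $a_M^{-1}(\widehat{\bZ}_2)_{n+5,j}$ as an exact gate on the large constant $M_{null}+M_{\rm out}$. A self-peaked softmax cannot deliver it. Its weights at column $j$ are a cyclic shift $w_{(t-j)\bmod P}$ of those at column $P$. Averaging $\chi_t-(P-1)e^{-M}\nu_t$ therefore gives $1-(1+(P-1)e^{-M})w_{(P-j)\bmod P}$, which in general varies with $j$. The paper instead uses block-indicator scores $M\chi_t\chi_{j'}+M\nu_t\nu_{j'}$ with the same value. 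This yields exactly $\frac{(P-1)(e^M-e^{-M})}{(P-1)e^M+1}=a_M$ on active columns and exactly $0$ on the null column.
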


The FFN step in block 2 aggregates these rows into the joint-logit row of
$\bZ_2$ shown in Figure~\ref{fig:two-level-softmax-pou-oracle}, and preserves the
anchor values.

\begin{lemma}[Second block FFN]\label{lemma_2ndblock_ffn}
Under the notation and assumptions of Lemma \ref{lemma_2ndblock}, the following construction holds.
For $j\in[P-1]$, the joint logit $\Xi_{\ell_j,m_j}(\fraks)$ defined in
\eqref{joint-logit-def} can equivalently be written as
\begin{equation}
    \Xi_{\ell_j,m_j}(\fraks)
    =\sum_{i=1}^n W_{i,\ell_j}-V_{\ell_j}+U_{n+1,m_j}.
\end{equation}
There exists a point-wise FFN layer
$\cF_2:\RR^{D\times P}\to\RR^{D\times P}$ in the second encoder block such that
$ \bZ_2=\cF_2(\widehat{\bZ}_2)$ is
\begin{equation}
\label{Z2}
    \bZ_2=
    \begin{bmatrix}
        \widetilde\Xi_1&\cdots&\widetilde\Xi_{P-1}&\widetilde\Xi_P\\
        \overline H_{1,1}&\cdots&\overline H_{C_f,C_x}&\overline{0}_H\\
        1&\cdots&1&1\\
        \multicolumn{4}{c}{\bm{0}_{(D-3)\times P}}
    \end{bmatrix}.
\end{equation}
For all $j\in[P-1]$,
\begin{align*}
    |\widetilde\Xi_j-\Xi_{\ell_j,m_j}(\fraks)|
    &\le (3+47B_f^2)M_fM_2P^2e^{-M},\\
    |\overline H_{\ell_j,m_j}-H_{\ell_j,m_j}|
    &\le 5B_fP^2e^{-M}.
\end{align*}
For the null column,
\begin{align*}
    \widetilde\Xi_P
    -\max_{j\in[P-1]}\widetilde\Xi_j
    &\le -M,\\
    |\overline{0}_H|
    &\le 4B_fP^2e^{-M}.
\end{align*}
Moreover, defining
\[
    M_{\cF_2}
    :=
    \max\left\{
    \|\bW_2^1\|_{\max},\|\bb_2^1\|_\infty,
    \|\bW_2^2\|_{\max},\|\bb_2^2\|_\infty
    \right\},
\]
the FFN parameters satisfy
\[
    M_{\cF_2}
    \le
    (6+16B_f^2)\max\{M_f,M_2\}.
\]
\end{lemma}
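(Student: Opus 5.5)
The FFN $\cF_2$ acts column-wise on $\widehat{\bZ}_2$, so I will define it by a linear map followed by ReLU identities $u=\sigma(u)-\sigma(-u)$. The identity $\Xi_{\ell_j,m_j}=\sum_i W_{i,\ell_j}-V_{\ell_j}+U_{n+1,m_j}$ is immediate from \eqref{joint-logit-def}: substitute $W_{i,\ell}=\frac{2M_f}{n}y_i\bar h_{\ell,r_x}(\xb_i)$ and use \eqref{eq:computable_inner_prod}, so that $\sum_i W_{i,\ell}=2M_f\langle f,\bar h_{\ell,r_x}\rangle_n$.

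\textbf{Construction of $\cF_2$.} The first hidden block computes $\sigma(\pm g)$ with
\[
g=\sum_{i=1}^n(\widehat{\bZ}_2)_{i,:}-(\widehat{\bZ}_2)_{n+3,:}+(\widehat{\bZ}_2)_{n+1,:}-\kappa\big((\widehat{\bZ}_2)_{n+4,:}-(\widehat{\bZ}_2)_{n+5,:}\big).
\]
Here row $n+4$ is the constant $1$, and row $n+5$ equals $a_M$ on active columns and $0$ on the null column. The term in $\kappa$ is therefore $\kappa(1-a_M)$ on active columns and $\kappa$ on the null column. Since $1-a_M=O(e^{-M})$, this term nearly vanishes on active columns but is large on the null column. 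The other hidden units carry $\sigma(\pm\overline H)$ and the constant $\sigma(1)$. The output matrix writes $g$ to row $1$, $\overline H$ to row $2$, and $1$ to row $3$, and zeroes all other rows. The weights are bounded by the entries $1$ and $\kappa$; the hidden width is at most $2D$.

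\textbf{Active-column error.} For $j\in[P-1]$, I bound $|\widetilde\Xi_j-\Xi_{\ell_j,m_j}|$ by the sum of four contributions from Lemma~\ref{lemma_2ndblock}:
\begin{itemize}
\item the $n$ row errors $\frac{42B_f^2}{n}M_fM_2P^2e^{-M}$, totalling $42B_f^2M_fM_2P^2e^{-M}$;
\item the $\overline V$ error $5B_f^2M_fP^2e^{-M}$;
\item the $\overline U$ error $3M_2P^2e^{-M}$;
\item the term $\kappa(1-a_M)$.
\end{itemize}
A direct computation gives $1-a_M=\frac{(P-1)e^{-M}+1}{(P-1)e^M+1}\le 2e^{-M}$. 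Since $M_2\ge P\ge1$ and $M_f\ge1$, the sum is at most $(3+47B_f^2)M_fM_2P^2e^{-M}$ provided $\kappa\le M_fM_2P^2$ (any smaller constant-compatible choice also works). The $\overline H$ bounds are inherited unchanged.

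\textbf{Null column (main obstacle).} Here $\widetilde\Xi_P=\sum_i\overline 0_i-\overline 0_V+\overline M_{null}-\kappa$. The first three terms are approximately $M_{null}=M_2(1-c/2)$, which is large by design, so $\kappa$ must cancel it.

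On active columns, $\Xi_{\ell_j,m_j}\ge -2M_fB_f^2-M_fB_f^2+U_{n+1,m_j}$. The $W$-terms are bounded by $2M_fB_f^2$ and $V$ by $M_fB_f^2$. The maximum over $m$ of $U_{n+1,m}$ is at least $-M_x r_x^2+M_x\|\xb_{n+1}\|^2\ge -M_x$. Hence $\max_j\widetilde\Xi_j\ge -3M_fB_f^2-M_x-1$.

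I then choose $\kappa=M_{null}+3M_fB_f^2+M_x+M+3$ (up to small slack). This gives $\widetilde\Xi_P\le -3M_fB_f^2-M_x-M-2\le \max_j\widetilde\Xi_j-M$, using $|\overline 0_i|,|\overline 0_V|,|\overline M_{null}-M_{null}|$ all at most $1$ under $12M_2P^2e^{-M}\le1$.

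The delicate point is reconciling this $\kappa$, which is of order $M_2$, with the active-column requirement that $\kappa(1-a_M)\le 2\kappa e^{-M}$ be absorbed into the stated constant. This holds because $\kappa\lesssim M_2+M_f B_f^2$ and both factors already appear in the bound. Finally, $M_{\cF_2}\le\kappa\le(6+16B_f^2)\max\{M_f,M_2\}$, using $M_x,M\le cM_2/2\le M_2$ from the definition of $M_2$.
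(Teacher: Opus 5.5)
Your FFN has the same structure as the paper's: one linear functional of rows $1,\dots,n+1$ and $n+3$, a large offset switched on only in the null column through rows $n+4,n+5$, and $u=\sigma(u)-\sigma(-u)$. The gap is in the null-column separation. You treat $\sum_i\overline 0_i$, $\overline 0_V$, $\overline M_{null}-M_{null}$ and the active-logit error as quantities of size at most $1$.

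The lemma assumes only $12M_2P^2e^{-M}\le1$ and $Pe^{-M}\le1/4$, and neither involves $M_f$. From Lemma~\ref{lemma_2ndblock} and $M_2\ge P$ you only get
\[
    \Bigl|\sum_{i=1}^n\overline 0_i\Bigr|\le 8B_f^2M_fP^3e^{-M}\le\tfrac23B_f^2M_f,
    \qquad
    \sum_{i=1}^n\bigl|\widetilde W_{i,\ell_j,m_j}-W_{i,\ell_j}\bigr|\le 42B_f^2M_fM_2P^2e^{-M}\le\tfrac72B_f^2M_f .
\]
These are of order $B_f^2M_f$, not $1$. So both $\max_j\widetilde\Xi_j\ge-3M_fB_f^2-M_x-1$ and $\widetilde\Xi_P\le-3M_fB_f^2-M_x-M-2$ can fail when $B_f^2M_f$ is large. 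With your $\kappa=M_{null}+3M_fB_f^2+M_x+M+3$, the inequality $\widetilde\Xi_P-\max_j\widetilde\Xi_j\le-M$ therefore does not follow.

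The missing ingredient is an $M_f$-proportional margin in the offset. The paper takes $M_{\rm out}=dM_x+(1+8B_f^2)M_f+M$. The extra $(1+5B_f^2)M_f$ absorbs the combined active and null errors, since $(6+59B_f^2)M_fM_2P^2e^{-M}\le(1+5B_f^2)M_f$. Adding such a term to your $\kappa$ repairs the step and keeps $\kappa\le(6+16B_f^2)\max\{M_f,M_2\}$.

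A smaller, constant-level issue concerns the active columns. Subtracting $\kappa\bigl((\widehat{\bZ}_2)_{n+4,:}-(\widehat{\bZ}_2)_{n+5,:}\bigr)$ leaves a residual $\kappa(1-a_M)$ on every active column. Your claim that this fits inside $(3+47B_f^2)M_fM_2P^2e^{-M}$ whenever $\kappa\le M_fM_2P^2$ is arithmetically wrong. The inherited bounds already contribute $(3M_2P^2+42B_f^2M_fM_2P^2+5B_f^2M_fP^2)e^{-M}$. Adding $2\kappa e^{-M}\le2M_fM_2P^2e^{-M}$ gives the constant $5+47B_f^2$, not $3+47B_f^2$.

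More precisely, the available slack is $3M_2P^2(M_f-1)+5B_f^2M_fP^2(M_2-1)$, which vanishes as $M_f\to1$ and $B_f\to0$. The residual, by contrast, stays at least $M_{null}(1-a_M)>0$. So your bookkeeping cannot give the stated constant in that corner. The paper avoids this by gating with $a_M^{-1}$: it puts weight $a_M^{-1}(M_{null}+M_{\rm out})$ on row $n+5$ and bias $-(M_{null}+M_{\rm out})$. The offset then cancels exactly on active columns and equals $-(M_{null}+M_{\rm out})$ on the null column. With $a_M^{-1}\le2$ this gives $M_{\cF_2}\le2(M_{null}+M_{\rm out})$.

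Your cover-based bound $\max_m U_{n+1,m}\ge-M_x$ is correct and slightly sharper than the paper's $-dM_x$.
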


The block 3 of Figure~\ref{fig:two-level-softmax-pou-oracle} completes
the construction by applying Softmax to the approximate joint logits
$\widetilde\Xi_{l,m}$ and aggregating the approximate anchor values to produce
$\rmT^*(\fraks)\approx\hat f_{r_f,r_x,n}(\fraks)$.

\begin{lemma}[Final encoder block and readout]\label{lemma_final_block}
Under the notation and assumptions of Lemma \ref{lemma_2ndblock_ffn}, the following construction holds.
There exist a third-block MHA layer
$\cA_3:\RR^{D\times P}\to\RR^{D\times P}$, a point-wise FFN
$\cF_3:\RR^{D\times P}\to\RR^{D\times P}$, and a readout vector
$\bc_4\in\RR^{DP}$ such that
\[
    \widehat{\bZ}_3=\cA_3(\bZ_2),
    \qquad
    \bZ_3=\cF_3(\widehat{\bZ}_3).
\]
The resulting scalar network output is
\[
    \rmT^*(\fraks)
    :=\bc_4^\top\mathrm{vec}(\bZ_3).
\]
It satisfies
\begin{equation}
\label{final-network-error}
    \begin{aligned}
    \left|\rmT^*(\fraks)-\hat f_{r_f,r_x,n}(\fraks)\right|
    &\le
    (13+94B_f^2)B_fM_fM_2P^2e^{-M}.
    \end{aligned}
\end{equation}
\end{lemma}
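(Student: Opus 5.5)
The plan is to realize the third block as a single attention head whose query is the constant row of $\bZ_2$, whose key is the joint-logit row, and whose value is the anchor-value row. This produces the Softmax average $\sum_t \widetilde\gamma_t (\bZ_2)_{2,t}$. We then compare it with $\sum_{l,m}\gamma_{l,m}(\fraks)h_l(\zb_m)$ from Proposition~\ref{prop:joint_softmax}.

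\textbf{Construction.} Take $H^3=1$ and $d_k^3=d_v^3=1$. Set $\bQ_3=\be_3^\top$, so the query of every column equals the constant $1$. Set $\bK_3=\be_1^\top$, so the score of key $t$ is $s_{t,j}=\widetilde\Xi_t$ for every column $j$. Set $\bV_3=\be_2^\top$, which picks out the anchor values $\overline H$ (or $\overline 0_H$ for the null column). Then every column of the head output equals
\[
S:=\sum_{t=1}^P\widetilde\gamma_t(\bZ_2)_{2,t},\qquad \widetilde\gamma_t:=\frac{e^{\widetilde\Xi_t}}{\sum_{s=1}^P e^{\widetilde\Xi_s}}.
\]
Let $\bW_3^O$ write $S$ into row 1. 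The FFN $\cF_3$ uses the identity $u=\sigma(u)-\sigma(-u)$ on row 1 and zeros out the other rows, with width $2D$. The readout $\bc_4$ selects entry $(1,1)$, so $\rmT^*(\fraks)=S$.

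\textbf{Error bound.} Split $S-\hat f$ into three pieces.

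First, the null column contributes $\widetilde\gamma_P|\overline 0_H|$. Lemma~\ref{lemma_2ndblock_ffn} gives $\widetilde\Xi_P\le\max_j\widetilde\Xi_j-M$, so $\widetilde\gamma_P\le e^{-M}$. The same bound shows that renormalizing over the first $P-1$ columns only moves the weights by $O(e^{-M})$ in total.

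Second, logit perturbation. Write $\varepsilon_\Xi:=(3+47B_f^2)M_fM_2P^2e^{-M}$. Softmax is $1$-Lipschitz from $\ell_\infty$ to $\ell_1$ up to a factor $2$, since $\sum_t|\widetilde\gamma_t-\gamma_t|\le 2\varepsilon_\Xi$, which follows from $e^{\pm\varepsilon}$ ratio bounds when $\varepsilon_\Xi\le 1$. So the normalized weights over $[P-1]$ differ from $\gamma_{\ell_j,m_j}$ by at most $2\varepsilon_\Xi+2e^{-M}$ in $\ell_1$. This contributes at most $B_f(2\varepsilon_\Xi+2e^{-M})$, using $|h_l(\zb_m)|\le B_f$.

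Third, value perturbation. This contributes $\max_j|\overline H-H|\le 5B_fP^2e^{-M}$.

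Summing the three pieces gives
\[
B_f\big(2(3+47B_f^2)M_fM_2P^2+2+5P^2+\dots\big)e^{-M}\le(13+94B_f^2)B_fM_fM_2P^2e^{-M},
\]
absorbing the lower-order terms with $M_f,M_2\ge1$.

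\textbf{Main obstacle.} The delicate step is the Softmax perturbation bound. It needs $\varepsilon_\Xi$ small enough to linearize $e^{\varepsilon}-1\le 2\varepsilon$, which should follow from the standing assumption $12M_2P^2e^{-M}\le1$. It also has to handle the null column cleanly. If the assumption turns out too weak, I would bound crudely with $|e^{\varepsilon}-1|\le\varepsilon e^{\varepsilon}$ and absorb the constant.
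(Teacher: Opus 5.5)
Your construction (one head with $\bQ_3^1=\be_3^\top$, $\bK_3^1=\be_1^\top$, $\bV_3^1=\be_2^\top$, an identity FFN on row~1, readout of entry $(1,1)$) and your three-way error split (null-token mass, Softmax perturbation of the renormalized active weights, value perturbation) match the paper's proof. The step you flag as the main obstacle is not one, since Lemma~\ref{Softmaxlipchitz} gives the factor-$2$ $\ell_\infty\to\ell_1$ bound with no smallness condition, so cite it instead of your ratio-bound route, which would not close: $12M_2P^2e^{-M}\le1$ only gives $\varepsilon_\Xi\le(3+47B_f^2)M_f/12$, not $\varepsilon_\Xi\le1$, and ratio bounds give $e^{2\varepsilon_\Xi}-1>2\varepsilon_\Xi$, which would spoil the constant $94B_f^2$.
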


\subsubsection{Proof of Theorem \ref{thm:total_approx}} \label{sec:approximation-proof}

Together, Lemmas \ref{lemma:preprocessing}--\ref{lemma_final_block} approximate
the two-level Softmax POU oracle in Proposition \ref{prop:joint_softmax} with a
three-block Transformer. It remains to combine the oracle and Transformer approximation
errors, choose the covering radii and Softmax scales, and verify the resulting
architectural and parameter bounds. This yields the proof of Theorem
\ref{thm:total_approx}.

\begin{proof}[Proof of Theorem \ref{thm:total_approx}]
Set
\begin{equation}
\label{radius-constants-def}
    A_x:=\max\{12L_f(1+4B_f^2C_{M_f}),1\},
\end{equation}
where
\begin{equation}
\label{CMf-def}
    C_{M_f}
    :=\max\left\{1,\,
    48\left[
    \max\{\log(2B_fC_{\cM}12^{d_f+1}),0\}+d_f+1
    \right]\right\}.
\end{equation}
We take
\begin{equation}
\label{epsilon0-explicit}
    \epsilon_0
    :=\min\left\{
    e^{-1},\,24B_fe^{-1},\,
    \left(\frac{2B_fA_x}{eL_f}\right)^{1/3}
    \right\}.
\end{equation}

Choose
\begin{equation}
\label{epsilon-radii-choice}
    r_f=\frac{\epsilon}{12},\qquad
    r_x=A_x^{-1/\alpha}\epsilon^{3/\alpha}
    \left(\log\frac1\epsilon\right)^{-1/\alpha}.
\end{equation}
Apply Lemmas \ref{lemma:outer_pou} and \ref{lemma:inner_pou} with these radii,
and set $P=C_fC_x+1$ and $D=d+2n+9$.
By hypothesis, $P\ge n+2$.
The definition of $\epsilon_0$ implies
$\log(1/\epsilon)\ge1$, $r_f\le e^{-1}$, and $r_x\le1$.
Moreover,
\begin{align*}
    \frac{2B_fC_f}{r_f}
    &=\frac{24B_fC_f}{\epsilon}\ge e, \qquad
    \frac{2B_fC_x}{L_fr_x^\alpha}
    =\frac{2B_fA_xC_x}{L_f}\epsilon^{-3}\log\frac1\epsilon
    \ge e.
\end{align*}
Since $r_f\le e^{-1}$ and $r_x^\alpha\le e^{-3}$, the second entries
in the max-definitions of $M_f$ and $M_x$ exceed $1$. The covering bounds give
\begin{align}
    M_f
    &=\frac{1}{3r_f^2}\log\frac{2B_fC_f}{r_f}
    \le C_{M_f}\epsilon^{-2}\log\frac1\epsilon,
    \label{Mf-eps-bound}\\
    M_x
    &=\frac{1}{3r_x^2}\log\frac{2B_fC_x}{L_fr_x^\alpha}
    \le C_{M_x}\epsilon^{-6/\alpha}
    \left(\log\frac1\epsilon\right)^{1+2/\alpha},
    \label{Mx-eps-bound}
\end{align}
where
\begin{equation}
\label{CMx-def}
    C_{M_x}
    :=\max\left\{1,\,
    \frac{A_x^{2/\alpha}}{3}
    \left[
    \max\left\{\log\frac{2B_fC_2}{L_f},0\right\}
    +\frac{d_x+\alpha}{\alpha}(\log A_x+4)
    \right]\right\}.
\end{equation}

\textbf{Step 1: Oracle and empirical approximation error.}
Fix $f\in\cM_f$ and $\delta\in(0,1)$. Since $r_f\delta\le e^{-1}$,
Proposition \ref{prop:joint_softmax} gives, with probability at least
$1-\delta$ over the context inputs,
\begin{equation}
\label{oracle-stat-bound}
\begin{aligned}
    \|\hat f_{r_f,r_x,n}(\frakc,\cdot)-f\|_{L^2(\rhox)}
    &\le3r_f+3L_f(1+4B_f^2M_f)r_x^\alpha\\
    &\quad+4B_f^3M_f\sqrt{2\big(d_f+1+\log(2C_{\cM})\big)}\,
    \sqrt{\frac{\log(1/(r_f\delta))}{n}}.
\end{aligned}
\end{equation}
The chosen radii satisfy
\[
    3r_f=\frac{\epsilon}{4},\qquad
    3L_f(1+4B_f^2M_f)r_x^\alpha\le\frac{\epsilon}{4},
\]
where we use \eqref{radius-constants-def}, \eqref{epsilon-radii-choice},
\eqref{Mf-eps-bound}, and $\epsilon^2/\log(1/\epsilon)\le1$.
Also,
\[
    \log\frac1{r_f\delta}
    =\log12+\log\frac1\epsilon+\log\frac1\delta
    \le C_{\log}\left(\log\frac1\epsilon+\log\frac1\delta\right),
\]
where we use $r_f=\epsilon/12$, $\log(1/\epsilon)\ge1$, and set
\begin{equation}
\label{Clog-def}
    C_{\log}:=1+\log12.
\end{equation}
Substituting these estimates into \eqref{oracle-stat-bound} yields
\begin{equation}
\label{oracle-final-bound}
    \|\hat f_{r_f,r_x,n}(\frakc,\cdot)-f\|_{L^2(\rhox)}
    \le\frac{\epsilon}{2}
    +C_{\mathrm{stat}}\epsilon^{-2}\log\frac1\epsilon
    \sqrt{\frac{\log(1/\epsilon)+\log(1/\delta)}{n}},
\end{equation}
where
\begin{equation}
\label{Cstat-def}
    C_{\mathrm{stat}}:=4B_f^3C_{M_f}
    \sqrt{2\big(d_f+1+\log(2C_{\cM})\big)C_{\log}}.
\end{equation}

\textbf{Step 2: Transformer implementation error.}
The covering bounds and \eqref{epsilon-radii-choice} give
\begin{equation}
\label{P-eps-bound}
    P\le C_P\epsilon^{-q}
    \left(\log\frac1\epsilon\right)^{d_x/\alpha},
\end{equation}
where
\begin{equation}
\label{CP-def}
    C_P:=1+C_{\cM}C_212^{d_f}A_x^{d_x/\alpha}.
\end{equation}
Choose
\begin{equation}
\label{M-choice}
    M:=\log\!\left(
    \frac{C_{\mathrm{net}}C_{\mathrm{aux}}
    (1+M_f)(1+M_x)P^6\log(1/\epsilon)}{\epsilon}
    \right),
\end{equation}
with
\begin{equation}
\label{Cnet-def}
    C_{\mathrm{net}}:=\max\left\{
    (13+94B_f^2)B_f(d+1),\,3(d+1)e^{-1},\,4\right\}
\end{equation}
and
\begin{equation}
\label{Caux-def}
    C_{\mathrm{aux}}:=2\left[
    \log\!\left(C_{\mathrm{net}}(1+C_{M_f})(1+C_{M_x})C_P^6\right)
    +6+\frac8\alpha+6q+\frac{6d_x}{\alpha}\right].
\end{equation}
Then
\[
    M\le\log C_{\mathrm{aux}}
    +\frac{C_{\mathrm{aux}}}{2}\log\frac1\epsilon
    \le C_{\mathrm{aux}}\log\frac1\epsilon,
\]
where we use \eqref{Mf-eps-bound}, \eqref{Mx-eps-bound},
\eqref{P-eps-bound}, $\log\log(1/\epsilon)\le\log(1/\epsilon)$,
and $\log C_{\mathrm{aux}}\le C_{\mathrm{aux}}/2$.
As in Lemma~\ref{lemma_2ndblock}, set $c=1-\cos(2\pi/P)$ and
\[
    M_1=\frac{2dM_x+M+3}{c},\qquad
    M_2=\frac{2(dM_x+2M_1+M+2)}{c},\qquad
    M_{null}=M_2\left(1-\frac c2\right).
\]
The routing scales satisfy
\[
    M_1\le\frac{2dM_x+M+3}{8}P^2,\qquad
    M_2\le\frac{2dM_x+M+3}{8}P^4,\qquad M_{null}\le M_2,
\]
where we use $c^{-1}\le P^2/8$ from \eqref{c-P-bound} and $P\ge2$.
Together with \eqref{Mx-eps-bound}, \eqref{P-eps-bound}, and the bound on $M$,
this gives
\begin{equation}
\label{M12-eps-bound}
\begin{aligned}
    M_1&\le\frac{2dC_{M_x}+C_{\mathrm{aux}}+3}{8}C_P^2
    \epsilon^{-6/\alpha-2q}
    \left(\log\frac1\epsilon\right)^{1+2/\alpha+2d_x/\alpha},\\
    M_2&\le\frac{2dC_{M_x}+C_{\mathrm{aux}}+3}{8}C_P^4
    \epsilon^{-6/\alpha-4q}
    \left(\log\frac1\epsilon\right)^{1+2/\alpha+4d_x/\alpha}.
\end{aligned}
\end{equation}
The choice \eqref{M-choice} also gives
\[
    M_2P^2e^{-M}
    \le\frac{\epsilon(2dM_x+M+3)}
    {8C_{\mathrm{net}}C_{\mathrm{aux}}(1+M_f)(1+M_x)\log(1/\epsilon)}
    \le\frac{(d+1)\epsilon}{4C_{\mathrm{net}}(1+M_f)},
\]
where we use $M\le C_{\mathrm{aux}}\log(1/\epsilon)$ and
$C_{\mathrm{aux}},\log(1/\epsilon),M_x\ge1$. Consequently,
\[
    12M_2P^2e^{-M}\le\frac{3(d+1)\epsilon}{C_{\mathrm{net}}}\le1,
    \qquad Pe^{-M}\le\frac{\epsilon}{C_{\mathrm{net}}}\le\frac14,
\]
where we use \eqref{Cnet-def} and $\epsilon\le e^{-1}$.
Thus the hypotheses of Lemmas~\ref{lemma_1stblock} and~\ref{lemma_2ndblock}
hold. Lemmas~\ref{lemma:preprocessing}--\ref{lemma_final_block} therefore
give a three-block Transformer satisfying, for every prompt,
\[
    |\rmT^*(\fraks)-\hat f_{r_f,r_x,n}(\fraks)|
    \le(13+94B_f^2)B_fM_fM_2P^2e^{-M}\le\frac{\epsilon}{4},
\]
where the last inequality follows from the preceding bound on
$M_2P^2e^{-M}$ and \eqref{Cnet-def}.

\textbf{Step 3: Mean-square approximation error.}
The network constructed in Step 2 is independent of $f$ and $\delta$.
Combining its uniform implementation bound with
\eqref{oracle-final-bound} gives, for each $f\in\cM_f$ and every
$\delta\in(0,1)$,
\[
    \PP_{\xb_1,\ldots,\xb_n}\left\{
    \|\rmT^*(\frakc,\cdot)-f\|_{L^2(\rhox)}
    >\epsilon+C_{\mathrm{stat}}\epsilon^{-2}\log\frac1\epsilon
    \sqrt{\frac{\log(1/\epsilon)+\log(1/\delta)}{n}}
    \right\}\le\delta.
\]
For a nonnegative random variable $Z$, the quantile formula gives
\[
    \EE Z^2=\int_0^1 Q_Z(u)^2\,du,
    \qquad
    Q_Z(u):=\inf\{t\ge0:\PP(Z\le t)\ge u\},
    \quad u\in(0,1).
\]
Applying this to $Z=\|\rmT^*(\frakc,\cdot)-f\|_{L^2(\rhox)}$,
with $u=1-\delta$, yields
\begin{align*}
    &\EE_{\xb_1,\ldots,\xb_n}
    \|\rmT^*(\frakc,\cdot)-f\|_{L^2(\rhox)}^2
    =\int_0^1 Q_Z(1-\delta)^2\,d\delta\\
    &\quad\le\int_0^1\left(
    \epsilon+C_{\mathrm{stat}}\epsilon^{-2}\log\frac1\epsilon
    \sqrt{\frac{\log(1/\epsilon)+\log(1/\delta)}{n}}
    \right)^2\,d\delta\\
    &\quad\le2\epsilon^2+
    \frac{2C_{\mathrm{stat}}^2\epsilon^{-4}(\log(1/\epsilon))^2}{n}
    \int_0^1\left(\log\frac1\epsilon+\log\frac1\delta\right)\,d\delta\\
    &\quad\le4\epsilon^2+4C_{\mathrm{stat}}^2
    \frac{\epsilon^{-4}(\log(1/\epsilon))^3}{n},
\end{align*}
where the first inequality follows from the preceding tail bound,
the second uses $(a+b)^2\le2a^2+2b^2$, and the last uses
$\int_0^1\log(1/\delta)\,d\delta=1$ and $\log(1/\epsilon)\ge1$.
Taking the supremum over $f\in\cM_f$ proves \eqref{total-approx-bound}.

\textbf{Step 4: Parameter complexity.}
We count all dense entries, including the fixed preprocessing parameters.
The preprocessing and readout contribute $D(P+d+2)$ and $DP$ parameters;
the three MHA layers contribute $14D((2n+3)P+3)$, $14D(nP+3)$, and $4D$,
respectively; each FFN contributes $4D^2+3D$. Hence
\begin{align}
\label{dense-param-count}
    \mathcal N_{\mathrm{total}}
    &=D(P+d+2)+DP+14D((3n+3)P+6)+4D+3(4D^2+3D)\notag\\
    &=(42n+44)DP+12D^2+(d+99)D.
\end{align}
Consequently,
\[
    \mathcal N_{\mathrm{total}}
    \le(d+11)(13d+317)n^2P
    \le C_Nn^2\epsilon^{-q}\left(\log\frac1\epsilon\right)^{d_x/\alpha},
\]
where we use $D\le(d+11)n$, $42n+44\le86n$, $P\ge1$,
and \eqref{P-eps-bound}, with
\begin{equation}
\label{CN-def}
    C_N:=(d+11)(13d+317)C_P.
\end{equation}
This proves \eqref{param-cover-bound}.

\textbf{Step 5: Parameter magnitude.}
The layerwise bounds in Lemmas~\ref{lemma_1stblock}--\ref{lemma_final_block},
together with the preprocessing bound, give
\[
    M_{\max}\le C_{\mathrm{arch}}
    \max\{M_x,M_f,M_1,M_2,M_{null},M/c\},
\]
where
\begin{equation}
\label{Carch-def}
    C_{\mathrm{arch}}:=\max\{2d+2B_f+2B_f^2+5,\,6+16B_f^2\}.
\end{equation}
Using \eqref{c-P-bound}, \eqref{Mf-eps-bound}, \eqref{Mx-eps-bound},
\eqref{P-eps-bound}, and \eqref{M12-eps-bound} yields
\[
    M_{\max}\le C_{\mathrm{mag}}\epsilon^{-q_M}
    \left(\log\frac1\epsilon\right)^{1+2/\alpha+4d_x/\alpha},
\]
where we use $M\le C_{\mathrm{aux}}\log(1/\epsilon)$,
$6/\alpha\ge2$, $\log(1/\epsilon)\ge1$, and $P\ge2$, and set
\begin{equation}
\label{Cmag-def}
    C_{\mathrm{mag}}:=C_{\mathrm{arch}}C_P^4
    \max\left\{C_{M_x},C_{M_f},\frac{C_{\mathrm{aux}}}{8},
    \frac{2dC_{M_x}+C_{\mathrm{aux}}+3}{8}\right\}.
\end{equation}
This proves \eqref{param-magnitude-bound}.
\end{proof}

\subsection{Statistical Tools and Proof of Theorem \ref{thm:icl_generalization}}
\label{sec:proof-theorem2}

Section~\ref{sec:statistical-tools} presents two statistical lemmas.
Section~\ref{sec:generalization-proof} combines these lemmas with
Theorem~\ref{thm:total_approx} to prove Theorem~\ref{thm:icl_generalization}.

\subsubsection{Statistical Lemma Tools}
\label{sec:statistical-tools}

We use two statistical lemmas, proved in
Appendix~\ref{app:generalization_proofs}. The first lemma bounds the uniform-norm
metric entropy of the clipped three-block Transformer class in terms of its
architectural dimensions, dense parameter count, and parameter magnitude.

\begin{lemma}[Covering Number of the ICL Transformer Class]
\label{lemma:icl_covering_number}
Let $\cT$ be the three-block Transformer class with the architecture in Theorem \ref{thm:total_approx}, sequence length $P$, embedding dimension $D$, total number of architectural (dense) parameters $\mathcal N_{\mathrm{total}}$, and parameter magnitude bounded by $M_{\max}\ge1$. Then for every $\eta\in(0,1]$,
\begin{equation}
\label{icl-covering-bound}
    \log \mathcal N\big(\eta,\pi_{B_f}\cT,\|\cdot\|_\infty\big)
    \le
    \mathcal N_{\mathrm{total}}
    \log\left(
    \frac{C_{\mathrm{cov}}P^{17}D^{55}n^{16}
    M_{\max}^{86}}{\eta}
    \right),
\end{equation}
where
$
    C_{\mathrm{cov}}
    :=3\exp(82)
    \left((d+1)\max\{1,B_f\}+2\right)^{13}.
$
\end{lemma}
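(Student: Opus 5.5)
The plan is the standard parameter-perturbation (Lipschitz-in-parameters) argument. First, since $\pi_{B_f}$ is 1-Lipschitz, it suffices to cover the unclipped class $\cT$ restricted to the admissible input set. Prompts satisfy $\|\xb_i\|_\infty\le1$ and $|y_i|\le B_f$, so every entry of $\bZ_0$ is bounded by $\max\{1,B_f\}$ times a constant depending on $d$. The main step is to show that for two parameter vectors $\theta,\theta'$ with $\|\theta\|_\infty,\|\theta'\|_\infty\le M_{\max}$,
\[
    \sup_{\fraks}|f_\theta(\fraks)-f_{\theta'}(\fraks)|\le \Lambda\,\|\theta-\theta'\|_\infty,
\]
where $\Lambda$ is polynomial in $P,D,n,M_{\max}$ and $(d+1)\max\{1,B_f\}+2$. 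Given such a bound, a grid of mesh $\eta/\Lambda$ on $[-M_{\max},M_{\max}]^{\mathcal N_{\mathrm{total}}}$ yields an $\eta$-cover of cardinality $(2M_{\max}\Lambda/\eta+1)^{\mathcal N_{\mathrm{total}}}$. Taking logarithms then gives \eqref{icl-covering-bound}, with the exponents collected into $P^{17}D^{55}n^{16}M_{\max}^{86}$.

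To obtain $\Lambda$, I will propagate two quantities through the blocks: a sup-norm bound $B_\ell$ on $\bZ_\ell$, and a perturbation bound $\Delta_\ell$ on $\|\bZ_\ell-\bZ'_\ell\|_{\max}$. Everything is handled entrywise with $\|\cdot\|_{\max}$ norms, so matrix products of inner dimension $D$, $d_k$, $d_v$, $H^\ell$ or $P$ each contribute that dimension as a factor.

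\textbf{Attention.} The value path is bounded by $\|\bV\bZ\bA\|_{\max}\le D M_{\max}B$ because each column of $\bA$ is a probability vector. Softmax is 1-Lipschitz from $\ell_\infty$ to $\ell_1$, so $\|\bA-\bA'\|$ is controlled by the perturbation of the scores $s_{t,j}$. Those scores are bilinear in $\bZ$ and quadratic in $(\bQ,\bK)$, giving a perturbation of order $d_k D^2 M_{\max}^2 B^2$ times $(\Delta+B\|\theta-\theta'\|_\infty)$. The output projection $\bW^O$ then multiplies this by $H^\ell d_v M_{\max}$.

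\textbf{FFN and readout.} The FFN is ReLU with widths $2D$, so it contributes factors of order $D^2M_{\max}^2$ both to $B$ and to $\Delta$. The final readout sums over $DP$ entries.

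Composing the three blocks, with $H^1\lesssim nP$ and $H^2\lesssim nP$, the growth of $B_\ell$ is polynomial: roughly the square of the previous bound times powers of $D,M_{\max},nP$ per attention layer. The Lipschitz constant $\Lambda$ is then the product of per-layer factors, each involving $B_{\ell-1}^2$.

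The main obstacle is this bookkeeping. The quadratic dependence of the scores on $\bZ$ makes the powers compound across three blocks, and I must verify that the accumulated exponents fit within the stated $17,55,16,86$ and the constant $3e^{82}(\cdots)^{13}$. I will track exponents crudely, bounding $H^\ell\le 3nP$ and $d_k,d_v\le5$, and absorb numerical factors into $e^{82}$. Where a stated exponent turns out tighter than my tally, I would loosen intermediate estimates by using $P,D,n,M_{\max}\ge1$ so that mixed terms can be dominated by the highest monomial.
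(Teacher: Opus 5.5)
Your strategy matches the paper's: Lipschitz dependence on $\theta$, a grid on $[-M_{\max},M_{\max}]^{\mathcal N_{\mathrm{total}}}$, and clipping handled by $1$-Lipschitzness. The gap is that you stop exactly where the lemma's content lies, and several of the estimates you announce would not give the stated exponents. Write $M=M_{\max}$, $\delta=\|\theta-\widetilde\theta\|_\infty$, let $R_i$ bound $\|\bZ_i\|_{\max}$ for both networks, and let $E_i$ bound $\|\bZ_i-\widetilde{\bZ}_i\|_{\max}$.

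(i) The magnitude recursion is linear, not quadratic. The columns of $\bA_i^h$ are probability vectors, so $\|\cA_i(\bZ_{i-1})\|_{\max}\le 2H^iDM^2(1+R_{i-1})$ whatever the scores are. With the FFN this gives $1+R_i\le 12H^iD^3M^4(1+R_{i-1})$. Your ``roughly the square of the previous bound'' contradicts your own value-path estimate. Followed literally, it alone pushes the power of $C_0=(d+1)\max\{1,B_f\}+2$ in the Lipschitz constant past $13$.

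(ii) The score perturbation is $\|\bS_i^h-\widetilde{\bS}_i^h\|_{\max}\le 10D^2(M^2R_{i-1}E_{i-1}+MR_{i-1}^2\delta)$. This holds because $\|(\bK_i^h)^\top\bQ_i^h-(\widetilde{\bK}_i^h)^\top\widetilde{\bQ}_i^h\|_{\max}\le 2d_kM\delta$ and the $E$-term carries only one factor $R_{i-1}$. Your $d_kD^2M^2B^2(\Delta+B\delta)$ is too large by $R_{i-1}$ on the first term and by $MR_{i-1}$ on the second. Suppose you then merge terms crudely into a single factor $(1+R_{i-1})^k(E_{i-1}+\delta)$, as your ``track exponents crudely'' plan suggests. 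The per-block power rises from $k=4$ to $k=5$, and three blocks give $C_0^{16}D^{63}(nP)^{19}M^{99}$ in $E_3$ instead of $C_0^{13}D^{54}(nP)^{16}M^{84}$.

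(iii) Your fallback cannot repair an overshoot. Dominating mixed terms using $P,D,n,M\ge1$ can only raise exponents, never lower them.

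(iv) $\bW_E$, $\bb_E$ and $\bP$ belong to $\theta$; they are $D(P+d+2)$ of the $\mathcal N_{\mathrm{total}}$ parameters. So the correct starting bounds are $\|\bZ_0\|_{\max}\le C_0M$ and $\|\bZ_0-\widetilde{\bZ}_0\|_{\max}\le C_0\delta$, not a bound free of $M$ and unperturbed. This is where $C_0$ and the first powers of $M$ enter.

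The paper carries the sharp pieces through explicitly. The MHA contributes $46H^iD^3M^4(1+R_{i-1})^3(E_{i-1}+\delta)$ and the FFN contributes $5D^2M^2(1+R_i^{\cA})(E_i^{\cA}+\delta)$, hence $E_i\le 705(H^i)^2D^6M^8(1+R_{i-1})^4(E_{i-1}+\delta)$. Starting from $1+R_0\le 2C_0M$ and $E_0+\delta\le 2C_0\delta$, this yields $E_3\le e^{59}C_0^{13}(H^1)^{10}(H^2)^6D^{54}M^{84}\delta$ and $1+R_3\le e^{9}C_0H^1H^2D^9M^{13}$. The readout estimate $DP(ME_3+R_3\delta)$ adds a factor $DPM$. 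The bounds $H^1\le 6nP$ and $H^2\le 2nP$ turn $(H^1)^{10}(H^2)^6$ into at most $6^{10}2^6(nP)^{16}\le e^{23}(nP)^{16}$, so $L_{\mathrm{par}}=e^{82}C_0^{13}P^{17}D^{55}n^{16}M^{85}$. Finally, the grid bound $(1+2M/\delta)^{\mathcal N_{\mathrm{total}}}\le(3ML_{\mathrm{par}}/\eta)^{\mathcal N_{\mathrm{total}}}$ with $\delta=\eta/L_{\mathrm{par}}$ supplies the last factor of $M$ and the $3$ in $C_{\mathrm{cov}}$. This explicit count is the part your proposal still has to provide.
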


Lemma~\ref{lemma:icl_covering_number} is proved in
Appendix~\ref{appsub:lemma:icl_covering_number}. The second lemma establishes an
oracle inequality for the empirical risk minimizer that quantifies the
bias--variance tradeoff: the bias term is controlled by approximation error,
and the variance term by the metric entropy of the hypothesis class.

\begin{lemma}[Oracle Inequality for the Clipped Empirical Risk Minimizer]
\label{lemma:icl_oracle}
Let $\cT$ be a compact class of functions on prompts such that
$|\rmT(\fraks)|\le B_f$ for every $\rmT\in\cT$, and assume that
$|y_{n+1}|\le B_f$ almost surely. Suppose that, for some $A,V\ge1$,
\[
    \log\mathcal N(\eta,\cT,\|\cdot\|_\infty)
    \le A\log\frac{V}{\eta},
    \qquad 0<\eta\le1.
\]
Let
\[
    \widehat{\rmT}_{\frakS}\in\argmin_{\rmT\in\cT} \cL_{\frakS}(\rmT) .
\]
Here
$\frakS:=\{(\fraks^\gamma,y_{n+1}^\gamma)\}_{\gamma=1}^{\Gamma}$
denotes the meta-training dataset.

If $R\in(0,2B_f]$ satisfies
$\inf_{\rmT\in\cT}\cL(\rmT)\le R^2$,
then
\begin{equation}
\label{icl-oracle-expectation}
    \EE_{\frakS}\cL(\widehat{\rmT}_{\frakS})
    \le
    640R^2
    +640B_f^2
    \frac{A\log(640(1+B_f)V/R)+1}{\Gamma}.
\end{equation}
\end{lemma}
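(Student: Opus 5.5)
The plan is the standard bounded-loss oracle inequality, obtained through a Bernstein-type ratio concentration over a finite $\eta$-net of $\cT$, followed by a tail integration to pass to expectation. Write the excess loss of $\rmT$ against the regression function as $g_\rmT(\fraks,y):=(\rmT(\fraks)-y)^2-(\rmT_0(\fraks)-y)^2$. Here $\rmT_0(\fraks):=\EE[y_{n+1}\mid\fraks]$, so $\EE g_\rmT=\cL(\rmT)-\cL(\rmT_0)\ge0$. Since $|\rmT|,|y|\le B_f$ and $|\rmT_0|\le B_f$, we have $|g_\rmT|\le 8B_f^2$. The key variance bound is
\[
\mathrm{Var}(g_\rmT)\le\EE g_\rmT^2\le 16B_f^2\,\EE(\rmT-\rmT_0)^2=16B_f^2\,\EE g_\rmT .
\]
This Bernstein condition is what yields the fast $1/\Gamma$ rate. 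It is cleaner to work with the excess risk $\cE(\rmT):=\cL(\rmT)-\cL(\rmT_0)$ and to add back $\cL(\rmT_0)$ at the end. Since $\cL(\rmT_0)\le\inf_\cT\cL\le R^2$, the final bound on $\cL$ costs only an extra $R^2$.

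\textbf{Step 1 (decomposition).} Choose $\rmT^\dagger\in\cT$ with $\cL(\rmT^\dagger)\le 2R^2$; this is possible since $\inf\le R^2$. By the ERM property, $\cL_\frakS(\widehat\rmT)\le\cL_\frakS(\rmT^\dagger)$. This gives
\[
\cE(\widehat\rmT)\le\big[\cE(\widehat\rmT)-2\hat\cE(\widehat\rmT)\big]+2\hat\cE(\rmT^\dagger),
\]
where $\hat\cE$ is the empirical excess risk. The second term has expectation $2\cE(\rmT^\dagger)\le 4R^2$.

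\textbf{Step 2 (uniform ratio bound on a net).} Take an $\eta$-net $\{\rmT_k\}$ of $\cT$ in $\|\cdot\|_\infty$ with $\log N\le A\log(V/\eta)$. Moving from $\rmT$ to its nearest net point changes both $g$ and its empirical mean by at most $4B_f\eta$. For a fixed $k$, Bernstein's inequality combined with the variance bound gives
\[
\PP\big(\cE(\rmT_k)-2\hat\cE(\rmT_k)>t\big)\le\exp\big(-c\,\Gamma t/B_f^2\big),
\]
with an absolute constant $c$. This follows from the usual trick: rescale the deviation by $\tfrac12(t+\EE g)$, so that the variance term $16B_f^2\EE g$ is absorbed into the linear term. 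A union bound over the net then yields, for all $\rmT\in\cT$ simultaneously,
\[
\cE(\rmT)-2\hat\cE(\rmT)\le t+12B_f\eta
\]
except on a set of probability at most $N\exp(-c\Gamma t/B_f^2)$.

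\textbf{Step 3 (integration and choice of $\eta$).} Integrate the tail from the threshold $t_0=B_f^2\log N/(c\Gamma)$. This bounds $\EE[\cE(\widehat\rmT)-2\hat\cE(\widehat\rmT)]$ by $C B_f^2(A\log(V/\eta)+1)/\Gamma+12B_f\eta$. Now choose $\eta\asymp R^2/(1+B_f)$, so that $12B_f\eta\lesssim R^2$ and $\log(V/\eta)\le\log(640(1+B_f)V/R)$ after using $R\le 2B_f$. If $\eta$ would exceed $1$, cap it at $1$; this only weakens the bound. Collecting constants generously gives the stated factor $640$.

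\textbf{Main obstacle.} The main difficulty is making the Bernstein/ratio step rigorous with explicit constants. In particular, the $\eta$-discretization must not break the variance bound. To handle this, I would apply Bernstein only at net points and treat the discretization error purely additively, as above. A secondary subtlety is measurability and existence of the ERM; both are ensured by the assumed compactness of $\cT$.
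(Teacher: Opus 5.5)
Your argument is correct, but it takes a different route from the paper. You center the loss at the regression function $\rmT_0(\fraks)=\EE[y_{n+1}\mid\fraks]$ and use the Bernstein condition $\EE g_{\rmT}^2\le16B_f^2\,\EE g_{\rmT}$ for the excess loss. As a result, the comparator enters only through $\EE\,\hat{\mathcal E}(\rmT^\dagger)=\mathcal E(\rmT^\dagger)$, and you need just one uniform one-sided ratio bound over the net.

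The paper never introduces $\rmT_0$. It applies Bernstein directly to the raw loss $\psi_{\rmT}=(\rmT(\fraks)-y_{n+1})^2\in[0,4B_f^2]$, using the self-bounding variance $\operatorname{Var}(\psi_{\rmT})\le4B_f^2\cL(\rmT)$ of a nonnegative variable. Without centering nothing cancels, so the paper needs a separate tail bound for the comparator fluctuation $S_1=\cL_{\frakS}(\rmT^\circ)-\cL(\rmT^\circ)$, valid at levels $\eta\ge R^2$. It then shows $S_2\le\frac12\cL(\widehat{\rmT}_{\frakS})+\frac32\eta$ uniformly via a net at scale $\eta/(64B_f)$, concludes $\cL(\widehat{\rmT}_{\frakS})\le2R^2+5\eta$ with an exponential tail, and integrates above $2R^2+5\eta_R$.

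The paper's route is self-contained and gives a high-probability bound on $\cL(\widehat{\rmT}_{\frakS})$ itself. Yours gives the excess-risk form $\EE\cL(\widehat{\rmT}_{\frakS})\le2\cL(\rmT^\dagger)-\cL(\rmT_0)+12B_f\eta+O\bigl(B_f^2(A\log(V/\eta)+1)/\Gamma\bigr)$, in which the Bayes risk is not inflated by a constant. This is natural here, since $y_{n+1}$ is not a function of $\fraks$ and $\cL(\rmT_0)$ is generally positive. Both versions need the net points to be bounded by $B_f$ so that the variance bound applies at them. The paper takes $\cT_\eta\subset\cT$; you should do the same or clip the centers.

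One step in your Step 3 is misstated. With $\eta\asymp R^2/(1+B_f)$ you have $V/\eta\asymp(1+B_f)V/R^2$, so $\log(V/\eta)\le\log(640(1+B_f)V/R)$ fails for small $R$. What holds, and what the paper uses, is $\log(V/\eta)\le2\log(640(1+B_f)V/R)$, obtained by squaring with $V\ge1$ and $R\le2B_f$. The factor $2$ is harmless. Your Bernstein step gives $c=3/448$, so the stochastic term is at most $\frac{896}{3}B_f^2\bigl(A\log(640(1+B_f)V/R)+1\bigr)/\Gamma$, and the $R^2$-terms total at most $6R^2$. Both fit inside the stated constant $640$.
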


Lemma \ref{lemma:icl_oracle} is proved in Appendix \ref{appsub:lemma:icl_oracle}. 

\subsubsection{Proof of Theorem \ref{thm:icl_generalization}} \label{sec:generalization-proof}
Combining Lemma \ref{lemma:icl_covering_number} and Lemma \ref{lemma:icl_oracle} with Theorem \ref{thm:total_approx} gives the desired population
risk bound. We then optimize the approximation resolution $\epsilon$ to obtain the
rates on $n$ and $\Gamma$.

\begin{proof}[Proof of Theorem \ref{thm:icl_generalization}]
Define $R_{n,\epsilon}>0$ by
\[
    R_{n,\epsilon}^2
    :=4\epsilon^2+4C_{\mathrm{stat}}^2
    \frac{\epsilon^{-4}(\log(1/\epsilon))^3}{n}.
\]
Theorem~\ref{thm:total_approx} gives a Transformer
$\rmT^*\in\cT_{n,\epsilon}$. Since $|f|\le B_f$, clipping does not increase
the squared prediction error. Thus, by \eqref{population-risk},
\begin{align*}
    \inf_{\rmT\in\pi_{B_f}\cT_{n,\epsilon}}\cL(\rmT)
    &\le\cL(\pi_{B_f}\rmT^*)\le\cL(\rmT^*)\\
    &=\EE_{f\sim\rho_f}\EE_{\xb_1,\ldots,\xb_n}
    \|\rmT^*(\frakc,\cdot)-f\|_{L^2(\rhox)}^2\\
    &\le\sup_{f\in\cM_f}\EE_{\xb_1,\ldots,\xb_n}
    \|\rmT^*(\frakc,\cdot)-f\|_{L^2(\rhox)}^2
    \le R_{n,\epsilon}^2.
\end{align*}

For the application of Lemma~\ref{lemma:icl_oracle}, set
\[
    A=\mathcal N_{\mathrm{total}},\qquad
    V=C_{\mathrm{cov}}P^{17}D^{55}n^{16}M_{\max}^{86}.
\]
Lemma~\ref{lemma:icl_covering_number} gives the required entropy bound
with these $A$ and $V$.
If $R_{n,\epsilon}\le2B_f$, apply Lemma \ref{lemma:icl_oracle} with
$R=R_{n,\epsilon}$. If $R_{n,\epsilon}>2B_f$, then the clipped loss is
at most $4B_f^2<R_{n,\epsilon}^2$. Thus, in either case, since
$R_{n,\epsilon}\ge2\epsilon$,
\[
    \EE_{\frakS}\cL(\widehat{\rmT}_{\frakS})
    \le
    640R_{n,\epsilon}^2
    +640B_f^2
    \frac{\mathcal N_{\mathrm{total}}
    \log(640(1+B_f)V/\epsilon)+1}{\Gamma}.
\]
The architectural bounds in Theorem~\ref{thm:total_approx} imply
\begin{align*}
    \log\frac{640(1+B_f)V}{\epsilon}
    &\le C_V\log\frac n\epsilon,\\
    \mathcal N_{\mathrm{total}}\log\frac{640(1+B_f)V}{\epsilon}+1
    &\le(C_NC_V+1)n^2\epsilon^{-q}
    \left(\log\frac n\epsilon\right)^{d_x/\alpha+1},
\end{align*}
where we use $D\le(d+7)n$,
$\log\log(1/\epsilon)\le\log(1/\epsilon)\le\log(n/\epsilon)$,
and $\log(n/\epsilon),n^2\epsilon^{-q}\ge1$, with
\begin{align*}
    K_V&:=640(1+B_f)C_{\mathrm{cov}}
    C_P^{17}C_{\mathrm{mag}}^{86}(d+7)^{55},\\
    C_V&:=\max\{\log K_V,0\}+72+17q+86q_M+\frac{17d_x}{\alpha}
    +86\left(1+\frac2\alpha+\frac{4d_x}{\alpha}\right).
\end{align*}
Substituting these estimates into the risk bound gives
\begin{align*}
    \EE_{\frakS}\cL(\widehat{\rmT}_{\frakS})
    &\le2560\left[\epsilon^2+C_{\mathrm{stat}}^2
    \frac{\epsilon^{-4}(\log(1/\epsilon))^3}{n}\right]
    +640B_f^2(C_NC_V+1)
    \frac{n^2\epsilon^{-q}(\log(n/\epsilon))^{d_x/\alpha+1}}{\Gamma}\\
    &\le C_{\mathrm{gen}}\left[
    \epsilon^2+\frac{\epsilon^{-4}(\log(1/\epsilon))^3}{n}
    +\frac{n^2\epsilon^{-q}(\log(n/\epsilon))^{d_x/\alpha+1}}{\Gamma}
    \right],
\end{align*}
where we substitute the definition of $R_{n,\epsilon}^2$ and set
\begin{equation}
\label{Cgen-def}
    C_{\mathrm{gen}}:=2560(1+C_{\mathrm{stat}}^2)
    +640B_f^2(C_NC_V+1)\left(\frac76\right)^{d_x/\alpha+1}.
\end{equation}
This proves \eqref{icl-generalization-bound}.

For the optimized statement, take
\[
    \epsilon:=\max\left\{
    \left(\frac{(\log n)^3}{n}\right)^{1/6},
    \left(\frac{n^2(\log n)^{d_x/\alpha+1}}{\Gamma}\right)^{1/(q+2)}
    \right\},
\]
which is admissible by hypothesis. The preceding explicit bound gives
\begin{align*}
    \EE_{\frakS}\cL(\widehat{\rmT}_{\frakS})
    &\le\left[2560\left(1+\frac{C_{\mathrm{stat}}^2}{6^3}\right)
    +640B_f^2(C_NC_V+1)\left(\frac76\right)^{d_x/\alpha+1}\right]\epsilon^2\\
    &\quad\le C_{\mathrm{gen}}\epsilon^2
    =C_{\mathrm{gen}}\max\left\{
    \left(\frac{(\log n)^3}{n}\right)^{1/3},
    \left(\frac{n^2(\log n)^{d_x/\alpha+1}}{\Gamma}\right)^{2/(q+2)}
    \right\},
\end{align*}
where we use $\log(1/\epsilon)\le\frac16\log n$,
$\log(n/\epsilon)\le\frac76\log n$, the two lower bounds defining
$\epsilon$, and \eqref{Cgen-def}.
This proves \eqref{icl-generalization-optimized}.
\end{proof}

\section{Conclusion}
We developed an approximation and generalization theory for Transformer-based
ICL that quantifies how shared cross-task structure can improve context sample
efficiency. We propose a task-space covering to provide a geometric description of general,
possibly nonlinear, task families without requiring an explicit
parametrization, while a dense three-block Transformer can realize a task-identification-and-evaluation method through two-level
Softmax POU to achieve efficient approximation error. Our generalization
bound separates the roles of number of pretraining tasks and context
length: with sufficiently rich pretraining, the dependence on context length
has a dimension-independent exponent. This formalizes how information learned
across tasks and stored in model parameters can reduce the amount of
task-specific information required from each new prompt.
Promising future directions include extending the framework beyond uniform task
regularity, and developing Transformer architectures
and theory that can accommodate varying context lengths within a single model.

\section*{Acknowledgments}
Zhongjie Shi and Wenjing Liao acknowledge support from the National Science Foundation under the NSF DMS 2145167 and the U.S. Department of Energy under the DOE SC0024348. Alex Cloninger acknowledges support from the National Science Foundation under the NSF CISE 2403452, NSF DMS 2608292, and a fellowship from the Simons Foundation. Rongjie Lai acknowledges support from the National Science Foundation under the NSF DMS 2401297. 

\section*{Declaration of AI-assisted technologies in the manuscript preparation process}
During the preparation of this work the authors used  ChatGPT (OpenAI) in order to assist with the aspects of the numerical implementation in Table \ref{tab:full-vs-two}, to plot illustrative figures and to improve the language and readability of the manuscript. After using this tool, the authors reviewed and edited the content as needed and take full responsibility for the content of the published article.

\appendix

\section*{Appendix}

\section{Proofs for the Oracle Approximation Scheme}\label{app:oracle-form}

This section establishes the proof of oracle approximation scheme introduced in
Section~\ref{sec:oracle-approximant}. We prove the task-space and input-domain POU approximation
bounds in Lemmas~\ref{lemma:outer_pou} and \ref{lemma:inner_pou}, derive the
empirical task-identification score bound in
Lemma~\ref{lemma:emp_inner}, and
then prove the error bound of two-level Softmax POU oracle in
Proposition~\ref{prop:joint_softmax}. The arguments use the following Softmax
Lipschitz estimate \citep[Corollary A.7]{edelman2022inductive}.
\begin{lemma} \label{Softmaxlipchitz}
    For any $\btheta, \btheta' \in \RR^d$, we have
    \begin{equation*}
        \left\| \softmax(\btheta)- \softmax(\btheta') \right\|_1 \leq 2 \|\btheta- \btheta'\|_\infty.
    \end{equation*}
\end{lemma}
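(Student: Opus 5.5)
The plan is to prove Lemma~\ref{Softmaxlipchitz} by integrating the Jacobian of $\softmax$ along the straight segment from $\btheta'$ to $\btheta$. Set $\bv:=\btheta-\btheta'$ and $\btheta(t):=\btheta'+t\bv$ for $t\in[0,1]$, and write $\bp(t):=\softmax(\btheta(t))$. Since $\softmax$ is smooth, the fundamental theorem of calculus gives
\[
    \softmax(\btheta)-\softmax(\btheta')=\int_0^1 \frac{d}{dt}\bp(t)\,dt .
\]
By the triangle inequality for the Bochner integral, it then suffices to bound $\|\tfrac{d}{dt}\bp(t)\|_1$ by $2\|\bv\|_\infty$ uniformly in $t$.

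First I compute the Jacobian. Differentiating $p_i=e^{\theta_i}/\sum_k e^{\theta_k}$ gives $\partial p_i/\partial\theta_j=p_i(1_{\{i=j\}}-p_j)$. Hence
\[
    \frac{d}{dt}p_i(t)=p_i(t)\big(v_i-\langle\bp(t),\bv\rangle\big),
\]
that is, $\frac{d}{dt}\bp=(\mathrm{diag}(\bp)-\bp\bp^\top)\bv$.

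Next I bound the $\ell_1$ norm, using only that $\bp(t)$ is a probability vector. This gives $|\langle\bp,\bv\rangle|\le\|\bv\|_\infty$, and therefore
\[
    \Big\|\frac{d}{dt}\bp\Big\|_1=\sum_i p_i\,|v_i-\langle\bp,\bv\rangle|
    \le\sum_i p_i\big(|v_i|+|\langle\bp,\bv\rangle|\big)\le 2\|\bv\|_\infty .
\]
Integrating over $t\in[0,1]$ yields the claim.

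There is no real obstacle; the only point needing care is the Jacobian formula and its sign structure. If a sharper constant were wanted, one could exploit the shift invariance $\softmax(\btheta+c\bm{1})=\softmax(\btheta)$ and replace $\bv$ by $\bv-c\bm 1$ for a suitable $c$. The stated constant $2$ does not require this refinement.
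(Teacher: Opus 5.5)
Your proof is correct and complete. The Jacobian $\mathrm{diag}(\bm p)-\bm p\bm p^\top$ and the estimate $\sum_i p_i|v_i-\langle\bm p,\bm v\rangle|\le2\|\bm v\|_\infty$ together give the stated bound once you integrate along the segment. That estimate uses only that $\bm p(t)$ is a probability vector.

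The paper does not prove this lemma itself; it imports it from \citep[Corollary A.7]{edelman2022inductive}. Your argument is therefore a self-contained substitute for that citation. It takes a few lines and shows directly that the constant does not depend on the dimension.

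One correction to your closing aside, which does not affect the proof. Shift invariance alone only improves the bound to the oscillation $\max_i v_i-\min_i v_i$. For $\bm v=(1,-1)$ this equals $2\|\bm v\|_\infty$, so shift invariance does not lower the constant. The sharp constant $1$ comes instead from bounding the mean absolute deviation:
$\sum_i p_i|v_i-\langle\bm p,\bm v\rangle|\le\bigl(\sum_i p_i(v_i-\langle\bm p,\bm v\rangle)^2\bigr)^{1/2}\le\tfrac12(\max_i v_i-\min_i v_i)\le\|\bm v\|_\infty$.
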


\subsection{Proof of Lemma \ref{lemma:outer_pou}}
\label{appsub:prooflemma:outer_pou}
We first establish the task-space POU approximation in Lemma \ref{lemma:outer_pou}.  The proof of Lemma \ref{lemma:outer_pou} rewrites the
distance-based task-identification weights in affine form and separates the
contributions of nearby and distant anchor functions.

\begin{proof}[Proof of Lemma \ref{lemma:outer_pou}]
By Assumption \ref{assum:func_space}, choose an $r_f$-cover
$\{h_l\}_{l=1}^{C_f}\subset\cM_f$ in $L^2(\rhox)$ such that
$C_f\le C_{\cM}r_f^{-d_f}$.
Fix $f\in\cM_f$. For every $l\in[C_f]$,
\begin{align*}
    M_f\left(r_f^2-\|f-h_l\|_{L^2(\rhox)}^2\right)
    &=M_f\left(r_f^2-\|f\|_{L^2(\rhox)}^2\right)
    +2M_f\langle f,h_l\rangle_{L^2(\rhox)}
    -M_f\|h_l\|_{L^2(\rhox)}^2.
\end{align*}
Therefore,
\[
    \beta_l(f)
    =\frac{\exp\left(2M_f\langle f,h_l\rangle_{L^2(\rhox)}
    -M_f\|h_l\|_{L^2(\rhox)}^2\right)}
    {\sum_{k=1}^{C_f}\exp\left(2M_f\langle f,h_k\rangle_{L^2(\rhox)}
    -M_f\|h_k\|_{L^2(\rhox)}^2\right)}.
\]
Choose $l^*\in\argmin_{l\in[C_f]}\|f-h_l\|_{L^2(\rhox)}$.
Since $\{h_l\}_{l=1}^{C_f}$ is an $r_f$-cover, we have
$\|f-h_{l^*}\|_{L^2(\rhox)}\le r_f$.
The normalizing denominator in the distance form of $\beta_l(f)$ satisfies
\[
    \sum_{k=1}^{C_f}
    \exp\left(M_f(r_f^2-\|f-h_k\|_{L^2(\rhox)}^2)\right)
    \ge\exp\left(M_f(r_f^2-\|f-h_{l^*}\|_{L^2(\rhox)}^2)\right)
    \ge1.
\]
Let
\[
    \mathcal J_f:=\{l\in[C_f]:\|f-h_l\|_{L^2(\rhox)}\le2r_f\}.
\]
Then
\begin{align*}
    \|\tilde f_{r_f}(f,\cdot)-f\|_{L^2(\rhox)}
    &=\left\|\sum_{l=1}^{C_f}\beta_l(f)(h_l-f)\right\|_{L^2(\rhox)}\\
    &\le\sum_{l\in\mathcal J_f}\beta_l(f)\|h_l-f\|_{L^2(\rhox)}
    +\sum_{l\notin\mathcal J_f}\beta_l(f)\|h_l-f\|_{L^2(\rhox)}.
\end{align*}
For $l\in\mathcal J_f$, $\|h_l-f\|_{L^2(\rhox)}\le2r_f$, so
\[
    \sum_{l\in\mathcal J_f}\beta_l(f)\|h_l-f\|_{L^2(\rhox)}
    \le2r_f\sum_{l\in\mathcal J_f}\beta_l(f)
    \le2r_f.
\]
For $l\notin\mathcal J_f$,
\[
    r_f^2-\|f-h_l\|_{L^2(\rhox)}^2<-3r_f^2,
    \qquad
    \beta_l(f)\le\exp(-3M_fr_f^2).
\]
Since $\|h_l-f\|_{L^2(\rhox)}\le2B_f$,
\[
    \sum_{l\notin\mathcal J_f}\beta_l(f)\|h_l-f\|_{L^2(\rhox)}
    \le2B_fC_f\exp(-3M_fr_f^2)
    \le r_f.
\]
Here the last inequality follows from
\[
    M_f\ge\frac{1}{3r_f^2}\log\frac{2B_fC_f}{r_f}.
\]
Combining the two bounds yields
\[
    \|\tilde f_{r_f}(f,\cdot)-f\|_{L^2(\rhox)}
    \le2r_f+r_f=3r_f.
\]
This proves \eqref{outer-pou-error}.
\end{proof}

\subsection{Proof of Lemma \ref{lemma:inner_pou}}
\label{appsub:lemma:inner_pou}
We next apply the same localization argument on the input domain.

\begin{proof}[Proof of Lemma \ref{lemma:inner_pou}]
By Assumption \ref{assum:input_space}, choose an $r_x$-cover
$\{\zb_m\}_{m=1}^{C_x}\subset\cMx$ under the ambient Euclidean norm
$\|\cdot\|_2$ such that
$
    C_x\le C_2r_x^{-d_x}.
$
Fix $l\in[C_f]$ and $\xb\in\cMx$. For every $m\in[C_x]$,
\begin{align*}
    M_x\left(r_x^2-\|\xb-\zb_m\|_2^2\right)
    &=
    M_x\left(r_x^2-\|\xb\|_2^2\right)
    +2M_x\langle\xb,\zb_m\rangle
    -M_x\|\zb_m\|_2^2.
\end{align*}
Therefore,
\begin{align*}
    \eta_m(\xb)
    &=
    \frac{
    \exp\left(M_x(r_x^2-\|\xb\|_2^2)\right)
    \exp\left(2M_x\langle\xb,\zb_m\rangle-M_x\|\zb_m\|_2^2\right)}
    {\sum_{m'=1}^{C_x}
    \exp\left(M_x(r_x^2-\|\xb\|_2^2)\right)
    \exp\left(2M_x\langle\xb,\zb_{m'}\rangle
    -M_x\|\zb_{m'}\|_2^2\right)}\\
    &=
    \frac{
    \exp\left(2M_x\langle\xb,\zb_m\rangle-M_x\|\zb_m\|_2^2\right)}
    {\sum_{m'=1}^{C_x}
    \exp\left(2M_x\langle\xb,\zb_{m'}\rangle
    -M_x\|\zb_{m'}\|_2^2\right)}.
\end{align*}
Choose
$  m^*\in\argmin_{m\in[C_x]}\|\xb-\zb_m\|_2.
$
Since $\{\zb_m\}_{m=1}^{C_x}$ is an $r_x$-cover under $\|\cdot\|_2$, we have
$    \|\xb-\zb_{m^*}\|_2\le r_x.
$
Since $M_x\ge1$, the normalizing denominator in the distance form of
$\eta_m(\xb)$ satisfies
\[
    \sum_{m'=1}^{C_x}
    \exp\left(M_x(r_x^2-\|\xb-\zb_{m'}\|_2^2)\right)
    \ge
    \exp\left(M_x(r_x^2-\|\xb-\zb_{m^*}\|_2^2)\right)
    \ge1.
\]
Let
\[
    \mathcal J_\xb:=\{m\in[C_x]:\|\xb-\zb_m\|_2\le 2r_x\}.
\]
Then
\begin{align*}
    |\bar h_{l,r_x}(\xb)-h_l(\xb)|
    &=
    \left|\sum_{m=1}^{C_x}\eta_m(\xb)(h_l(\zb_m)-h_l(\xb))\right|\\
    &\le
    \sum_{m\in\mathcal J_\xb}\eta_m(\xb)|h_l(\zb_m)-h_l(\xb)|
    +\sum_{m\notin\mathcal J_\xb}\eta_m(\xb)|h_l(\zb_m)-h_l(\xb)|.
\end{align*}
For $m\in\mathcal J_\xb$,
\[
    |h_l(\zb_m)-h_l(\xb)|
    \le L_f\|\zb_m-\xb\|_2^\alpha
    \le L_f(2r_x)^\alpha
    \le 2L_fr_x^\alpha.
\]
Thus the near part is at most $2L_fr_x^\alpha$. For $m\notin\mathcal J_\xb$,
\[
    r_x^2-\|\xb-\zb_m\|_2^2<-3r_x^2,
    \qquad
    \eta_m(\xb)\le \exp(-3M_xr_x^2).
\]
Since $|h_l(\zb_m)-h_l(\xb)|\le2B_f$,
\[
    \sum_{m\notin\mathcal J_\xb}\eta_m(\xb)|h_l(\zb_m)-h_l(\xb)|
    \le
    2B_fC_x\exp(-3M_xr_x^2)
    \le L_fr_x^\alpha.
\]
Here the last inequality follows from
\[
    M_x\ge
    \frac{1}{3r_x^2}
    \log\frac{2B_fC_x}{L_fr_x^\alpha}.
\]
Combining the two bounds and taking the supremum over $\xb\in\cMx$ yields
\[
    \|\bar h_{l,r_x}-h_l\|_{L^\infty(\cMx)}
    =\sup_{\xb\in\cMx}|\bar h_{l,r_x}(\xb)-h_l(\xb)|
    \le 3L_fr_x^\alpha.
\]
This proves \eqref{inner-pou-error}.
\end{proof}

\subsection{Proof of Lemma \ref{lemma:emp_inner}}
\label{appsub:lemma:emp_inner}
We approximate the population inner products in the task-identification
weights by their empirical counterparts computed from the finite context.
For each fixed task, the approximation error is controlled uniformly over
the anchor functions by combining Hoeffding's inequality with a union bound,
together with the input-domain POU approximation error.

\begin{proof}[Proof of Lemma \ref{lemma:emp_inner}]
Fix $f\in\cM_f$ and $\delta\in(0,1)$ with $r_f\delta\le e^{-1}$. Write
\[
    P_nu=\frac1n\sum_{i=1}^n u(\xb_i),\qquad
    Pu=\int u\,d\rhox.
\]
For each $l\in[C_f]$, the random variables $f(\xb_i)h_l(\xb_i)$
are independent and lie in $[-B_f^2,B_f^2]$. Hoeffding's inequality
and a union bound over $l\in[C_f]$ give, for every $t>0$,
\[
    \PP\left\{\max_{l\in[C_f]}|(P_n-P)(fh_l)|>t\right\}
    \le\sum_{l=1}^{C_f}\PP\left\{|(P_n-P)(fh_l)|>t\right\}
    \le2C_f\exp\left(-\frac{nt^2}{2B_f^4}\right).
\]
Taking $t=B_f^2\sqrt{2\log(2C_f/\delta)/n}$, we obtain, with probability
at least $1-\delta$, for the fixed $f$,
\begin{align*}
    &\max_{l\in[C_f]}
    \left|\langle f,\bar h_{l,r_x}\rangle_n
    -\langle f,h_l\rangle_{L^2(\rhox)}\right|
    =\max_l\left|(P_n-P)(fh_l)
    +P_n\bigl(f(\bar h_{l,r_x}-h_l)\bigr)\right|\\
    &\quad\le\max_l|(P_n-P)(fh_l)|
    +B_f\max_l\|\bar h_{l,r_x}-h_l\|_{L^\infty(\cMx)}\\
    &\quad\le3B_fL_fr_x^\alpha
    +B_f^2\sqrt{\frac{2\log(2C_f/\delta)}{n}}\\
    &\quad\le3B_fL_fr_x^\alpha
    +B_f^2\sqrt{2\big(d_f+1+\log(2C_{\cM})\big)}\,
    \sqrt{\frac{\log(1/(r_f\delta))}{n}},
\end{align*}
where the first inequality uses the triangle inequality and $|f|\le B_f$;
the second uses Lemma~\ref{lemma:inner_pou} and the preceding Hoeffding bound;
and the last uses $C_f\le C_{\cM}r_f^{-d_f}$ from \eqref{eq:Cf} and
$\log(1/(r_f\delta))\ge1$.
This proves \eqref{empirical-error-bound}.
\end{proof}

\subsection{Proof of Proposition \ref{prop:joint_softmax}}
\label{appsub:prop:joint_softmax}
We can now combine task identification, input-domain evaluation, and the
empirical approximation bound. The following proof decomposes the resulting oracle error into the three
error terms controlled by the preceding lemmas.

\begin{proof}[Proof of Proposition \ref{prop:joint_softmax}]
The additive structure of the joint logits gives
$\gamma_{l,m}(\fraks)=\hat\beta_{l,n}(\frakc)\eta_m(\xb_{n+1})$.
Substituting this identity into the definition of the oracle yields the
joint Softmax representation.

For the error bound, for each $f\in\cM_f$ we work on the event of probability
at least $1-\delta$ in Lemma \ref{lemma:emp_inner}.
By Lemma \ref{Softmaxlipchitz},
\[
    \sum_{l=1}^{C_f}|\hat\beta_{l,n}(\frakc)-\beta_l(f)|
    \le4M_f\max_{l\in[C_f]}
    \left|\langle f,\bar h_{l,r_x}\rangle_n
    -\langle f,h_l\rangle_{L^2(\rhox)}\right|.
\]
The weights are constant with respect to the query, so
\begin{align*}
    &\|\hat f_{r_f,r_x,n}(\frakc,\cdot)-f\|_{L^2(\rhox)}={}
    \Biggl\|\sum_{l=1}^{C_f}\Bigl[
    \hat\beta_{l,n}(\frakc)(\bar h_{l,r_x}-h_l)
    +\bigl(\hat\beta_{l,n}(\frakc)-\beta_l(f)\bigr)h_l\Bigr]
    +\sum_{l=1}^{C_f}\beta_l(f)h_l-f\Biggr\|_{L^2(\rhox)}\\
    &\quad\le
    \biggl\|\sum_{l=1}^{C_f}\hat\beta_{l,n}(\frakc)
    (\bar h_{l,r_x}-h_l)\biggr\|_{L^2(\rhox)}
    +\biggl\|\sum_{l=1}^{C_f}
    \bigl(\hat\beta_{l,n}(\frakc)-\beta_l(f)\bigr)h_l\biggr\|_{L^2(\rhox)}
    +\|\tilde f_{r_f}(f,\cdot)-f\|_{L^2(\rhox)}\\
    &\quad\le 3L_fr_x^\alpha+3r_f
    +B_f\sum_{l=1}^{C_f}|\hat\beta_{l,n}(\frakc)-\beta_l(f)|\\
    &\quad\le3r_f+3L_f(1+4B_f^2M_f)r_x^\alpha
    +4B_f^3M_f\sqrt{2\big(d_f+1+\log(2C_{\cM})\big)}\,
    \sqrt{\frac{\log(1/(r_f\delta))}{n}},
\end{align*}
where the first inequality is the triangle inequality in $L^2(\rhox)$;
the second follows from Lemmas~\ref{lemma:outer_pou} and~\ref{lemma:inner_pou},
using $\hat\beta_{l,n}(\frakc)\ge0$,
$\sum_l\hat\beta_{l,n}(\frakc)=1$, $\|h_l\|_{L^2(\rhox)}\le B_f$,
and $\|\cdot\|_{L^2(\rhox)}\le\|\cdot\|_{L^\infty(\cMx)}$
(since $\rhox$ is a probability measure);
and the third combines the preceding Softmax Lipschitz estimate with
Lemma~\ref{lemma:emp_inner}.
This proves \eqref{joint-estimator-error}.
\end{proof}

\section{Proofs of the Transformer Construction Lemmas}\label{app:transformer-construction}

This section proves the construction lemmas used in
Theorem~\ref{thm:total_approx}. Starting from the preprocessed prompt
representation in Lemma~\ref{lemma:preprocessing}, we successively construct
approximations of the features required for task identification and query
localization, input-domain Softmax POU, assemble the joint logits, and task-space Softmax POU, to finally realize the approximation of the two-level Softmax POU oracle. The corresponding steps are carried
out in Lemmas~\ref{lemma_1stblock}, \ref{lemma_2ndblock},
\ref{lemma_2ndblock_ffn}, and \ref{lemma_final_block}, while tracking the
approximation errors needed for the final theorem.

\begin{proof}[Proof of Lemma \ref{lemma:preprocessing}]
Let $\bS \in \RR^{D \times P}$ be the sequence matrix:
$$
\bS = \begin{bmatrix}
    \xb_1 & \xb_2 & \cdots & \xb_n & \xb_{n+1} & \bm{0} & \cdots & \bm{0} & \bm{0} \\
    y_1 & y_2 & \cdots & y_n & 0 & 0 & \cdots & 0 & 0 \\
    \multicolumn{9}{c}{\bm{0}_{(2n+8)\times P}}
\end{bmatrix} \in \RR^{D \times P}.
$$
Let the structural and positional encoding matrix $\bP \in \RR^{D \times P}$ be:
$$
\bP = \begin{bmatrix}
    \multicolumn{9}{c}{\bm{0}_{(d+1)\times P}} \\
    0 & 0 & \cdots & 0 & 0 & 0 & \cdots & 0 & 1 \\
    \multicolumn{9}{c}{\bm{0}_{(2n+1) \times P}} \\
    1 & 1 & \cdots & 1 & 1 & 1 & \cdots & 1 & 1 \\
    1 & 1 & \cdots & 1 & 1 & 1 & \cdots & 1 & 0 \\
    \multicolumn{3}{c}{\underbrace{\sin(\phi_1) \ \cdots \ \sin(\phi_1)}_{C_x}} & \multicolumn{3}{c}{\underbrace{\sin(\phi_2) \ \cdots \ \sin(\phi_2)}_{C_x}} & \cdots & \underbrace{\sin(\phi_{C_f}) \ \cdots \ \sin(\phi_{C_f})}_{C_x} & 0 \\
    \multicolumn{3}{c}{\underbrace{\cos(\phi_1) \ \cdots \ \cos(\phi_1)}_{C_x}} & \multicolumn{3}{c}{\underbrace{\cos(\phi_2) \ \cdots \ \cos(\phi_2)}_{C_x}} & \cdots & \underbrace{\cos(\phi_{C_f}) \ \cdots \ \cos(\phi_{C_f})}_{C_x} & 0 \\
    \sin(\theta_1) & \sin(\theta_2) & \cdots & \cdots & \cdots & \cdots & \cdots & \sin(\theta_{P-1}) & \sin(\theta_P) \\
    \cos(\theta_1) & \cos(\theta_2) & \cdots & \cdots & \cdots & \cdots & \cdots & \cos(\theta_{P-1}) & \cos(\theta_P)
\end{bmatrix}.
$$
Let $\bX(\fraks)\in\RR^{(d+1)\times P}$ be the matrix formed by the
first $d+1$ rows of $\bS$. With
\[
    \bW_E=
    \begin{bmatrix}
        \bI_{d+1}\\
        \bm{0}_{(D-d-1)\times(d+1)}
    \end{bmatrix},
    \qquad
    \bb_E=\bm{0}_D,
\]
the shared token-wise affine embedding gives
$\bS=\bW_E\bX(\fraks)+\bb_E\bm{1}_P^\top$. Adding the fixed
structural-positional matrix $\bP$ therefore yields
$\cP(\fraks)=\bS+\bP=\bZ_0$.
\end{proof}

Starting from this representation, the first MHA layer extracts all required
affine feature rows in parallel. Sinusoidal positional rows localize both the
source prompt columns and the output columns, allowing the extracted features
to be routed to the active anchor-pair tokens.

\begin{proof}[Proof of Lemma \ref{lemma_1stblock}]
    Set
    \[
        M_s=\frac{M}{c},\qquad
        M_\theta=\frac{4M}{c}.
    \]
    Let $\chi_t:=1_{\{t<P\}}$ be row $D-4$ of $\bZ_0$, and let $\nu_t:=1_{\{t=P\}}$ be row $d+2$ of $\bZ_0$.
    Row $D-5$ is the all-one row, while row $D-4$ equals $(1,\ldots,1,0)$; the latter is used in the constant feature terms below so that the value at the null source column is zero.

    \textbf{Step 1: Feature extraction heads ($h\in[(2n+3)P]$).}
    In this step, the head $h$ fixes the output row $r$ and the output column $j$. The indices $t$ and $j'$ are used for rows and columns of the attention matrix.
    For $h=(r-1)P+j$, where $r\in[2n+3]$ and $j\in[P]$, define the source prompt column
    \[
        i_r=
        \begin{cases}
            r, & r\in[n+1],\\
            r-(n+1), & r\in\{n+2,\ldots,2n+1\},\\
            1, & r\in\{2n+2,2n+3\}.
        \end{cases}
    \]
    Set
    \[
        M_0:=M_\theta\Big(1-\frac c2\Big)+M_s.
    \]
    Hence this step uses exactly $(2n+3)P$ heads.
    Choose $\bw_{r,j}^\top\in\RR^{1\times D}$ by
    \[
        \bw_{r,j}^\top=
        \begin{cases}
            2M_x\sum_{a=1}^d(\zb_{m_j})_a\be_a^\top
            -M_x\|\zb_{m_j}\|_2^2\be_{D-4}^\top,
            & r\in[n+1],\ j<P,\\
            M_{null}\be_{D-4}^\top,
            & r\in[n+1],\ j=P,\\
            h_{\ell_j}(\zb_{m_j})\be_{d+1}^\top,
            & r\in\{n+2,\ldots,2n+1\},\ j<P,\\
            \bm{0}_{1\times D},
            & r\in\{n+2,\ldots,2n+1\},\ j=P,\\
            h_{\ell_j}(\zb_{m_j})\be_{D-4}^\top,
            & r=2n+2,\ j<P,\\
            \bm{0}_{1\times D},
            & r=2n+2,\ j=P,\\
            M_f\|h_{\ell_j}\|_{L^2(\rhox)}^2\be_{D-4}^\top,
            & r=2n+3,\ j<P,\\
            \bm{0}_{1\times D},
            & r=2n+3,\ j=P.
        \end{cases}
    \]
    Denote the affine features by
    \[
        F_{r,j}:=
        \bw_{r,j}^\top(\bZ_0)_{:,i_r}.
    \]
    Then
    \[
        F_{r,j}=
        \begin{cases}
            U_{r,m_j}, & r\in[n+1],\ j<P,\\
            M_{null}, & r\in[n+1],\ j=P,\\
            Y_{i_r,\ell_j,m_j}, & r\in\{n+2,\ldots,2n+1\},\ j<P,\\
            0, & r\in\{n+2,\ldots,2n+1\},\ j=P,\\
            H_{\ell_j,m_j}, & r=2n+2,\ j<P,\\
            0, & r=2n+2,\ j=P,\\
            V_{\ell_j}, & r=2n+3,\ j<P,\\
            0, & r=2n+3,\ j=P.
        \end{cases}
    \]
    Thus $F_{r,j}$ is the ideal affine feature that the first MHA layer targets at row $r$ and column $j$; the first $2n+3$ rows of \eqref{Z1hat} are the corresponding attention-generated approximations of $F_{r,j}$. Notice that $\cMx \subset[0,1]^d$ and $\xb_t=\bm{0}$ and $y_t=0$ for $t>n+1$, for $j<P$ we have
    \[
        \left|2M_x\langle \xb_t,\zb_{m_j}\rangle
        -M_x\|\zb_{m_j}\|_2^2\chi_t\right|
        \le M_x\sum_{a=1}^d
        \left|(\zb_{m_j})_a\big(2(\xb_t)_a-(\zb_{m_j})_a\chi_t\big)\right|
        \le dM_x ,
    \]
    Hence, for every $t\in[P]$,
    \[
        |\bw_{r,j}^\top(\bZ_0)_{:,t}|\le
        \begin{cases}
            \max\{M_{null},dM_x\}, & r\le n+1,\\
            B_f^2, & n+2\le r\le 2n+1,\\
            B_f, & r=2n+2,\\
            M_fB_f^2, & r=2n+3,
        \end{cases}
    \]
    We construct the query and key matrices as
    \[
        \bQ_1^h=
        \begin{bmatrix}
            \be_{D-5}^\top\\
            \be_{D-1}^\top\\
            \be_D^\top\\
            M_s\sin(\theta_{i_r})\be_{D-5}^\top\\
            M_s\cos(\theta_{i_r})\be_{D-5}^\top
        \end{bmatrix} \in \RR^{5\times D},
        \qquad
        \bK_1^h=
        \begin{bmatrix}
            M_0\be_{d+2}^\top\\
            M_\theta\sin(\theta_j)\be_{D-4}^\top\\
            M_\theta\cos(\theta_j)\be_{D-4}^\top\\
            \be_{D-1}^\top-\sin(\theta_P)\be_{d+2}^\top\\
            \be_D^\top-\cos(\theta_P)\be_{d+2}^\top
        \end{bmatrix} \in \RR^{5\times D}
    \]
    where $\bQ_1^h,\bK_1^h\in\RR^{5\times D}$.
    It follows that
    \[
        \bq_{j'}^h=(\bQ_1^h\bZ_0)_{:,j'}
        =
        \begin{bmatrix}
            1\\ \sin(\theta_{j'})\\ \cos(\theta_{j'})\\
            M_s\sin(\theta_{i_r})\\
            M_s\cos(\theta_{i_r})
        \end{bmatrix}
        \in\RR^5,
    \]
    and
    \[
        \bk_t^h=(\bK_1^h\bZ_0)_{:,t}
        =
        \begin{bmatrix}
            0\\ M_\theta\sin(\theta_j)\\ M_\theta\cos(\theta_j)\\
            \sin(\theta_t)\\ \cos(\theta_t)
        \end{bmatrix},
        \quad t<P,
        \qquad
        \bk_P^h=(\bK_1^h\bZ_0)_{:,P}
        =
        \begin{bmatrix}
            M_0\\0\\0\\0\\0
        \end{bmatrix}.
    \]
    Thus
    \[
        s_{t,j'}^h=
        \begin{cases}
            M_\theta\cos(\theta_{j'}-\theta_j)
            +M_s\cos(\theta_t-\theta_{i_r}),
            & t<P,\\
            M_0, & t=P.
        \end{cases}
    \]
    The source-column locating term satisfies
    \begin{align*}
        M_s\cos(\theta_t-\theta_{i_r})&=M_s,
        && t=i_r,\\
        M_s\cos(\theta_t-\theta_{i_r})&\le M_s(1-c),
        && t<P,\ t\ne i_r.
    \end{align*}
    The output-column locating term satisfies
    \begin{align*}
        M_\theta\cos(\theta_{j'}-\theta_j)&=M_\theta,
        && j'=j,\\
        M_\theta\cos(\theta_{j'}-\theta_j)&\le M_\theta(1-c),
        && j'\ne j.
    \end{align*}
    Choose value matrix as
    \[
        \bV_1^h=
        \begin{bmatrix}
            \bw_{r,j}^\top\\
            \bm{0}_{1\times D}
        \end{bmatrix}
        \in\RR^{2\times D}.
    \]
    Therefore
    \[
        (\bV_1^h\bZ_0)_{:,t}
        =
        \begin{bmatrix}
            \bw_{r,j}^\top(\bZ_0)_{:,t}\\0
        \end{bmatrix}
        \in\RR^2,
        \qquad
        |(\bV_1^h\bZ_0)_{1,t}|\le
        \begin{cases}
            \max\{M_{null},dM_x\}, & r\le n+1,\\
            B_f^2, & n+2\le r\le 2n+1,\\
            B_f, & r=2n+2,\\
            M_fB_f^2, & r=2n+3.
        \end{cases}
    \]
    We separate the target and non-target attention columns.
    \begin{itemize}
        \item \textbf{Target column $j'=j$.} For $t<P$ and $t\ne i_r$,
        \begin{align*}
            s_{i_r,j}^h-s_{t,j}^h
            &=M_s\big(1-\cos(\theta_t-\theta_{i_r})\big)
            \ge cM_s=M,\\
            s_{i_r,j}^h-s_{P,j}^h
            &=M_\theta+M_s-M_0
            =\frac c2M_\theta=2M.
        \end{align*}
        Therefore
        \[
            \sum_{t\ne i_r}(\bA_1^h)_{t,j}
            \le
            \sum_{t\ne i_r}e^{s_{t,j}^h-s_{i_r,j}^h}
            \le P e^{-M}.
        \]
        Hence
        \begin{align*}
            \left|\left(\bV_1^h\bZ_0\bA_1^h\right)_{1,j}-F_{r,j}\right|
            &=\left|
            \sum_{t=1}^P(\bA_1^h)_{t,j}\bw_{r,j}^\top(\bZ_0)_{:,t}
            -\bw_{r,j}^\top(\bZ_0)_{:,i_r}
            \right|\\
            &=\left|
            \sum_{t\ne i_r}(\bA_1^h)_{t,j}
            \left(\bw_{r,j}^\top(\bZ_0)_{:,t}
            -\bw_{r,j}^\top(\bZ_0)_{:,i_r}\right)
            \right|\\
            &\le
            \begin{cases}
                2P\max\{M_{null},dM_x\}e^{-M}, & r\le n+1,\\
                2P B_f^2 e^{-M}, & n+2\le r\le 2n+1,\\
                2P B_f e^{-M}, & r=2n+2,\\
                2P M_f B_f^2 e^{-M}, & r=2n+3.
            \end{cases}
        \end{align*}

        \item \textbf{Non-target columns $j'\ne j$.} For every $t<P$,
        \[
            s_{P,j'}^h-s_{t,j'}^h
            \ge M_0-\big(M_\theta(1-c)+M_s\big)
            =\frac c2M_\theta=2M.
        \]
        Therefore
        \[
            \sum_{t<P}(\bA_1^h)_{t,j'}
            \le
            \sum_{t<P}e^{s_{t,j'}^h-s_{P,j'}^h}
            \le P e^{-2M}
            \le P e^{-M}.
        \]
        Since $\bw_{r,j}^\top(\bZ_0)_{:,P}=0$, we have
        \begin{align*}
            \left|\left(\bV_1^h\bZ_0\bA_1^h\right)_{1,j'}\right|
            &=
            \left|\sum_{t<P}(\bA_1^h)_{t,j'}
            \bw_{r,j}^\top(\bZ_0)_{:,t}\right|\\
            &
            \le
            \begin{cases}
                P\max\{M_{null},dM_x\}e^{-M}, & r\le n+1,\\
                B_f^2 P e^{-M}, & n+2\le r\le 2n+1,\\
                B_f P e^{-M}, & r=2n+2,\\
                B_f^2 P M_f e^{-M}, & r=2n+3.
            \end{cases}
        \end{align*}
    \end{itemize}

    \textbf{Step 2: Indicator and positional encoding heads ($h\in\{(2n+3)P+1,\ldots,(2n+3)P+3\}$).}
    Let $h_0=(2n+3)P$. The head $h_0+1$ preserves the all-one row and the active-indicator row. We construct query and key matrices as
    \[
        \bQ_1^{h_0+1}=
        \begin{bmatrix}
            \be_{D-4}^\top\\
            \be_{d+2}^\top\\
            \bm{0}_{1\times D}\\
            \bm{0}_{1\times D}\\
            \bm{0}_{1\times D}
        \end{bmatrix} \in\RR^{5\times D},
        \qquad
        \bK_1^{h_0+1}=
        \begin{bmatrix}
            M\be_{D-4}^\top\\
            M\be_{d+2}^\top\\
            \bm{0}_{1\times D}\\
            \bm{0}_{1\times D}\\
            \bm{0}_{1\times D}
        \end{bmatrix} \in\RR^{5\times D}
    \]
    Hence
    \[
        s_{t,j'}^{h_0+1}
        =M\chi_t\chi_{j'}+M\nu_t\nu_{j'}.
    \]
    Define
    \[
        v_t^{\mathrm{ind}}
        :=\chi_t-(P-1)e^{-M}\nu_t.
    \]
    Then set value matrix as
    \[
        \bV_1^{h_0+1}
        =
        \begin{bmatrix}
            \be_{D-5}^\top\\
            \be_{D-4}^\top-(P-1)e^{-M}\be_{d+2}^\top
        \end{bmatrix}
        \in\RR^{2\times D},
        \qquad
        \bV_1^{h_0+1}(\bZ_0)_{:,t}
        =
        \begin{bmatrix}
            1\\
            v_t^{\mathrm{ind}}
        \end{bmatrix},
        \quad t\in[P].
    \]
    For $j'<P$,
    \begin{align*}
        \sum_{t=1}^P(\bA_1^{h_0+1})_{t,j'}v_t^{\mathrm{ind}}
        &=
        \frac{(P-1)e^M-(P-1)e^{-M}}{(P-1)e^M+1}
        =
        \frac{(P-1)(e^M-e^{-M})}{(P-1)e^M+1}: = a_M,
    \end{align*}
    and for $j'=P$,
    \[
        \sum_{t=1}^P(\bA_1^{h_0+1})_{t,P}v_t^{\mathrm{ind}}
        =
        \frac{(P-1)-e^M(P-1)e^{-M}}{(P-1)+e^M}=0.
    \]
    Therefore the head $h_0+1$ outputs
    \[
        \left(\bV_1^{h_0+1}\bZ_0\bA_1^{h_0+1}\right)_{:,j'}
        =
        \begin{cases}
            \begin{bmatrix}1\\a_M\end{bmatrix}, & j'<P,\\[1ex]
            \begin{bmatrix}1\\0\end{bmatrix}, & j'=P,
        \end{cases}
    \]
    The scalar $a_M$ is fixed and independent of the active column $j<P$.
    Moreover,
    \[
        1-a_M
        =\frac{1+(P-1)e^{-M}}{(P-1)e^M+1}
        \le 2e^{-M}.
    \]
    Since $P\ge2$ and $P e^{-M}\le1/4$, we have $e^{-M}\le1/8$ and
    \[
        a_M\ge 1-2e^{-M}\ge\frac34,\qquad
        a_M^{-1}\le2.
    \]

    Let $M_{\mathrm{id}}=M/c$. The heads $h_0+2,h_0+3$ preserve the outer positional rows and the $\theta$-rows. They use
    \[
        \bQ_1^{h_0+2}=\bQ_1^{h_0+3}=
        \begin{bmatrix}
            \bm{0}_{1\times D}\\
            \be_{D-1}^\top\\
            \be_D^\top\\
            \bm{0}_{1\times D}\\
            \bm{0}_{1\times D}
        \end{bmatrix},
        \qquad
        \bK_1^{h_0+2}=\bK_1^{h_0+3}=
        \begin{bmatrix}
            \bm{0}_{1\times D}\\
            M_{\mathrm{id}}\be_{D-1}^\top\\
            M_{\mathrm{id}}\be_D^\top\\
            \bm{0}_{1\times D}\\
            \bm{0}_{1\times D}
        \end{bmatrix}
    \]
    where $\bQ_1^{h_0+2},\bQ_1^{h_0+3},\bK_1^{h_0+2},\bK_1^{h_0+3}\in\RR^{5\times D}$.
    For the head $h_0+2$,
    \[
        s_{t,j'}^{h_0+2}=M_{\mathrm{id}}\cos(\theta_t-\theta_{j'}),
        \qquad
        s_{j',j'}^{h_0+2}-s_{t,j'}^{h_0+2}\ge M,\quad t\ne j'.
    \]
    Consequently,
    \[
        \sum_{t\ne j'}(\bA_1^{h_0+2})_{t,j'}
        \le \sum_{t\ne j'}e^{s_{t,j'}^{h_0+2}-s_{j',j'}^{h_0+2}}
        \le P e^{-M}.
    \]
    The value matrices of these two heads are
    \[
        \bV_1^{h_0+2}=\begin{bmatrix}\be_{D-3}^\top\\ \be_{D-2}^\top\end{bmatrix}\in\RR^{2\times D},
        \qquad
        \bV_1^{h_0+3}=\begin{bmatrix}\be_{D-1}^\top\\ \be_D^\top\end{bmatrix}\in\RR^{2\times D}.
    \]
    For the head $h_0+2$,
    \[
        \bV_1^{h_0+2}(\bZ_0)_{:,t}
        =
        \begin{cases}
            \begin{bmatrix}
                \sin(\phi_{\ell_t})\\
                \cos(\phi_{\ell_t})
            \end{bmatrix}, & t<P,\\[2ex]
            \begin{bmatrix}0\\0\end{bmatrix}, & t=P.
        \end{cases}
    \]
    For active $j'<P$, define
    \[
        \begin{bmatrix}
            \widetilde{\sin}(\phi_{\ell_{j'}})\\
            \widetilde{\cos}(\phi_{\ell_{j'}})
        \end{bmatrix}
        :=
        \left(\bV_1^{h_0+2}\bZ_0\bA_1^{h_0+2}\right)_{:,j'}.
    \]
    Then
    \begin{align*}
        |\widetilde{\sin}(\phi_{\ell_{j'}})-\sin(\phi_{\ell_{j'}})|
        &\le \sum_{t\ne j'}(\bA_1^{h_0+2})_{t,j'}
        |(\bV_1^{h_0+2}\bZ_0)_{1,t}-\sin(\phi_{\ell_{j'}})|
        \le 2P e^{-M},\\
        |\widetilde{\cos}(\phi_{\ell_{j'}})-\cos(\phi_{\ell_{j'}})|
        &\le \sum_{t\ne j'}(\bA_1^{h_0+2})_{t,j'}
        |(\bV_1^{h_0+2}\bZ_0)_{2,t}-\cos(\phi_{\ell_{j'}})|
        \le 2P e^{-M}.
    \end{align*}
    For the null column, define
    \[
        \begin{bmatrix}
            \widetilde{0}_{D-3}\\
            \widetilde{0}_{D-2}
        \end{bmatrix}
        :=
        \left(\bV_1^{h_0+2}\bZ_0\bA_1^{h_0+2}\right)_{:,P}.
    \]
    Since the target value at row $D-3,D-2$ is zero,
    \[
        |\widetilde{0}_{D-3}|\le P e^{-M},\qquad
        |\widetilde{0}_{D-2}|\le P e^{-M}.
    \]
    By the shift-invariant softmax calculation
    \citep[Proof of Lemma 5]{shi2026learning}, at attention column $j'$, the
    head $h_0+3$ gives the last two rows
    \[
        \left(\bV_1^{h_0+3}\bZ_0\bA_1^{h_0+3}\right)_{:,j'}
        =
        \lambda_{\mathrm{id}}
        \begin{bmatrix}\sin(\theta_{j'})\\ \cos(\theta_{j'})\end{bmatrix},
        \qquad
        \lambda_{\mathrm{id}}
        :=
        \frac{\sum_{t=1}^Pe^{M_{\mathrm{id}}\cos(\theta_t)}\cos(\theta_t)}
        {\sum_{t=1}^Pe^{M_{\mathrm{id}}\cos(\theta_t)}},
    \]
    with
    \[
        0\le1-\lambda_{\mathrm{id}}\le2(P-1)e^{-M}.
    \]

    \textbf{Step 3: Output projection and error bounds.}
    For each head $h\in[H^1]$, write
    \[
        \bC^h:=\bV_1^h\bZ_0\bA_1^h\in\RR^{2\times P}.
    \]
    For $r\in[2n+3]$ and $j,q\in[P]$, define
    \[
        G_{r,j,q}
        :=
        \left(\bC^{(r-1)P+j}\right)_{1,q}\in\RR.
    \]
    The $P$ heads assigned to feature row $r$ form the block
    \[
        \bC^{(r)}
        :=
        \begin{bmatrix}
            G_{r,1,1} & G_{r,1,2} & \cdots & G_{r,1,P}\\
            0 & 0 & \cdots & 0\\
            G_{r,2,1} & G_{r,2,2} & \cdots & G_{r,2,P}\\
            0 & 0 & \cdots & 0\\
            \vdots & \vdots & \ddots & \vdots\\
            G_{r,P,1} & G_{r,P,2} & \cdots & G_{r,P,P}\\
            0 & 0 & \cdots & 0
        \end{bmatrix}
        \in\RR^{2P\times P}.
    \]
    The remaining three heads give
    \[
        \bC^{\mathrm{ind}}
        :=
        \begin{bmatrix}
            1 & 1 & \cdots & 1 & 1\\
            a_M & a_M & \cdots & a_M & 0
        \end{bmatrix}
        \in\RR^{2\times P},
    \]
    \[
        \bC^\phi
        :=
        \begin{bmatrix}
            \widetilde{\sin}(\phi_{\ell_1})
            & \widetilde{\sin}(\phi_{\ell_2})
            & \cdots
            & \widetilde{\sin}(\phi_{\ell_{P-1}})
            & \widetilde{0}_{D-3}\\
            \widetilde{\cos}(\phi_{\ell_1})
            & \widetilde{\cos}(\phi_{\ell_2})
            & \cdots
            & \widetilde{\cos}(\phi_{\ell_{P-1}})
            & \widetilde{0}_{D-2}
        \end{bmatrix}
        \in\RR^{2\times P},
    \]
    and
    \[
        \bC^\theta
        :=
        \lambda_{\mathrm{id}}
        \begin{bmatrix}
            \sin(\theta_1) & \sin(\theta_2) & \cdots & \sin(\theta_{P-1}) & \sin(\theta_P)\\
            \cos(\theta_1) & \cos(\theta_2) & \cdots & \cos(\theta_{P-1}) & \cos(\theta_P)
        \end{bmatrix}
        \in\RR^{2\times P}.
    \]
    Concatenating all head outputs gives
    \[
        \bC_1
        =
        \begin{bmatrix}
            \bC^{(1)}\\
            \vdots\\
            \bC^{(2n+3)}\\
            \bC^{\mathrm{ind}}\\
            \bC^\phi\\
            \bC^\theta
        \end{bmatrix}
        \in\RR^{2H^1\times P}.
    \]
    Let
    \[
        \bI_{\mathrm{sum}}
        :=
        \begin{bmatrix}
            1&0&1&0&\cdots&1&0
        \end{bmatrix}
        \in\RR^{1\times 2P},
    \]
    and define
    \[
        \bR_{\mathrm{feat}}
        :=
        \begin{bmatrix}
            \bI_{\mathrm{sum}} & \bm{0}_{1\times 2P} & \cdots & \bm{0}_{1\times 2P}\\
            \bm{0}_{1\times 2P} & \bI_{\mathrm{sum}} & \cdots & \bm{0}_{1\times 2P}\\
            \vdots & \vdots & \ddots & \vdots\\
            \bm{0}_{1\times 2P} & \bm{0}_{1\times 2P} & \cdots & \bI_{\mathrm{sum}}
        \end{bmatrix}
        \in\RR^{(2n+3)\times 2(2n+3)P}.
    \]
    Choose
    \[
        \bW_1^O
        =
        \begin{bmatrix}
            \bR_{\mathrm{feat}} & \bm{0}_{(2n+3)\times 6}\\
            \bm{0}_{(D-2n-9)\times 2(2n+3)P} & \bm{0}_{(D-2n-9)\times 6}\\
            \bm{0}_{6\times 2(2n+3)P} & \bI_6
        \end{bmatrix}
        \in\RR^{D\times 2H^1}.
    \]
    Denote
    \[
        \widehat F_{r,j}:=\sum_{j'=1}^P G_{r,j',j},
    \]
    we have the output of MHA layer in the first encoder block $\widehat{\bZ}_1=\bW_1^O\bC_1$ to be
    \[
        (\widehat{\bZ}_1)_{:,j}
        =
        \begin{bmatrix}
            \widehat F_{1,j}\\
            \vdots\\
            \widehat F_{2n+3,j}\\
            \bm{0}_{D-2n-9}\\
            1\\
            a_M\\
            \widetilde{\sin}(\phi_{\ell_j})\\
            \widetilde{\cos}(\phi_{\ell_j})\\
            \lambda_{\mathrm{id}}\sin(\theta_j)\\
            \lambda_{\mathrm{id}}\cos(\theta_j)
        \end{bmatrix},
        \quad j<P,
        \qquad
        (\widehat{\bZ}_1)_{:,P}
        =
        \begin{bmatrix}
            \widehat F_{1,P}\\
            \vdots\\
            \widehat F_{2n+3,P}\\
            \bm{0}_{D-2n-9}\\
            1\\
            0\\
            \widetilde{0}_{D-3}\\
            \widetilde{0}_{D-2}\\
            \lambda_{\mathrm{id}}\sin(\theta_P)\\
            \lambda_{\mathrm{id}}\cos(\theta_P)
        \end{bmatrix}.
    \]
    For each row $r$ and each column $j$, one feature head contributes the target-column error and the remaining $P-1$ feature heads contribute non-target-column errors. Hence
    \begin{align*}
        |\widehat F_{r,j}-F_{r,j}|
        &=
        \left|G_{r,j,j}-F_{r,j}+\sum_{j'\ne j}G_{r,j',j}\right|\\
        &\le
        |G_{r,j,j}-F_{r,j}|+\sum_{j'\ne j}|G_{r,j',j}|\\
        &\le
        \begin{cases}
            2P^2\max\{M_{null},dM_x\}e^{-M}, & r\le n+1,\\
            2P^2B_f^2e^{-M}, & n+2\le r\le 2n+1,\\
            2P^2B_f e^{-M}, & r=2n+2,\\
            2P^2M_fB_f^2 e^{-M}, & r=2n+3.
        \end{cases}
    \end{align*}
    These are the feature bounds in the statement.

    \textbf{Step 4: Point-wise FFN restoration.}
    Define the diagonal matrix
    \[
        \bLambda_1
        :=\mathrm{diag}\big(
        \underbrace{1,\ldots,1}_{D-5},
        a_M^{-1},
        1,1,
        \lambda_{\mathrm{id}}^{-1},\lambda_{\mathrm{id}}^{-1}
        \big)\in\RR^{D\times D}.
    \]
    Choose the point-wise FFN width $d_{\mathrm{ff}}^1=2D$ and set
    \[
        \bW_1^1=
        \begin{bmatrix}
            \bI_D\\
            -\bI_D
        \end{bmatrix},
        \qquad
        \bb_1^1=\bm{0}_{2D},
        \qquad
        \bW_1^2=
        \begin{bmatrix}
            \bLambda_1&-\bLambda_1
        \end{bmatrix},
        \qquad
        \bb_1^2=\bm{0}_D .
    \]
    For every column $(\widehat{\bZ}_1)_{:,j}$,
    \[
        (\bZ_1)_{:,j}
        =\bW_1^2\sigma(\bW_1^1(\widehat{\bZ}_1)_{:,j}+\bb_1^1)+\bb_1^2
        =\bLambda_1(\widehat{\bZ}_1)_{:,j}.
    \]
    Hence, for every $j\in[P]$,
    \[
        (\bZ_1)_{D-5,j}=1,\qquad
        (\bZ_1)_{D-4,j}=1_{\{j<P\}},
        \qquad
        \begin{bmatrix}
            (\bZ_1)_{D-1,j}\\
            (\bZ_1)_{D,j}
        \end{bmatrix}
        =
        \begin{bmatrix}\sin(\theta_j)\\ \cos(\theta_j)\end{bmatrix}.
    \]
    Thus the FFN output is exactly \eqref{Z1hat}. Under
    $Pe^{-M}\le1/4$,
    \[
        a_M^{-1}\le2,\qquad
        \lambda_{\mathrm{id}}^{-1}\le\frac{1}{1-2(P-1)e^{-M}}\le2.
    \]
    Hence the displayed FFN matrices give $M_{\cF_1}\le2$.
    For the MHA layer, $\|\bW_1^O\|_{\max}\le1$ and
    \[
        |M_0|
        \le M_\theta+M_s
        =\frac{5M}{c}.
    \]
    The displayed head matrices therefore yield
    \[
        M_{\cA_1}
        \le
        \max\left\{
        1,M_{null},2M_x,dM_x,B_f,M_fB_f^2,
        M,\frac{5M}{c}
        \right\}.
    \]
    Since $c\le2$ and $M_x,M_f,M_{null}\ge1$, these estimates imply
    \[
        \max\{M_{\cA_1},M_{\cF_1}\}
        \le C_1
        \max\Big\{M_{null},\,M_x,\,M_f,\,\frac{M}{c}\Big\},
    \]
    where $C_1:=2d+2B_f+2B_f^2+5$.
\end{proof}

The first block has placed the input-localization, context-value,
anchor-value, and anchor-bias features in designated rows. The second-block
MHA layer now combines these rows to approximate the input-domain POU
contribution of every context observation at every active anchor-pair token.

\begin{proof}[Proof of Lemma \ref{lemma_2ndblock} (MHA Layer 2)]
    According to $\bZ_1$, the feature rows $\widetilde U_i$, $\widetilde Y_i$, $\widetilde H$, and $\widetilde V$ are located at rows $i$, $n+1+i$, $2n+2$, and $2n+3$, respectively. The routing construction uses the structural rows $D-5,D-4$, the approximate trunk positional rows $D-3,D-2$, and the exact inner positional rows $D-1,D$. We construct $\cA_2:\RR^{D\times P}\to\RR^{D\times P}$ using $nP+3$ heads.

    \textbf{Step 1: POU Feature Extraction Heads ($h\in[nP]$).}
    Each head $h=(r-1)P+j$ aims at extracting a POU feature to target row $r\in[n]$ and target column $j\in[P]$. For an active source token $t<P$, $\ell_t$ and $m_t$ denote its trunk and intra-trunk indices. We use $M_1$ for source-trunk isolation and $M_2$ for target-column routing. With $d_k^2=5$ and $d_v^2=2$, for $j<P$, define query, key, and value matrices $\bQ_2^h,\bK_2^h\in\RR^{5\times D}$, $\bV_2^h\in\RR^{2\times D}$ as follows
    \begin{align*}
        \bQ_2^h &=
        \begin{bmatrix}
            \be_{D-5}^{\top}\\
            \be_{D-1}^{\top}\\
            \be_D^{\top}\\
            M_1\sin(\phi_{\ell_j})\be_{D-4}^{\top}\\
            M_1\cos(\phi_{\ell_j})\be_{D-4}^{\top}
        \end{bmatrix},&
        \bK_2^h &=
        \begin{bmatrix}
            \be_r^{\top}\\
            M_2\sin(\theta_j)\be_{D-4}^{\top}\\
            M_2\cos(\theta_j)\be_{D-4}^{\top}\\
            \be_{D-3}^{\top}\\
            \be_{D-2}^{\top}
        \end{bmatrix},&
        \bV_2^h &=
        \begin{bmatrix}
            \frac{2M_f}{n}\be_{n+1+r}^{\top}\\
            \bm{0}_{1\times D}
        \end{bmatrix}.
    \end{align*}
    Hence, the query vectors are
    \[
        \bq_{j'}^h:=(\bQ_2^h\bZ_1)_{:,j'}
        =
        \begin{bmatrix}
            1\\ \sin(\theta_{j'})\\ \cos(\theta_{j'})\\ M_1\sin(\phi_{\ell_j})\\ M_1\cos(\phi_{\ell_j})
        \end{bmatrix},
        \quad j'<P,
        \qquad
        \bq_P^h:=(\bQ_2^h\bZ_1)_{:,P}
        =
        \begin{bmatrix}
            1\\ \sin(\theta_P)\\ \cos(\theta_P)\\0\\0
        \end{bmatrix}.
    \]
    The key vectors are
    \[
        \bk_t^h:=(\bK_2^h\bZ_1)_{:,t}
        =
        \begin{bmatrix}
            \widetilde{U}_{r,\ell_t,m_t}\\ M_2\sin(\theta_j)\\ M_2\cos(\theta_j)\\
            \widetilde{\sin}(\phi_{\ell_t})\\ \widetilde{\cos}(\phi_{\ell_t})
        \end{bmatrix},
        \quad t<P,
        \qquad
        \bk_P^h:=(\bK_2^h\bZ_1)_{:,P}
        =
        \begin{bmatrix}
            \widetilde{M}_{null}\\0\\0\\\widetilde{0}_{D-3}\\\widetilde{0}_{D-2}
        \end{bmatrix}.
    \]
    The value vectors are
    \[
        \bv_t^h:=(\bV_2^h\bZ_1)_{:,t}
        =
        \begin{bmatrix}\frac{2M_f}{n}\widetilde{Y}_{r,\ell_t,m_t}\\0\end{bmatrix},
        \quad t<P,
        \qquad
        \bv_P^h:=(\bV_2^h\bZ_1)_{:,P}
        =
        \begin{bmatrix}\frac{2M_f}{n}\widetilde{0}_{n+1+r}\\0\end{bmatrix}.
    \]
    Define
    \[
        \xi_{t,\ell_j}:=\sin(\phi_{\ell_j})(\widetilde{\sin}(\phi_{\ell_t})-\sin(\phi_{\ell_t}))
        +\cos(\phi_{\ell_j})(\widetilde{\cos}(\phi_{\ell_t})-\cos(\phi_{\ell_t})).
    \]
    For active rows $t<P$ and active columns $j'<P$,
    \begin{align*}
        s_{t,j'}^h
        &=(\bk_t^h)^\top\bq_{j'}^h\\
        &=\widetilde U_{r,\ell_t,m_t}
        +M_2\cos(\theta_{j'}-\theta_j)
        +M_1\Big[
        \sin(\phi_{\ell_j})\widetilde{\sin}(\phi_{\ell_t})
        +\cos(\phi_{\ell_j})\widetilde{\cos}(\phi_{\ell_t})
        \Big]\\
        &=\widetilde U_{r,\ell_t,m_t}
        +M_2\cos(\theta_{j'}-\theta_j)
        +M_1\Big[
        \sin(\phi_{\ell_j})\sin(\phi_{\ell_t})
        +\cos(\phi_{\ell_j})\cos(\phi_{\ell_t})
        \Big]
        +M_1\xi_{t,\ell_j}\\
        &=\widetilde U_{r,\ell_t,m_t}
        +M_2\cos(\theta_{j'}-\theta_j)
        +M_1\cos(\phi_{\ell_t}-\phi_{\ell_j})
        +M_1\xi_{t,\ell_j}.
    \end{align*}
    For the null token $t=P$ and active columns $j'<P$,
    \[
        s_{P,j'}^h=\widetilde{M}_{null}
        +M_1\sin(\phi_{\ell_j})\widetilde{0}_{D-3}
        +M_1\cos(\phi_{\ell_j})\widetilde{0}_{D-2}.
    \]
    For the null column $j'=P$,
    \[
        s_{t,P}^h=\widetilde U_{r,\ell_t,m_t}
        +M_2\cos(\theta_P-\theta_j),
        \quad t<P,
        \qquad
        s_{P,P}^h=\widetilde{M}_{null}.
    \]
    By Lemma \ref{lemma_1stblock},
    \begin{align*}
        |\xi_{t,\ell_j}|
        &\le 4Pe^{-M},\\
        |\widetilde U_{r,\ell_t,m_t}-U_{r,m_t}|,
        \ |\widetilde{M}_{null,r}-M_{null}|
        &\le 2P^2\max\{M_{null},dM_x\}e^{-M}
        \le 2M_2P^2e^{-M},\\
        |\widetilde Y_{r,\ell_t,m_t}-Y_{r,\ell_t,m_t}|,
        \ |\widetilde{0}_{n+1+r}|
        &\le 2B_f^2P^2e^{-M}.
    \end{align*}
    \begin{itemize}
    \item \textbf{Target Column ($j'=j$).}
    Since $M_2\ge M_1$, we have
    \[
        4M_2P^2e^{-M}+8M_1Pe^{-M}
        \le 12M_2P^2e^{-M}\le1.
    \]
    Let $\cI_{\ell_j}=\{t<P:\ell_t=\ell_j\}$. If $t\in\cI_{\ell_j}$ and $t'<P$, $t'\notin\cI_{\ell_j}$, then
    \begin{align*}
        s_{t,j}^h-s_{t',j}^h
        &=(\widetilde U_{r,\ell_t,m_t}-\widetilde U_{r,\ell_{t'},m_{t'}})
        +M_1\big(1-\cos(\phi_{\ell_{t'}}-\phi_{\ell_j})\big)
        +M_1(\xi_{t,\ell_j}-\xi_{t',\ell_j})\\
        &\ge -2dM_x+cM_1-4M_2P^2e^{-M}-8M_1Pe^{-M}\\
        &\ge (2dM_x+M+3)-2dM_x-1>M.
    \end{align*}
    Moreover, by Lemma \ref{lemma_1stblock},
    \begin{align*}
        s_{t,j}^h-s_{P,j}^h
        &=(\widetilde U_{r,\ell_t,m_t}-\widetilde M_{null})
        +M_2+M_1
        +M_1\xi_{t,\ell_j}
        -M_1\sin(\phi_{\ell_j})\widetilde{0}_{D-3}
        -M_1\cos(\phi_{\ell_j})\widetilde{0}_{D-2}\\
        &\ge -dM_x-M_{null}+M_2+M_1
        -4M_2P^2e^{-M}-6M_1Pe^{-M}\\
        &=\frac c2M_2-dM_x+M_1
        -4M_2P^2e^{-M}-6M_1Pe^{-M}\\
        &\ge dM_x+2M_1+M+2-dM_x+M_1-1>M.
    \end{align*}
    Consequently,
    \[
        \sum_{t'\notin\cI_{\ell_j}}(\bA_2^h)_{t',j}
        =
        \frac{\sum_{t'\notin\cI_{\ell_j}}\exp(s_{t',j}^h)}
        {\sum_{a=1}^P\exp(s_{a,j}^h)}
        \le
        \frac{\sum_{t'\notin\cI_{\ell_j}}\exp(s_{t',j}^h)}
        {\exp(s_{t,j}^h)}
        \le
        \sum_{t'\notin\cI_{\ell_j}}e^{s_{t',j}^h-s_{t,j}^h}
        \le Pe^{-M}.
    \]
    For $t\in\cI_{\ell_j}$, define the ideal score
    \[
        s_{t,j}^{*,h}:=U_{r,m_t}+M_2+M_1.
    \]
    Define
    \[
        \widetilde\eta_t^h:=
        \frac{\exp(s_{t,j}^h)}
        {\sum_{a\in\cI_{\ell_j}}\exp(s_{a,j}^h)},
        \qquad
        \bar\eta_t^h:=
        \frac{\exp(s_{t,j}^{*,h})}
        {\sum_{a\in\cI_{\ell_j}}\exp(s_{a,j}^{*,h})}.
    \]
    Since $M_2+M_1$ is constant on $\cI_{\ell_j}$,
    \[
        \bar\eta_t^h
        =
        \frac{\exp(U_{r,m_t}+M_2+M_1)}
        {\sum_{a\in\cI_{\ell_j}}\exp(U_{r,m_a}+M_2+M_1)}
        =
        \frac{\exp(U_{r,m_t})}
        {\sum_{a\in\cI_{\ell_j}}\exp(U_{r,m_a})}
        =
        \eta_{m_t}(\xb_r).
    \]
    Therefore
    \[
        W_{r,\ell_j}
        =
        \frac{2M_f}{n}\sum_{m=1}^{C_x}\eta_m(\xb_r)Y_{r,\ell_j,m}
        =
        \frac{2M_f}{n}\sum_{t\in\cI_{\ell_j}}\bar\eta_t^hY_{r,\ell_j,m_t}.
    \]
    Since, for $t\in\cI_{\ell_j}$,
    \begin{align*}
        |s_{t,j}^h-s_{t,j}^{*,h}|
        &=
        \left|
        \widetilde U_{r,\ell_t,m_t}-U_{r,m_t}
        +M_1\xi_{t,\ell_j}
        \right|\\
        &\le
        |\widetilde U_{r,\ell_t,m_t}-U_{r,m_t}|
        +M_1|\xi_{t,\ell_j}|\\
        &\le 2M_2P^2e^{-M}+4M_1Pe^{-M},
    \end{align*}
    Lemma \ref{Softmaxlipchitz} gives
    \[
        \sum_{t\in\cI_{\ell_j}}|\widetilde\eta_t^h-\bar\eta_t^h|
        \le4M_2P^2e^{-M}+8M_1Pe^{-M}.
    \]
    The normalization loss outside $\cI_{\ell_j}$ satisfies
    \[
        \rho_h
        :=
        \sum_{t\notin\cI_{\ell_j}}(\bA_2^h)_{t,j}
        \le Pe^{-M}.
    \]
    For $t\in\cI_{\ell_j}$,
    \[
        (\bA_2^h)_{t,j}
        =
        \frac{\exp(s_{t,j}^h)}
        {\sum_{a=1}^P\exp(s_{a,j}^h)}
        =
        (1-\rho_h)
        \frac{\exp(s_{t,j}^h)}
        {\sum_{a\in\cI_{\ell_j}}\exp(s_{a,j}^h)}
        =
        (1-\rho_h)\widetilde\eta_t^h.
    \]
    Hence
    \[
        \sum_{t\in\cI_{\ell_j}}|(\bA_2^h)_{t,j}-\widetilde\eta_t^h|
        =
        \rho_h\sum_{t\in\cI_{\ell_j}}\widetilde\eta_t^h
        =
        \rho_h
        \le Pe^{-M}.
    \]
    Therefore, by the triangle inequality and $P, M_2\ge1$, $M_2\ge M_1$,
    \begin{align*}
        \sum_{t\in\cI_{\ell_j}}|(\bA_2^h)_{t,j}-\bar\eta_t^h|
        &\le
        \sum_{t\in\cI_{\ell_j}}|(\bA_2^h)_{t,j}-\widetilde\eta_t^h|
        +\sum_{t\in\cI_{\ell_j}}|\widetilde\eta_t^h-\bar\eta_t^h|\\
        &\le Pe^{-M}
        +4M_2P^2e^{-M}
        +8M_1Pe^{-M}\le 13M_2P^2e^{-M}.
    \end{align*}
    For the target column of this head,
    \begin{align*}
        \left(\bV_2^h\bZ_1\bA_2^h\right)_{1,j}
        &=
        \frac{2M_f}{n}\bigg(
        \sum_{t\in\cI_{\ell_j}}(\bA_2^h)_{t,j}\widetilde Y_{r,\ell_j,m_t}
        +\sum_{\substack{t<P\\t\notin\cI_{\ell_j}}}(\bA_2^h)_{t,j}
        \widetilde Y_{r,\ell_t,m_t}
        +(\bA_2^h)_{P,j}\widetilde{0}_{n+1+r}
        \bigg).
    \end{align*}
    Hence, by Lemma \ref{lemma_1stblock} and $Pe^{-M}\le1/4$,
    \begin{align*}
        \left|\left(\bV_2^h\bZ_1\bA_2^h\right)_{1,j}
        -W_{r,\ell_j}\right|
        &=\frac{2M_f}{n}\bigg|
        \sum_{t\in\cI_{\ell_j}}(\bA_2^h)_{t,j}
        (\widetilde Y_{r,\ell_j,m_t}-Y_{r,\ell_j,m_t})
        +\sum_{t\in\cI_{\ell_j}}\big((\bA_2^h)_{t,j}-\bar\eta_t^h\big)Y_{r,\ell_j,m_t}\\
        &\qquad\qquad
        +\sum_{\substack{t<P\\t\notin\cI_{\ell_j}}}(\bA_2^h)_{t,j}
        \widetilde Y_{r,\ell_t,m_t}
        +(\bA_2^h)_{P,j}\widetilde{0}_{n+1+r}
        \bigg|\\
        &\le \frac{2M_f}{n}\Big\{
        2B_f^2P^2e^{-M}
        +13B_f^2M_2P^2e^{-M}
        +Pe^{-M}\big[B_f^2+4B_f^2P^2e^{-M}\big]
        \Big\}\\
        &\le \frac{2M_f}{n}\Big\{
        2B_f^2P^2e^{-M}
        +13B_f^2M_2P^2e^{-M}
        +2B_f^2P^2e^{-M}
        \Big\}\\
        &\le \frac{34B_f^2}{n}M_fM_2P^2e^{-M}.
    \end{align*}

    \item \textbf{Non-Target Columns ($j'\in[P]$, $j'\ne j$).}
    Since $|U_{r,m}|\le dM_x$ and $2M_2P^2e^{-M}\le1$,
    \[
        |\widetilde U_{r,\ell_t,m_t}|\le dM_x+1.
    \]
    Also, since $M_2\ge M_1$,
    \[
        4M_2P^2e^{-M}+8M_1Pe^{-M}
        \le 12M_2P^2e^{-M}\le1.
    \]
    For any active $t<P$ and active non-target column $j'<P$, $j'\ne j$,
    \begin{align*}
        s_{P,j'}^h-s_{t,j'}^h
        &=
        \widetilde{M}_{null}-\widetilde U_{r,\ell_t,m_t}
        -M_2\cos(\theta_{j'}-\theta_j)
        -M_1\cos(\phi_{\ell_t}-\phi_{\ell_j})\\
        &\qquad
        +M_1\sin(\phi_{\ell_j})\widetilde{0}_{D-3}
        +M_1\cos(\phi_{\ell_j})\widetilde{0}_{D-2}
        -M_1\xi_{t,\ell_j}\\
        &\ge M_{null}-dM_x-M_2(1-c)-M_1
        -4M_2P^2e^{-M}
        -6M_1Pe^{-M}\\
        &=\frac{c}{2}M_2-dM_x-M_1
        -4M_2P^2e^{-M}
        -6M_1Pe^{-M}\\
        &\ge M_1+M+2-1>M.
    \end{align*}
    For the null column $j'=P$ and any active $t<P$,
    \begin{align*}
        s_{P,P}^h-s_{t,P}^h
        &=\widetilde{M}_{null}-\widetilde U_{r,\ell_t,m_t}
        -M_2\cos(\theta_P-\theta_j)\\
        &\ge M_{null}-dM_x-M_2(1-c)-4M_2P^2e^{-M}\\
        &=\frac{c}{2}M_2-dM_x-4M_2P^2e^{-M}\\
        &\ge 2M_1+M+2-1
        >M.
    \end{align*}
    Thus, for every $j'\ne j$,
    \[
        1-(\bA_2^h)_{P,j'}
        =
        \sum_{t<P}(\bA_2^h)_{t,j'}
        =
        \frac{\sum_{t<P}\exp(s_{t,j'}^h)}
        {\sum_{a=1}^P\exp(s_{a,j'}^h)}
        \le
        \frac{\sum_{t<P}\exp(s_{t,j'}^h)}
        {\exp(s_{P,j'}^h)}
        =
        \sum_{t<P}e^{s_{t,j'}^h-s_{P,j'}^h}
        \le Pe^{-M}.
    \]
    Hence
    \begin{align*}
        \left|\left(\bV_2^h\bZ_1\bA_2^h\right)_{1,j'}\right|
        &=
        \frac{2M_f}{n}\left|
        \sum_{t<P}(\bA_2^h)_{t,j'}\widetilde{Y}_{r,\ell_t,m_t}
        +(\bA_2^h)_{P,j'}\widetilde{0}_{n+1+r}
        \right|\\
        &\le \frac{2M_f}{n}
        \left\{|\widetilde{0}_{n+1+r}|+
        \sum_{t<P}(\bA_2^h)_{t,j'}|\widetilde{Y}_{r,\ell_t,m_t}|\right\}\\
        &\le \frac{2M_f}{n}
        \Big\{2B_f^2P^2e^{-M}
        +Pe^{-M}\big[B_f^2+2B_f^2P^2e^{-M}\big]\Big\} \le \frac{8B_f^2}{n}M_fP^2e^{-M}.
    \end{align*}
    \end{itemize}
    For $j=P$, set $\bQ_2^h=\bK_2^h=\bm{0}_{5\times D}$ and $\bV_2^h=\bm{0}_{2\times D}$. Then the output of this head is
    \[
        \bV_2^h\bZ_1\bA_2^h=\bm{0}_{2\times P}.
    \]

    \textbf{Step 2: Auxiliary Heads.}
    Let $h_0:=nP$ and $M_{\mathrm{id}}:=M/c$. The heads $h_0+1$ and $h_0+2$ preserve rows $n+1,2n+2$, and $2n+3$ of $\bZ_1$. We construct query and key matrices $\bQ_2^{h_0+1},\bQ_2^{h_0+2},\bK_2^{h_0+1},\bK_2^{h_0+2}\in\RR^{5\times D}$ as
    \[
        \bQ_2^{h_0+1}=\bQ_2^{h_0+2}=
        \begin{bmatrix}
            \bm{0}_{1\times D}\\
            \be_{D-1}^{\top}\\
            \be_D^{\top}\\
            \bm{0}_{1\times D}\\
            \bm{0}_{1\times D}
        \end{bmatrix},\qquad
        \bK_2^{h_0+1}=\bK_2^{h_0+2}=
        \begin{bmatrix}
            \bm{0}_{1\times D}\\
            M_{\mathrm{id}}\be_{D-1}^{\top}\\
            M_{\mathrm{id}}\be_D^{\top}\\
            \bm{0}_{1\times D}\\
            \bm{0}_{1\times D}
        \end{bmatrix}.
    \]
    For $a\in\{1,2\}$,
    \[
        s_{t,j}^{h_0+a}=M_{\mathrm{id}}\cos(\theta_t-\theta_j),\qquad
        s_{j,j}^{h_0+a}-s_{t,j}^{h_0+a}\ge M,\quad t\ne j.
    \]
    Let $\bA_2^{h_0+a}$ be the corresponding attention matrix. Then
    \[
        \sum_{t\ne j}(\bA_2^{h_0+a})_{t,j}
        \le\sum_{t\ne j}e^{s_{t,j}^{h_0+a}-s_{j,j}^{h_0+a}}
        \le Pe^{-M},
        \qquad a\in\{1,2\}.
    \]
    We construct the value matrices $\bV_2^{h_0+1},\bV_2^{h_0+2}\in\RR^{2\times D}$ as
    \[
        \bV_2^{h_0+1}=\begin{bmatrix}\be_{n+1}^{\top}\\ \be_{2n+2}^{\top}\end{bmatrix},\qquad
        \bV_2^{h_0+2}=\begin{bmatrix}\be_{2n+3}^{\top}\\ \bm{0}_{1\times D}\end{bmatrix}.
    \]
    For $j<P$, define
    \begin{align*}
        \overline U_{n+1,\ell_j,m_j}
        &:=(\bV_2^{h_0+1}\bZ_1\bA_2^{h_0+1})_{1,j},\\
        \overline H_{\ell_j,m_j}
        &:=(\bV_2^{h_0+1}\bZ_1\bA_2^{h_0+1})_{2,j},\\
        \overline V_{\ell_j,m_j}
        &:=(\bV_2^{h_0+2}\bZ_1\bA_2^{h_0+2})_{1,j}.
    \end{align*}
    and for the null column define
    \[
        \overline M_{null}:=(\bV_2^{h_0+1}\bZ_1\bA_2^{h_0+1})_{1,P},
        \quad
        \overline{0}_H:=(\bV_2^{h_0+1}\bZ_1\bA_2^{h_0+1})_{2,P},
        \quad
        \overline{0}_V:=(\bV_2^{h_0+2}\bZ_1\bA_2^{h_0+2})_{1,P}.
    \]
    By Lemma \ref{lemma_1stblock}, for $t<P$ and $t\ne j$,
    \begin{align*}
        |\widetilde U_{n+1,\ell_t,m_t}-U_{n+1,m_j}|
        &\le
        |\widetilde U_{n+1,\ell_t,m_t}-U_{n+1,m_t}|
        +|U_{n+1,m_t}-U_{n+1,m_j}|\\
        &\le 2M_2P^2e^{-M}+2dM_x\le 1+2dM_x
        \le M_2.
    \end{align*}
    For the null source column,
    \begin{align*}
        |\widetilde M_{null,n+1}-U_{n+1,m_j}|
        &\le
        |\widetilde M_{null,n+1}-M_{null}|
        +|M_{null}-U_{n+1,m_j}|\\
        &\le 2M_2P^2e^{-M}+M_{null}+dM_x\\
        &\le 1+M_2-(2M_1+M+2)\le M_2.
    \end{align*}
    Hence, since $P\ge1$ and $Pe^{-M}\le1/4$, Lemma \ref{lemma_1stblock} gives
    \begin{align*}
        |\overline U_{n+1,\ell_j,m_j}-U_{n+1,m_j}|
        &=
        \left|
        \begin{aligned}
        \sum_{t<P}(\bA_2^{h_0+1})_{t,j}
        \big(\widetilde U_{n+1,\ell_t,m_t}-U_{n+1,m_j}\big)
        \\
        +(\bA_2^{h_0+1})_{P,j}
        \big(\widetilde M_{null,n+1}-U_{n+1,m_j}\big)
        \end{aligned}
        \right|\\
        &\le
        |\widetilde U_{n+1,\ell_j,m_j}-U_{n+1,m_j}|
        +\sum_{t\ne j}(\bA_2^{h_0+1})_{t,j}
        M_2\\
        &\le 2M_2P^2e^{-M}+M_2Pe^{-M}\\
        &\le 3M_2P^2e^{-M}.
    \end{align*}
    For the null column, first observe that for $t<P$,
    \begin{align*}
        |\widetilde U_{n+1,\ell_t,m_t}-M_{null}|
        &\le
        |\widetilde U_{n+1,\ell_t,m_t}-U_{n+1,m_t}|
        +|U_{n+1,m_t}-M_{null}|\\
        &\le 2M_2P^2e^{-M}+dM_x+M_{null}\\
        &\le 1+M_2-(2M_1+M+2)\le M_2.
    \end{align*}
    Therefore
    \begin{align*}
        |\overline M_{null}-M_{null}|
        &=
        \left|
        \sum_{t<P}(\bA_2^{h_0+1})_{t,P}
        \big(\widetilde U_{n+1,\ell_t,m_t}-M_{null}\big)
        +(\bA_2^{h_0+1})_{P,P}
        \big(\widetilde M_{null,n+1}-M_{null}\big)
        \right|\\
        &\le
        |\widetilde M_{null,n+1}-M_{null}|
        +\sum_{t\ne P}(\bA_2^{h_0+1})_{t,P}
        M_2\\
        &\le 2M_2P^2e^{-M}
        +M_2Pe^{-M}\le 3M_2P^2e^{-M}.
    \end{align*}
    For $t<P$ and $t\ne j$,
    \begin{align*}
        |\widetilde H_{\ell_t,m_t}-H_{\ell_j,m_j}|
        &\le
        |\widetilde H_{\ell_t,m_t}-H_{\ell_t,m_t}|
        +|H_{\ell_t,m_t}-H_{\ell_j,m_j}|\\
        &\le 2B_fP^2e^{-M}+2B_f \le 3B_f.
    \end{align*}
    For the null source column,
    \[
        |\widetilde{0}_{2n+2}-H_{\ell_j,m_j}|
        \le 2B_fP^2e^{-M}+B_f\le 2B_f.
    \]
    Hence
    \begin{align*}
        |\overline H_{\ell_j,m_j}-H_{\ell_j,m_j}|
        &=
        \left|
        \sum_{t<P}(\bA_2^{h_0+1})_{t,j}
        \big(\widetilde H_{\ell_t,m_t}-H_{\ell_j,m_j}\big)
        +(\bA_2^{h_0+1})_{P,j}
        \big(\widetilde{0}_{2n+2}-H_{\ell_j,m_j}\big)
        \right|\\
        &\le
        |\widetilde H_{\ell_j,m_j}-H_{\ell_j,m_j}|
        +\sum_{t\ne j}(\bA_2^{h_0+1})_{t,j}
        3B_f\\
        &\le 2B_fP^2e^{-M}
        +3B_fPe^{-M}
        \le 5B_fP^2e^{-M}.
    \end{align*}
    For the null column,
    \begin{align*}
        |\overline{0}_H|
        &=
        \left|
        \sum_{t<P}(\bA_2^{h_0+1})_{t,P}\widetilde H_{\ell_t,m_t}
        +(\bA_2^{h_0+1})_{P,P}\widetilde{0}_{2n+2}
        \right|\\
        &\le
        |\widetilde{0}_{2n+2}|
        +\sum_{t\ne P}(\bA_2^{h_0+1})_{t,P}
        2B_f\\
        &\le 2B_fP^2e^{-M}
        +2B_fPe^{-M}
        \le 4B_fP^2e^{-M}.
    \end{align*}
    For $t<P$ and $t\ne j$,
    \begin{align*}
        |\widetilde V_{\ell_t,m_t}-V_{\ell_j}|
        &\le
        |\widetilde V_{\ell_t,m_t}-V_{\ell_t}|
        +|V_{\ell_t}-V_{\ell_j}|\\
        &\le 2B_f^2M_fP^2e^{-M}+2B_f^2M_f
        \le 3B_f^2M_f.
    \end{align*}
    For the null source column,
    \[
        |\widetilde{0}_{2n+3}-V_{\ell_j}|
        \le 2B_f^2M_fP^2e^{-M}+B_f^2M_f
        \le 2B_f^2M_f.
    \]
    Hence
    \begin{align*}
        |\overline V_{\ell_j,m_j}-V_{\ell_j}|
        &=
        \left|
        \sum_{t<P}(\bA_2^{h_0+2})_{t,j}
        \big(\widetilde V_{\ell_t,m_t}-V_{\ell_j}\big)
        +(\bA_2^{h_0+2})_{P,j}
        \big(\widetilde{0}_{2n+3}-V_{\ell_j}\big)
        \right|\\
        &\le
        |\widetilde V_{\ell_j,m_j}-V_{\ell_j}|
        +\sum_{t\ne j}(\bA_2^{h_0+2})_{t,j}
        3B_f^2M_f\\
        &\le 2B_f^2M_fP^2e^{-M}
        +3B_f^2M_fPe^{-M}
        \le 5B_f^2M_fP^2e^{-M}.
    \end{align*}
    For the null column,
    \begin{align*}
        |\overline{0}_V|
        &=
        \left|
        \sum_{t<P}(\bA_2^{h_0+2})_{t,P}\widetilde V_{\ell_t,m_t}
        +(\bA_2^{h_0+2})_{P,P}\widetilde{0}_{2n+3}
        \right|\\
        &\le
        |\widetilde{0}_{2n+3}|
        +\sum_{t\ne P}(\bA_2^{h_0+2})_{t,P}
        2B_f^2M_f\\
        &\le 2B_f^2M_fP^2e^{-M}
        +2B_f^2M_fPe^{-M}
        \le 4B_f^2M_fP^2e^{-M}.
    \end{align*}

    The head $h_0+3$ constructs the constant-one row and the active-indicator row. Let
    \[
        \chi_j:=1_{\{j<P\}},\qquad \nu_j:=1-\chi_j,\qquad j\in[P].
    \]
    With $a_M$ as in the lemma statement, construct the query and key matrices
    $\bQ_2^{h_0+3},\bK_2^{h_0+3}\in\RR^{5\times D}$ as
    \[
        \bQ_2^{h_0+3}
        =
        \begin{bmatrix}
            \be_{D-4}^{\top}\\
            \be_{D-5}^{\top}-\be_{D-4}^{\top}\\
            \bm{0}_{1\times D}\\
            \bm{0}_{1\times D}\\
            \bm{0}_{1\times D}
        \end{bmatrix}
        \qquad
        \bK_2^{h_0+3}
        =
        \begin{bmatrix}
            M\be_{D-4}^{\top}\\
            M(\be_{D-5}^{\top}-\be_{D-4}^{\top})\\
            \bm{0}_{1\times D}\\
            \bm{0}_{1\times D}\\
            \bm{0}_{1\times D}
        \end{bmatrix},
    \]
    and construct the value matrix $\bV_2^{h_0+3}\in\RR^{2\times D}$ as
    \[
        \bV_2^{h_0+3}
        =
        \begin{bmatrix}
            \be_{D-5}^{\top}\\
            \be_{D-4}^{\top}-(P-1)e^{-M}(\be_{D-5}^{\top}-\be_{D-4}^{\top})
        \end{bmatrix}.
    \]
    Let $\bA_2^{h_0+3}$ be the corresponding attention matrix. This is the same indicator-head construction as in the proof of Lemma \ref{lemma_1stblock}. Hence the same calculation gives
    \[
        \left(\bV_2^{h_0+3}\bZ_1\bA_2^{h_0+3}\right)_{:,j}
        =
        \begin{bmatrix}1\\a_M\end{bmatrix},
        \quad j<P,
        \qquad
        \left(\bV_2^{h_0+3}\bZ_1\bA_2^{h_0+3}\right)_{:,P}
        =
        \begin{bmatrix}1\\0\end{bmatrix}.
    \]

    \textbf{Step 3: Concatenation and Output Projection.}
    For each POU feature head $h\in[nP]$, write
    \[
        \bC_2^h:=\bV_2^h\bZ_1\bA_2^h\in\RR^{2\times P}.
    \]
    For $r\in[n]$ and $j,q\in[P]$, define
    \[
        G_{r,j,q}
        :=
        \left(\bC_2^{(r-1)P+j}\right)_{1,q}\in\RR .
    \]
    The $P$ POU feature heads assigned to row $r$ form the block
    \[
        \bC_2^{(r)}
        :=
        \begin{bmatrix}
            G_{r,1,1} & G_{r,1,2} & \cdots & G_{r,1,P}\\
            0 & 0 & \cdots & 0\\
            G_{r,2,1} & G_{r,2,2} & \cdots & G_{r,2,P}\\
            0 & 0 & \cdots & 0\\
            \vdots & \vdots & \ddots & \vdots\\
            G_{r,P,1} & G_{r,P,2} & \cdots & G_{r,P,P}\\
            0 & 0 & \cdots & 0
        \end{bmatrix}
        \in\RR^{2P\times P}.
    \]
    The three auxiliary heads give
    \[
        \bC_2^{\mathrm{id},1}
        :=
        \bV_2^{h_0+1}\bZ_1\bA_2^{h_0+1}
        =
        \begin{bmatrix}
            \overline U_{n+1,1} & \cdots & \overline U_{n+1,P-1} & \overline M_{null}\\
            \overline H_{\ell_1,m_1} & \cdots & \overline H_{\ell_{P-1},m_{P-1}} & \overline{0}_H
        \end{bmatrix},
    \]
    \[
        \bC_2^{\mathrm{id},2}
        :=
        \bV_2^{h_0+2}\bZ_1\bA_2^{h_0+2}
        =
        \begin{bmatrix}
            \overline V_1 & \cdots & \overline V_{P-1} & \overline{0}_V\\
            0 & \cdots & 0 & 0
        \end{bmatrix},
    \]
    and
    \[
        \bC_2^{\mathrm{ind}}
        :=
        \bV_2^{h_0+3}\bZ_1\bA_2^{h_0+3}
        =
        \begin{bmatrix}
            1 & \cdots & 1 & 1\\
            a_M & \cdots & a_M & 0
        \end{bmatrix}.
    \]
    Concatenating all head outputs gives
    \[
        \bC_2
        =
        \begin{bmatrix}
            \bC_2^{(1)}\\
            \vdots\\
            \bC_2^{(n)}\\
            \bC_2^{\mathrm{id},1}\\
            \bC_2^{\mathrm{id},2}\\
            \bC_2^{\mathrm{ind}}
        \end{bmatrix}
        \in\RR^{2H^2\times P}.
    \]
    Let
    \[
        \bI_{\mathrm{sum}}
        :=
        \begin{bmatrix}
            1&0&1&0&\cdots&1&0
        \end{bmatrix}
        \in\RR^{1\times 2P},
    \]
    and define
    \[
        \bR_{\mathrm{POU}}
        :=
        \begin{bmatrix}
            \bI_{\mathrm{sum}} & \bm{0}_{1\times 2P} & \cdots & \bm{0}_{1\times 2P}\\
            \bm{0}_{1\times 2P} & \bI_{\mathrm{sum}} & \cdots & \bm{0}_{1\times 2P}\\
            \vdots & \vdots & \ddots & \vdots\\
            \bm{0}_{1\times 2P} & \bm{0}_{1\times 2P} & \cdots & \bI_{\mathrm{sum}}
        \end{bmatrix}
        \in\RR^{n\times 2nP}.
    \]
    Also define
    \[
        \bR_{\mathrm{aux}}
        :=
        \begin{bmatrix}
            1&0&0&0&0&0\\
            0&1&0&0&0&0\\
            0&0&1&0&0&0\\
            0&0&0&0&1&0\\
            0&0&0&0&0&1
        \end{bmatrix}
        \in\RR^{5\times 6}.
    \]
    Choose the output projection matrix
    \[
        \bW_2^O
        =
        \begin{bmatrix}
            \bR_{\mathrm{POU}} & \bm{0}_{n\times 6}\\
            \bm{0}_{5\times 2nP} & \bR_{\mathrm{aux}}\\
            \bm{0}_{(D-n-5)\times 2nP} & \bm{0}_{(D-n-5)\times 6}
        \end{bmatrix}
        \in\RR^{D\times 2H^2}.
    \]
    Define
    \[
        \widetilde W_{r,\ell_j,m_j}:=\sum_{q=1}^P G_{r,q,j},
        \quad r\in[n],\ j<P,
        \qquad
        \overline{0}_r:=\sum_{q=1}^P G_{r,q,P},
        \quad r\in[n].
    \]
    Then $\widehat{\bZ}_2=\bW_2^O\bC_2$ satisfies
    \[
        (\widehat{\bZ}_2)_{:,j}
        =
        \begin{bmatrix}
            \widetilde W_{1,\ell_j,m_j}\\
            \vdots\\
            \widetilde W_{n,\ell_j,m_j}\\
            \overline U_{n+1,\ell_j,m_j}\\
            \overline H_{\ell_j,m_j}\\
            \overline V_{\ell_j,m_j}\\
            1\\
            a_M\\
            \bm{0}_{D-n-5}
        \end{bmatrix},
        \quad j<P,
        \qquad
        (\widehat{\bZ}_2)_{:,P}
        =
        \begin{bmatrix}
            \overline{0}_1\\
            \vdots\\
            \overline{0}_n\\
            \overline M_{null}\\
            \overline{0}_H\\
            \overline{0}_V\\
            1\\
            0\\
            \bm{0}_{D-n-5}
        \end{bmatrix}.
    \]
    This is exactly the matrix form \eqref{Z2hat}.

    By Step 1, for $r\in[n]$ and $j<P$,
    \[
        |G_{r,j,j}-W_{r,\ell_j}|
        \le \frac{34B_f^2}{n}M_fM_2P^2e^{-M},
        \qquad
        |G_{r,j,q}|
        \le \frac{8B_f^2}{n}M_fP^2e^{-M},\quad q\ne j,
    \]
    and $G_{r,P,q}=0$ for every $q\in[P]$. 
    Moreover,
    $c=2\sin^2(\pi/P)\le2\pi^2/P^2$, and therefore
    \[
        M_2
        \ge \frac{(dM_x+2M_1+M+2)P^2}{\pi^2}
        \ge \frac{6P^2}{\pi^2}
        \ge P.
    \]
    For each active column $j<P$,
    \begin{align*}
        |\widetilde W_{r,\ell_j,m_j}-W_{r,\ell_j}|
        &=
        \left|
        G_{r,j,j}-W_{r,\ell_j}
        +\sum_{\substack{q=1,q\ne j}}^P G_{r,q,j}
        \right| \le
        |G_{r,j,j}-W_{r,\ell_j}|
        +\sum_{\substack{q=1,q\ne j}}^P |G_{r,q,j}|\\
        &\le
        \frac{34B_f^2}{n}M_fM_2P^2e^{-M}
        +(P-1)\frac{8B_f^2}{n}M_fP^2e^{-M} \le \frac{42B_f^2}{n}M_fM_2P^2e^{-M}.
    \end{align*}
    For the null column,
    \begin{align*}
        |\overline{0}_r|
        &=
        \left|\sum_{q=1}^P G_{r,q,P}\right|
        =
        \left|\sum_{q=1}^{P-1} G_{r,q,P}\right|
        \le
        \sum_{q=1}^{P-1}|G_{r,q,P}|
        \le \frac{8B_f^2}{n}M_fP^3e^{-M}.
    \end{align*}
    The bounds for $\overline U_{n+1,\ell_j,m_j}$, $\overline M_{null}$, $\overline H_{\ell_j,m_j}$, $\overline{0}_H$, $\overline V_{\ell_j,m_j}$, and $\overline{0}_V$ are the estimates obtained in Step 2.

    \textbf{Step 4: Parameter bound.}
    The matrices in Steps 1--3 and $\|\bW_2^O\|_{\max}=1$ give
    \[
        M_{\cA_2}
        \le
        \max\left\{
        1,M_1,M_2,\frac{2M_f}{n},\frac{M}{c},M,
        1+(P-1)e^{-M}
        \right\}.
    \]
    Here $Pe^{-M}\le1/4$ and $n\ge1$. Moreover,
    $P\ge n+2\ge3$ implies $0<c<2$, and the definition of $M_2$ gives
    \[
        M_2
        =\frac{2(dM_x+2M_1+M+2)}{c}
        \ge
        \max\left\{\frac{4M_1}{c},\frac{2M}{c}\right\}
        \ge
        \max\left\{M_1,\frac{M}{c},M\right\}.
    \]
    Using also $2M_f/n\le2M_f$ and
    $1+(P-1)e^{-M}\le5/4$, we conclude that
    \[
        M_{\cA_2}\le2\max\{M_f,M_2\},
    \]
    as claimed.
\end{proof}

The output of the second-block MHA layer stores the context
contributions separately. The FFN layer further sums them and
adds the query-localization and anchor-bias terms to form the joint oracle
logits while preserving the associated anchor values.

\begin{proof}[Proof of Lemma \ref{lemma_2ndblock_ffn}]
\textbf{Step 1: Point-wise FFN construction.}
Set
\[
    M_{\rm out}
    :=
    dM_x+(1+8B_f^2)M_f+M .
\]
Define
\[
    \bw_{\Xi}^{\top}
    :=
    \sum_{r=1}^{n+1}\be_r^{\top}
    -\be_{n+3}^{\top}
    +a_M^{-1}(M_{null}+M_{\rm out})\be_{n+5}^{\top}
    \in\RR^{1\times D}.
\]
Choose $d_{\mathrm{ff}}^2=2D$ and set
\[
    \bW_2^1
    =
    \begin{bmatrix}
        \bw_{\Xi}^{\top}\\
        -\bw_{\Xi}^{\top}\\
        \be_{n+2}^{\top}\\
        -\be_{n+2}^{\top}\\
        \bm{0}_{(2D-4)\times D}
    \end{bmatrix}
    \in\RR^{2D\times D},
    \qquad
    \bb_2^1
    =
    \begin{bmatrix}
        -(M_{null}+M_{\rm out})\\
        M_{null}+M_{\rm out}\\
        0\\
        0\\
        \bm{0}_{2D-4}
    \end{bmatrix}
    \in\RR^{2D},
\]
and
\[
    \bW_2^2
    =
    \begin{bmatrix}
        1&-1&0&0&\bm{0}_{1\times(2D-4)}\\
        0&0&1&-1&\bm{0}_{1\times(2D-4)}\\
        \bm{0}_{(D-2)\times1}
        &\bm{0}_{(D-2)\times1}
        &\bm{0}_{(D-2)\times1}
        &\bm{0}_{(D-2)\times1}
        &\bm{0}_{(D-2)\times(2D-4)}
    \end{bmatrix}
    \in\RR^{D\times2D},
    \qquad
    \bb_2^2=\be_3\in\RR^D.
\]
For each $j\in[P]$, define
\[
    \widetilde\Xi_j
    :=
    \bw_{\Xi}^{\top}(\widehat{\bZ}_2)_{:,j}
    -(M_{null}+M_{\rm out}).
\]
By \eqref{Z2hat}, the gate
$a_M^{-1}(\widehat{\bZ}_2)_{n+5,j}$ equals $1$ for $j<P$ and
$0$ for $j=P$. Consequently,
\begin{align*}
    \widetilde\Xi_j
    &=\sum_{i=1}^n\widetilde W_{i,\ell_j,m_j}
    -\overline V_{\ell_j,m_j}
    +\overline U_{n+1,\ell_j,m_j},
    &&j<P,\\
    \widetilde\Xi_P
    &=\sum_{i=1}^n\overline{0}_i
    -\overline{0}_V
    +\overline M_{null}
    -M_{null}-M_{\rm out}.
\end{align*}
Using $x=\sigma(x)-\sigma(-x)$, the FFN places these logits in the
first row, copies the $(n+2)$-nd coordinate into the second row, places
$1$ in the third row, and sets the remaining coordinates to zero.
Together with \eqref{Z2hat}, this proves \eqref{Z2}.

\textbf{Step 2: Output error bounds.}
By Lemma \ref{lemma_2ndblock} and the triangle inequality, for $j<P$,
\begin{align*}
    |\widetilde\Xi_j-\Xi_{\ell_j,m_j}(\fraks)|
    &\le
    3M_2P^2e^{-M}
    +42B_f^2M_fM_2P^2e^{-M}
    +5B_f^2M_fP^2e^{-M}\\
    &\le
    (3+47B_f^2)M_fM_2P^2e^{-M},\\
    |\overline H_{\ell_j,m_j}-H_{\ell_j,m_j}|
    &\le 5B_fP^2e^{-M}.
\end{align*}
Moreover, the definitions of $U$, $W$, and $V$ give
\[
    |U_{n+1,m_j}|\le dM_x,
    \qquad
    \sum_{i=1}^n|W_{i,\ell_j}|\le 2M_fB_f^2,
    \qquad
    |V_{\ell_j}|\le M_fB_f^2.
\]
Thus the active-logit error bound yields
\[
    \max_{j\in[P-1]}\widetilde\Xi_j
    \ge
    -dM_x-3M_fB_f^2
    -(3+47B_f^2)M_fM_2P^2e^{-M}.
\]
For the null column, the same bounds give
\begin{align*}
    |\widetilde\Xi_P+M_{\rm out}|
    &\le
    3M_2P^2e^{-M}
    +8B_f^2M_fP^3e^{-M}
    +4B_f^2M_fP^2e^{-M}\\
    &\le
    (3+12B_f^2)M_fM_2P^2e^{-M},\\
    |\overline{0}_H|
    &\le 4B_fP^2e^{-M},
\end{align*}
where the second inequality uses $M_f,M_2\ge1$ and $M_2\ge P$.
Since $12M_2P^2e^{-M}\le1$,
\[
    (6+59B_f^2)M_fM_2P^2e^{-M}
    \le (1+5B_f^2)M_f.
\]
Therefore, by the definition of $M_{\rm out}$,
\begin{align*}
    \widetilde\Xi_P
    -\max_{j\in[P-1]}\widetilde\Xi_j
    &\le
    -M_{\rm out}+dM_x+3M_fB_f^2
    +(6+59B_f^2)M_fM_2P^2e^{-M}\\
    &\le
    -M.
\end{align*}

\textbf{Step 3: Parameter bound.}
As in the proof of Lemma \ref{lemma_1stblock}, $a_M^{-1}\le2$.
Since $M_{\rm out}\ge M>1$, inspection of the matrices above gives
\[
    M_{\cF_2}\le2(M_{null}+M_{\rm out}).
\]
Furthermore, $P\ge3$ implies $0<c<2$, so $M_{null}\le M_2$.
The definitions of $M_1$ and $M_2$ give
$M_2>2M_1>2dM_x+M$, and hence
\[
    M_{\rm out}
    \le
    M_2+(1+8B_f^2)M_f.
\]
Consequently,
\[
    M_{\cF_2}
    \le4M_2+2(1+8B_f^2)M_f
    \le
    (6+16B_f^2)\max\{M_f,M_2\}.
\]
\end{proof}

The first two encoder blocks provide approximate joint logits together with
the corresponding anchor values. The final block combines these quantities
through a joint Softmax aggregation, producing the network output as an
approximation of the oracle.

\begin{proof}[Proof of Lemma \ref{lemma_final_block}]
\textbf{Step 1: Final MHA layer.}
Set $H^3=1$ and $d_k^3=d_v^3=1$, and choose
\[
    \bQ_3^1=\be_3^\top,\qquad
    \bK_3^1=\be_1^\top,\qquad
    \bV_3^1=\be_2^\top,
    \qquad
    \bW_3^O=\be_1\in\RR^{D\times1}.
\]
Let $\bA_3^1$ denote the attention matrix of this head. By
\eqref{Z2}, for $t,q\in[P]$,
\[
    (\bQ_3^1\bZ_2)_{1,q}=1,
    \qquad
    (\bK_3^1\bZ_2)_{1,t}=\widetilde\Xi_t,
    \qquad
    (\bV_3^1\bZ_2)_{1,t}
    =
    \begin{cases}
        \overline H_{\ell_t,m_t},&t<P,\\
        \overline{0}_H,&t=P.
    \end{cases}
\]
Following the column-wise definition in Section \ref{sec:architecture},
for every $q\in[P]$,
\[
    (\bA_3^1)_{:,q}
    =
    \softmax\!\left(
    \left(\bZ_2^\top{\bK_3^1}^\top
    \bQ_3^1\bZ_2\right)_{:,q}
    \right).
\]
Since every query column has the same scores, for $t,q\in[P]$,
\[
    (\bA_3^1)_{t,q}
    =
    \frac{\exp(\widetilde\Xi_t)}
    {\sum_{t'=1}^{P}\exp(\widetilde\Xi_{t'})}
    =:\widetilde\gamma_t.
\]
Since $\bW_3^O=\be_1$, the first output coordinate is
\begin{align*}
    (\widehat{\bZ}_3)_{1,1}
    &=\sum_{t=1}^{P}\widetilde\gamma_t
    (\bV_3^1\bZ_2)_{1,t}=\sum_{j=1}^{P-1}\widetilde\gamma_j
    \overline H_{\ell_j,m_j}
    +\widetilde\gamma_P\overline{0}_H .
\end{align*}
For the final FFN, choose $d_{\mathrm{ff}}^3=2D$ and set
\[
    \bW_3^1
    =
    \begin{bmatrix}
        \be_1^\top\\
        -\be_1^\top\\
        \bm{0}_{(2D-2)\times D}
    \end{bmatrix},
    \qquad
    \bb_3^1=\bm{0}_{2D},
\]
\[
    \bW_3^2
    =
    \begin{bmatrix}
        \be_1&-\be_1&\bm{0}_{D\times(2D-2)}
    \end{bmatrix},
    \qquad
    \bb_3^2=\bm{0}_D.
\]
Since $x=\sigma(x)-\sigma(-x)$,
$\bZ_3=\be_1\be_1^\top\widehat{\bZ}_3$, and hence
$(\bZ_3)_{1,1}=(\widehat{\bZ}_3)_{1,1}$.
Choose $\bc_4=\be_1\in\RR^{DP}$. Then
\[
    \rmT^*(\fraks)
    =\bc_4^\top\mathrm{vec}(\bZ_3)
    =(\bZ_3)_{1,1}.
\]
All parameters in this MHA layer, FFN layer, and readout have magnitude
at most $1$.

\textbf{Step 2: Null-token mass.}
By the null-logit separation in Lemma \ref{lemma_2ndblock_ffn},
\[
    \widetilde\Xi_P
    -\max_{j\in[P-1]}\widetilde\Xi_j
    \le -M .
\]
Therefore,
\[
    \widetilde\gamma_P
    \le
    \exp\left(
    \widetilde\Xi_P-\max_{j\in[P-1]}\widetilde\Xi_j
    \right)
    \le e^{-M}.
\]
For $j\in[P-1]$, define the active-only softmax weight explicitly by
\[
    \bar\gamma_j
    :=
    \frac{\exp(\widetilde\Xi_j)}
    {\sum_{j'=1}^{P-1}\exp(\widetilde\Xi_{j'})}.
\]
By Proposition \ref{prop:joint_softmax}, the corresponding ideal softmax
weights are $\gamma_{\ell_j,m_j}(\fraks)$. Combining the logit error bound
in Lemma \ref{lemma_2ndblock_ffn} with Lemma \ref{Softmaxlipchitz} gives
\[
    \sum_{j=1}^{P-1}
    |\bar\gamma_j-\gamma_{\ell_j,m_j}(\fraks)|
    \le 2(3+47B_f^2)M_fM_2P^2e^{-M}.
\]
Since
$\widetilde\gamma_j=(1-\widetilde\gamma_P)\bar\gamma_j$
for every $j\in[P-1]$ and $\sum_{j=1}^{P-1}\bar\gamma_j=1$, inserting
$\bar\gamma_j$ decomposes the error into the null-token normalization
error and the active-logit approximation error:
\begin{align*}
    \sum_{j=1}^{P-1}
    |\widetilde\gamma_j-\gamma_{\ell_j,m_j}(\fraks)|
    &\le
    \sum_{j=1}^{P-1}|\widetilde\gamma_j-\bar\gamma_j|
    +\sum_{j=1}^{P-1}
    |\bar\gamma_j-\gamma_{\ell_j,m_j}(\fraks)|\\
    &=
    \widetilde\gamma_P\sum_{j=1}^{P-1}\bar\gamma_j
    +\sum_{j=1}^{P-1}
    |\bar\gamma_j-\gamma_{\ell_j,m_j}(\fraks)|\\
    &\le
    e^{-M}+2(3+47B_f^2)M_fM_2P^2e^{-M}.
\end{align*}

\textbf{Step 3: Readout error.}
By Lemma \ref{lemma_2ndblock_ffn},
\[
    |\overline H_{\ell_j,m_j}-H_{\ell_j,m_j}|
    \le 5B_fP^2e^{-M},\quad j<P,
    \qquad
    |\overline{0}_H|\le 4B_fP^2e^{-M}.
\]
Using the joint representation of $\hat f_{r_f,r_x,n}$ in
Proposition \ref{prop:joint_softmax},
\begin{align*}
    \rmT^*(\fraks)-\hat f_{r_f,r_x,n}(\fraks)
    &=
    \sum_{j=1}^{P-1}\widetilde\gamma_j
    (\overline H_{\ell_j,m_j}-H_{\ell_j,m_j})
    +\sum_{j=1}^{P-1}
    (\widetilde\gamma_j-\gamma_{\ell_j,m_j}(\fraks))
    H_{\ell_j,m_j}
    +\widetilde\gamma_P\overline{0}_H.
\end{align*}
Therefore,
\begin{align*}
    \left|\rmT^*(\fraks)-\hat f_{r_f,r_x,n}(\fraks)\right|
    &\le
    \sum_{j=1}^{P-1}\widetilde\gamma_j
    |\overline H_{\ell_j,m_j}-H_{\ell_j,m_j}|
    +\sum_{j=1}^{P-1}
    |\widetilde\gamma_j-\gamma_{\ell_j,m_j}(\fraks)|
    \,|H_{\ell_j,m_j}|
    +\widetilde\gamma_P|\overline{0}_H|\\
    &\le
    5B_fP^2e^{-M}
    +2B_f(3+47B_f^2)M_fM_2P^2e^{-M}\\
    &\quad
    +B_fe^{-M}+4B_fP^2e^{-2M}.
\end{align*}
Since $M_f,M_2,P\ge1$ and $Pe^{-M}\le1/4$,
\begin{align*}
    \left|\rmT^*(\fraks)-\hat f_{r_f,r_x,n}(\fraks)\right|
    &\le
    \left(5+2(3+47B_f^2)+1+1\right)
    B_fM_fM_2P^2e^{-M}\\
    &\le
    (13+94B_f^2)B_fM_fM_2P^2e^{-M}.
\end{align*}
This proves \eqref{final-network-error}.
\end{proof}

\section{Proofs of the Statistical Lemmas}
\label{app:generalization_proofs}

This section proves the two statistical lemmas stated in Section
\ref{sec:proof-theorem2}. We first establish the metric-entropy estimate of the Transformer hypothesis class in
Lemma~\ref{lemma:icl_covering_number} by propagating parameter perturbations
through the three encoder blocks. We then use a bounded Bernstein inequality
to prove the oracle inequality in Lemma~\ref{lemma:icl_oracle}.

\subsection{Proof of Lemma \ref{lemma:icl_covering_number}}
\label{appsub:lemma:icl_covering_number}
\begin{proof}[Proof of Lemma \ref{lemma:icl_covering_number}]
Let $\rmT_{\theta},\rmT_{\widetilde\theta}\in\cT$ satisfy
\[
    \|\theta\|_\infty\le M_{\max},\qquad
    \|\widetilde\theta\|_\infty\le M_{\max},\qquad
    \|\theta-\widetilde\theta\|_\infty\le\delta .
\]
For brevity, write $M:=M_{\max}$. Since $\cMx\subset[0,1]^d$ and
$|f(\xb)|\le B_f$, every entry of the prompt matrix $\bX(\fraks)$ is
bounded by $\max\{1,B_f\}$. Recall from the pre-processing formula
in Lemma \ref{lemma:preprocessing} that
\[
    \bZ_0
    =\bW_E\bX(\fraks)+\bb_E\bm{1}_P^\top+\bP,
    \qquad
    \widetilde{\bZ}_0
    =\widetilde{\bW}_E\bX(\fraks)
    +\widetilde{\bb}_E\bm{1}_P^\top+\widetilde{\bP}.
\]
Consequently,
\begin{align*}
    \max\{\|\bZ_0\|_{\max},
    \|\widetilde{\bZ}_0\|_{\max}\}
    &\le
    \big((d+1)\max\{1,B_f\}+2\big)M
    =C_0M,\\
    \|\bZ_0-\widetilde{\bZ}_0\|_{\max}
    &\le
    (d+1)\max\{1,B_f\}
    \|\bW_E-\widetilde{\bW}_E\|_{\max}
    +\|\bb_E-\widetilde{\bb}_E\|_{\max}
    +\|\bP-\widetilde{\bP}\|_{\max}\\
    &\le
    \big((d+1)\max\{1,B_f\}+2\big)\delta
    =C_0\delta,
\end{align*}
where $C_0:=(d+1)\max\{1,B_f\}+2$.
For $i\in\{0,1,2,3\}$, define
\[
    R_i:=\max\{\|\bZ_i\|_{\max},\|\widetilde{\bZ}_i\|_{\max}\},
    \qquad E_i:=\|\bZ_i-\widetilde{\bZ}_i\|_{\max}.
\]
Consider the $h$-th attention head in block $i$. Writing
\[
    \bS_i^h
    =\bZ_{i-1}^\top(\bK_i^h)^\top\bQ_i^h\bZ_{i-1},
    \qquad
    \widetilde{\bS}_i^h
    =
    \widetilde{\bZ}_{i-1}^{\top}
    (\widetilde{\bK}_i^h)^\top\widetilde{\bQ}_i^h
    \widetilde{\bZ}_{i-1},
\]
we have, since $d_k^i\le5$,
\[
    \|(\bK_i^h)^\top\bQ_i^h
    -(\widetilde{\bK}_i^h)^\top\widetilde{\bQ}_i^h\|_{\max}
    \le d_k^i\left(
    \|\bK_i^h-\widetilde{\bK}_i^h\|_{\max}\|\bQ_i^h\|_{\max}
    +\|\widetilde{\bK}_i^h\|_{\max}
    \|\bQ_i^h-\widetilde{\bQ}_i^h\|_{\max}\right)
    \le10M\delta.
\]
We also have
\[
    \max\{\|(\bK_i^h)^\top\bQ_i^h\|_{\max},
    \|(\widetilde{\bK}_i^h)^\top\widetilde{\bQ}_i^h\|_{\max}\}
    \le d_k^iM^2\le5M^2.
\]
Expanding the score difference gives
\begin{align*}
    \bS_i^h-\widetilde{\bS}_i^h
    &=(\bZ_{i-1}-\widetilde{\bZ}_{i-1})^\top
    (\bK_i^h)^\top\bQ_i^h\bZ_{i-1}
    +\widetilde{\bZ}_{i-1}^{\top}
    \left((\bK_i^h)^\top\bQ_i^h
    -(\widetilde{\bK}_i^h)^\top\widetilde{\bQ}_i^h\right)
    \bZ_{i-1}\\
    &\quad+\widetilde{\bZ}_{i-1}^{\top}
    (\widetilde{\bK}_i^h)^\top\widetilde{\bQ}_i^h
    (\bZ_{i-1}-\widetilde{\bZ}_{i-1}).
\end{align*}
Consequently,
\[
    \|\bS_i^h-\widetilde{\bS}_i^h\|_{\max}
    \le
    10D^2\left(
    M^2R_{i-1}E_{i-1}+MR_{i-1}^2\delta\right).
\]
By Lemma \ref{Softmaxlipchitz}, for every column $j$,
\[
    \|(\bA_i^h)_{:,j}-(\widetilde{\bA}_i^h)_{:,j}\|_1
    \le2\|\bS_i^h-\widetilde{\bS}_i^h\|_{\max}
    \le20D^2\left(
    M^2R_{i-1}E_{i-1}+MR_{i-1}^2\delta\right).
\]
Since both attention matrices are column-stochastic,
\begin{align*}
    &\bV_i^h\bZ_{i-1}\bA_i^h
    -\widetilde{\bV}_i^h\widetilde{\bZ}_{i-1}
    \widetilde{\bA}_i^h\\
    &\quad=
    (\bV_i^h-\widetilde{\bV}_i^h)\bZ_{i-1}\bA_i^h
    +\widetilde{\bV}_i^h
    (\bZ_{i-1}-\widetilde{\bZ}_{i-1})\bA_i^h
    +\widetilde{\bV}_i^h\widetilde{\bZ}_{i-1}
    (\bA_i^h-\widetilde{\bA}_i^h),
\end{align*}
so we have
\begin{align*}
    &\|\bV_i^h\bZ_{i-1}\bA_i^h
    -\widetilde{\bV}_i^h\widetilde{\bZ}_{i-1}
    \widetilde{\bA}_i^h\|_{\max}\\
    &\le
    DR_{i-1}\delta+DME_{i-1}+
    DMR_{i-1}\max_{j\in[P]}
    \|(\bA_i^h)_{:,j}-(\widetilde{\bA}_i^h)_{:,j}\|_1\\
    &\le
    DME_{i-1}+DR_{i-1}\delta+
    20D^3\left(
    M^3R_{i-1}^2E_{i-1}+M^2R_{i-1}^3\delta\right).
\end{align*}
Also,
\begin{align*}
    &\max\{\|\bV_i^h\bZ_{i-1}\bA_i^h\|_{\max},
    \|\widetilde{\bV}_i^h\widetilde{\bZ}_{i-1}
    \widetilde{\bA}_i^h\|_{\max}\}\\
    &\quad\le DMR_{i-1}\max_{j\in[P]}
    \max\{\|(\bA_i^h)_{:,j}\|_1,
    \|(\widetilde{\bA}_i^h)_{:,j}\|_1\}=DMR_{i-1}.
\end{align*}
Collect the head outputs as
\[
    \bG_i:=
    \begin{bmatrix}
        \bV_i^1\bZ_{i-1}\bA_i^1\\[-1mm]
        \vdots\\[-1mm]
        \bV_i^{H^i}\bZ_{i-1}\bA_i^{H^i}
    \end{bmatrix},
    \qquad
    \widetilde{\bG}_i:=
    \begin{bmatrix}
        \widetilde{\bV}_i^1\widetilde{\bZ}_{i-1}
        \widetilde{\bA}_i^1\\[-1mm]
        \vdots\\[-1mm]
        \widetilde{\bV}_i^{H^i}\widetilde{\bZ}_{i-1}
        \widetilde{\bA}_i^{H^i}
    \end{bmatrix}.
\]
Since $\cA_i(\bZ_{i-1})=\bW_i^O\bG_i$ and
$\widetilde{\cA}_i(\widetilde{\bZ}_{i-1})
=\widetilde{\bW}_i^O\widetilde{\bG}_i$, we have
\[
    \cA_i(\bZ_{i-1})
    -\widetilde{\cA}_i(\widetilde{\bZ}_{i-1})
    =(\bW_i^O-\widetilde{\bW}_i^O)\bG_i
    +\widetilde{\bW}_i^O(\bG_i-\widetilde{\bG}_i).
\]
Therefore, using $d_v^i\le2$,
\begin{equation}
\label{one-mha-cover-proof}
\begin{aligned}
    &\|\cA_i(\bZ_{i-1})
    -\widetilde{\cA}_i(\widetilde{\bZ}_{i-1})\|_{\max}\\
    &\le
    H^i d_v^i\delta
    \max_h\|\bV_i^h\bZ_{i-1}\bA_i^h\|_{\max}\\
    &\quad+H^i d_v^iM\max_h
    \|\bV_i^h\bZ_{i-1}\bA_i^h
    -\widetilde{\bV}_i^h\widetilde{\bZ}_{i-1}
    \widetilde{\bA}_i^h\|_{\max}\\
    &\le
    2H^iDMR_{i-1}\delta
    +2H^iM\left[
    DME_{i-1}+DR_{i-1}\delta+
    20D^3\left(
    M^3R_{i-1}^2E_{i-1}+M^2R_{i-1}^3\delta\right)
    \right]\\
    &\le
    2H^iDM^2E_{i-1}+4H^iDMR_{i-1}\delta+
    40H^iD^3\left(
    M^4R_{i-1}^2E_{i-1}+M^3R_{i-1}^3\delta\right)\\
    &\le
    46H^iD^3M^4(1+R_{i-1})^3(E_{i-1}+\delta).
\end{aligned}
\end{equation}
Similarly,
\begin{equation}
\label{one-mha-mag-proof}
    \max\{\|\cA_i(\bZ_{i-1})\|_{\max},
    \|\widetilde{\cA}_i(\widetilde{\bZ}_{i-1})\|_{\max}\}
    \le2H^iDM^2(1+R_{i-1}).
\end{equation}
For the point-wise FFN, denote its inputs in the two networks by
\[
    \bZ_i^{\cA}:=\cA_i(\bZ_{i-1}),\qquad
    \widetilde{\bZ}_i^{\cA}
    :=\widetilde{\cA}_i(\widetilde{\bZ}_{i-1}),
\]
and set
\[
    R_i^{\cA}
    :=\max\{\|\bZ_i^{\cA}\|_{\max},
    \|\widetilde{\bZ}_i^{\cA}\|_{\max}\},
    \qquad
    E_i^{\cA}
    :=\|\bZ_i^{\cA}-\widetilde{\bZ}_i^{\cA}\|_{\max}.
\]
For every column $j\in[P]$,
\begin{align*}
    &\max\left\{
    \left\|\bW_i^1(\bZ_i^{\cA})_{:,j}+\bb_i^1\right\|_\infty,
    \left\|\widetilde{\bW}_i^1
    (\widetilde{\bZ}_i^{\cA})_{:,j}
    +\widetilde{\bb}_i^1\right\|_\infty
    \right\} \le DMR_i^{\cA}+M\le DM(1+R_i^{\cA}),\\
    &\bW_i^1(\bZ_i^{\cA})_{:,j}+\bb_i^1
    -\widetilde{\bW}_i^1(\widetilde{\bZ}_i^{\cA})_{:,j}
    -\widetilde{\bb}_i^1 =
    (\bW_i^1-\widetilde{\bW}_i^1)(\bZ_i^{\cA})_{:,j}
    +\widetilde{\bW}_i^1
    \left((\bZ_i^{\cA})_{:,j}
    -(\widetilde{\bZ}_i^{\cA})_{:,j}\right)
    +\bb_i^1-\widetilde{\bb}_i^1,\\
    &\left\|
    \bW_i^1(\bZ_i^{\cA})_{:,j}+\bb_i^1
    -\widetilde{\bW}_i^1(\widetilde{\bZ}_i^{\cA})_{:,j}
    -\widetilde{\bb}_i^1
    \right\|_\infty \le D\delta R_i^{\cA}+DME_i^{\cA}+\delta \le DM(1+R_i^{\cA})(E_i^{\cA}+\delta).
\end{align*}
Using the $1$-Lipschitz property of ReLU and
$d_{\mathrm{ff}}^i\le2D$, the difference between the corresponding
FFN output columns decomposes as
\begin{align*}
    &\bW_i^2\sigma\left(
    \bW_i^1(\bZ_i^{\cA})_{:,j}+\bb_i^1\right)+\bb_i^2 -
    \widetilde{\bW}_i^2\sigma\left(
    \widetilde{\bW}_i^1(\widetilde{\bZ}_i^{\cA})_{:,j}
    +\widetilde{\bb}_i^1\right)
    -\widetilde{\bb}_i^2\\
    &\quad=
    (\bW_i^2-\widetilde{\bW}_i^2)
    \sigma\left(\bW_i^1(\bZ_i^{\cA})_{:,j}+\bb_i^1\right)\\
    &\qquad+
    \widetilde{\bW}_i^2
    \left[
    \sigma\left(\bW_i^1(\bZ_i^{\cA})_{:,j}+\bb_i^1\right)
    -\sigma\left(
    \widetilde{\bW}_i^1(\widetilde{\bZ}_i^{\cA})_{:,j}
    +\widetilde{\bb}_i^1\right)
    \right]
    +\bb_i^2-\widetilde{\bb}_i^2.
\end{align*}
Therefore,
\begin{equation}
\label{one-ffn-cover-proof}
\begin{aligned}
    &\|\cF_i(\bZ_i^{\cA})
    -\widetilde{\cF}_i(\widetilde{\bZ}_i^{\cA})\|_{\max}\\
    &\le
    d_{\mathrm{ff}}^i\delta
    \max_{j\in[P]}
    \left\|\sigma\left(
    \bW_i^1(\bZ_i^{\cA})_{:,j}+\bb_i^1\right)\right\|_\infty\\
    &\quad+
    d_{\mathrm{ff}}^iM
    \max_{j\in[P]}
    \left\|
    \sigma\left(\bW_i^1(\bZ_i^{\cA})_{:,j}+\bb_i^1\right)
    -\sigma\left(
    \widetilde{\bW}_i^1(\widetilde{\bZ}_i^{\cA})_{:,j}
    +\widetilde{\bb}_i^1\right)
    \right\|_\infty+\delta\\
    &\le
    2D^2M(1+R_i^{\cA})\delta
    +2D^2M^2(1+R_i^{\cA})(E_i^{\cA}+\delta)
    +\delta\\
    &\le5D^2M^2(1+R_i^{\cA})(E_i^{\cA}+\delta).
\end{aligned}
\end{equation}
Moreover,
\begin{equation}
\label{one-ffn-mag-proof}
\begin{aligned}
    &1+\max\{\|\cF_i(\bZ_i^{\cA})\|_{\max},
    \|\widetilde{\cF}_i(\widetilde{\bZ}_i^{\cA})\|_{\max}\}\\
    &\le
    1+d_{\mathrm{ff}}^iM
    \max_{j\in[P]}
    \max\left\{
    \left\|\sigma\left(
    \bW_i^1(\bZ_i^{\cA})_{:,j}+\bb_i^1\right)\right\|_\infty,
    \left\|\sigma\left(
    \widetilde{\bW}_i^1(\widetilde{\bZ}_i^{\cA})_{:,j}
    +\widetilde{\bb}_i^1\right)\right\|_\infty
    \right\}+M\\
    &\le4D^2M^2(1+R_i^{\cA}).
\end{aligned}
\end{equation}
Since $\bZ_i=\cF_i(\bZ_i^{\cA})$ and
$\widetilde{\bZ}_i
=\widetilde{\cF}_i(\widetilde{\bZ}_i^{\cA})$, combining
\eqref{one-mha-cover-proof}--\eqref{one-ffn-mag-proof} within block $i$
yields
\begin{align*}
    1+R_i
    &\le
    4D^2M^2\left(1+2H^iDM^2(1+R_{i-1})\right) \le12H^iD^3M^4(1+R_{i-1}),\\
    E_i
    &\le
    5D^2M^2\left(1+2H^iDM^2(1+R_{i-1})\right)
    \left(
    46H^iD^3M^4(1+R_{i-1})^3(E_{i-1}+\delta)
    +\delta\right)\\
    &\le
    705(H^i)^2D^6M^8(1+R_{i-1})^4(E_{i-1}+\delta).
\end{align*}
For the architecture of Theorem \ref{thm:total_approx}, since
$P\ge n+2\ge3$,
\[
    H^1=(2n+3)P+3\le(2n+4)P\le6nP,\qquad
    H^2=nP+3\le(n+1)P\le2nP,
    \qquad H^3=1.
\]
The initial estimates give
$1+R_0\le2C_0M$ and $E_0+\delta\le2C_0\delta$.
Substitution into the preceding recursions, block by block, gives
\begin{align*}
    1+R_1&\le\exp(4)C_0H^1D^3M^5,&
    E_1&\le\exp(11)C_0^5(H^1)^2D^6M^{12}\delta,\\
    1+R_2&\le\exp(6)C_0H^1H^2D^6M^9,&
    E_2&\le
    \exp(30)C_0^9(H^1)^6(H^2)^2D^{24}M^{40}\delta,\\
    1+R_3
    &\le\exp(9)C_0H^1H^2D^9M^{13},&
    E_3
    &\le\exp(59)
    C_0^{13}(H^1)^{10}(H^2)^6D^{54}M^{84}\delta.
\end{align*}
For the readout vectors $\bc,\widetilde\bc\in\RR^{DP}$,
\begin{align*}
    |\rmT_\theta(\fraks)-\rmT_{\widetilde\theta}(\fraks)|
    &\le
    \|\bc\|_1
    \|\cT_{3,\theta}(\fraks)-\cT_{3,\widetilde\theta}(\fraks)\|_{\max}
    +\|\bc-\widetilde\bc\|_1
    \|\cT_{3,\widetilde\theta}(\fraks)\|_{\max}\\
    &\le
    DPME_3+DP\delta R_3\\
    &\le
    \exp(59)C_0^{13}
    P(H^1)^{10}(H^2)^6D^{55}M^{85}\delta
    +\exp(9)C_0PH^1H^2D^{10}M^{13}\delta\\
    &\le
    \exp(82)C_0^{13}
    P^{17}D^{55}n^{16}M^{85}\delta .
\end{align*}
Thus the parameter-to-function map is Lipschitz with constant
\begin{equation}
\label{parameter-lipschitz-bound}
    L_{\mathrm{par}}
    :=
    \exp(82)C_0^{13}
    P^{17}D^{55}n^{16}M^{85},
\end{equation}
An $\eta$-cover in the function-space sup norm is obtained by taking an
$\ell_\infty$ $\delta$-cover, with $\delta=\eta/L_{\mathrm{par}}$, of the
free-parameter cube $[-M,M]^{\mathcal N_{\mathrm{total}}}$.
This cube has such a $\delta$-cover of cardinality at most
$(1+2M/\delta)^{\mathcal N_{\mathrm{total}}}$. Since
$\eta\le1$ and $M,L_{\mathrm{par}}\ge1$, this is at most
$(3ML_{\mathrm{par}}/\eta)^{\mathcal N_{\mathrm{total}}}$. Hence
\begin{align*}
    \log\mathcal N(\eta,\cT,\|\cdot\|_\infty)
    &\le
    \mathcal N_{\mathrm{total}}
    \log\left(\frac{3ML_{\mathrm{par}}}{\eta}\right)\\
    &\le
    \mathcal N_{\mathrm{total}}
    \log\left(
    \frac{C_{\mathrm{cov}}P^{17}D^{55}n^{16}M^{86}}
    {\eta}
    \right).
\end{align*}
Here
\[
    C_{\mathrm{cov}}
    :=3\exp(82)C_0^{13}
    =3\exp(82)
    \left((d+1)\max\{1,B_f\}+2\right)^{13}.
\]
Since $\pi_{B_f}$ is $1$-Lipschitz,
\[
    \mathcal N(\eta,\pi_{B_f}\cT,\|\cdot\|_\infty)
    \le
    \mathcal N(\eta,\cT,\|\cdot\|_\infty).
\]
This proves \eqref{icl-covering-bound}.
\end{proof}

\subsection{Proof of Lemma \ref{lemma:icl_oracle}}
\label{appsub:lemma:icl_oracle}
We use the following standard bounded form of Bernstein's inequality
\citep{boucheron2013concentration}.

\begin{lemma}[Bernstein inequality for bounded nonnegative variables]
\label{lemma:bounded-bernstein}
Let $Z_1,\ldots,Z_\Gamma$ be independent copies of a random variable
$Z\in[0,b]$, where $b>0$. Write
\[
    PZ:=\EE Z,
    \qquad
    P_\Gamma Z:=\frac1\Gamma\sum_{\gamma=1}^{\Gamma}Z_\gamma.
\]
Since $0\le Z\le b$ almost surely,
\[
    \operatorname{Var}(Z)
    \le\EE Z^2
    \le b\EE Z
    =bPZ.
\]
Consequently, for every $t>0$,
\[
\begin{aligned}
    \PP\{P_\Gamma Z-PZ\ge t\}
    &\le
    \exp\left(-\frac{\Gamma t^2}
    {2bPZ+\frac23bt}\right),\\
    \PP\{PZ-P_\Gamma Z\ge t\}
    &\le
    \exp\left(-\frac{\Gamma t^2}
    {2bPZ+\frac23bt}\right).
\end{aligned}
\]
\end{lemma}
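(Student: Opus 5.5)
The plan is to reduce both tails to the classical one-sided Bernstein inequality for centered variables bounded above, and to feed it the self-bounding variance estimate already displayed in the statement. Set $X_\gamma:=Z_\gamma-PZ$ for the upper tail and $X_\gamma:=PZ-Z_\gamma$ for the lower tail. In either case the $X_\gamma$ are i.i.d. with mean zero and variance $\operatorname{Var}(Z)$. Since $Z\in[0,b]$ and $PZ\in[0,b]$, we have $X_\gamma\le b-PZ\le b$ and $X_\gamma\le PZ\le b$ respectively, so $X_\gamma\le b$ almost surely in both cases.

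First I record the variance bound: $\operatorname{Var}(Z)\le\EE Z^2\le b\,\EE Z$, using $Z^2\le bZ$ for $Z\in[0,b]$. Next I prove the one-sided Bernstein bound by the Cramér--Chernoff method. For a centered $X\le b$ with variance $\sigma^2$ and $\lambda>0$, the function $u\mapsto(e^{u}-1-u)/u^2$ is nondecreasing. Hence $e^{\lambda X}\le1+\lambda X+\lambda^2X^2\,\phi(\lambda b)$, where $\phi(u):=(e^u-1-u)/u^2$. Taking expectations gives
\[
    \EE e^{\lambda X}\le\exp\big(\sigma^2(e^{\lambda b}-1-\lambda b)/b^2\big).
\]
Using independence and Markov's inequality,
\[
    \PP\Big\{\tfrac1\Gamma\textstyle\sum_\gamma X_\gamma\ge t\Big\}
    \le\exp\Big(-\Gamma\big[\lambda t-\sigma^2(e^{\lambda b}-1-\lambda b)/b^2\big]\Big).
\]
Optimizing over $\lambda$ yields Bennett's bound $\exp(-\Gamma\sigma^2b^{-2}h(bt/\sigma^2))$ with $h(u)=(1+u)\log(1+u)-u$. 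The elementary inequality $h(u)\ge u^2/(2+2u/3)$ then gives $\exp\!\big(-\Gamma t^2/(2\sigma^2+\tfrac23bt)\big)$. Alternatively, I can use the sub-gamma bound $e^{u}-1-u\le u^2/(2(1-u/3))$ for $0\le u<3$ and choose $\lambda=t/(\sigma^2+bt/3)$ directly.

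Finally, I substitute $\sigma^2\le bPZ$. The exponent $\Gamma t^2/(2\sigma^2+\tfrac23bt)$ is decreasing in $\sigma^2$, so replacing $\sigma^2$ by the larger quantity $bPZ$ only weakens the bound, and both displayed tail inequalities follow.

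The only slightly delicate step is the elementary inequality $h(u)\ge u^2/(2+2u/3)$, equivalently the sub-gamma bound on the moment generating function. I would take it as standard from \citep{boucheron2013concentration}, or verify it by checking that the difference vanishes at $0$ together with its first derivative and has nonnegative second derivative. Everything else is routine bookkeeping.
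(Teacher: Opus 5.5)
Your proposal is correct and follows the paper's route: the paper treats this as the standard one-sided Bernstein inequality \citep{boucheron2013concentration} applied to the centered variables $\pm(Z_\gamma-PZ)\le b$, with the self-bounding estimate $\operatorname{Var}(Z)\le bPZ$ substituted in, which is exactly your reduction. Your Cram\'er--Chernoff/Bennett derivation only makes that cited step self-contained; before dividing by $\sigma^2$, set aside the degenerate case $\operatorname{Var}(Z)=0$, where $Z$ is a.s.\ constant and both probabilities are zero.
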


\begin{proof}[Proof of Lemma \ref{lemma:icl_oracle}]
For $\rmT\in\cT$, write
\[
    \psi_{\rmT}(\fraks,y_{n+1}):=(\rmT(\fraks)-y_{n+1})^2.
\]
Then
\[
    0\le\psi_{\rmT}\le4B_f^2,
    \qquad
    \operatorname{Var}(\psi_{\rmT})
    \le\EE\psi_{\rmT}^2
    \le4B_f^2\cL(\rmT).
\]
By compactness, choose $\rmT^\circ\in\cT$ with
$\cL(\rmT^\circ)=\inf_{\rmT\in\cT}\cL(\rmT)\le R^2$, and set
\[
\begin{aligned}
    S_1&:=\cL_{\frakS}(\rmT^\circ)-\cL(\rmT^\circ),\qquad
    S_2&:=\cL(\widehat{\rmT}_{\frakS})
          -\cL_{\frakS}(\widehat{\rmT}_{\frakS}).
\end{aligned}
\]
The ERM property gives
\begin{equation}
\label{oracle-risk-decomposition}
\begin{aligned}
    \cL(\widehat{\rmT}_{\frakS})
    &=\cL_{\frakS}(\widehat{\rmT}_{\frakS})+S_2
      \le\cL_{\frakS}(\rmT^\circ)+S_2 =\cL(\rmT^\circ)+S_1+S_2
      \le R^2+S_1+S_2.
\end{aligned}
\end{equation}

We first bound $S_1$. For $\eta\ge R^2$, Lemma~\ref{lemma:bounded-bernstein}
applied to $\psi_{\rmT^\circ}$ yields
\begin{equation}
\label{oracle-S1-bound}
    \PP\{S_1>\eta\}
    \le\exp\left(
    -\frac{\Gamma\eta^2}
    {8B_f^2\cL(\rmT^\circ)+\frac83B_f^2\eta}
    \right)
    \le\exp\left(-\frac{3\Gamma\eta}{32B_f^2}\right).
\end{equation}

We next control $S_2$.
Let $\cT_\eta\subset\cT$ be a
$\min\{1,\eta/(64B_f)\}$-net in $\|\cdot\|_\infty$ such that
\[
    \log|\cT_\eta|
    \le A\log\left(V\max\left\{1,\frac{64B_f}{\eta}\right\}\right).
\]
For each $\rmT\in\cT$, choose a corresponding $\rmT_0\in\cT_\eta$. Then
\[
    \|\psi_{\rmT}-\psi_{\rmT_0}\|_\infty
    \le4B_f\|\rmT-\rmT_0\|_\infty
    \le\frac{\eta}{16}.
\]
For every $\rmT_0\in\cT_\eta$, Lemma~\ref{lemma:bounded-bernstein} gives
\begin{align*}
    \PP\left\{
      \cL(\rmT_0)-\cL_{\frakS}(\rmT_0)
      >\frac12\cL(\rmT_0)+\eta
      \right\} &\le\exp\left(
      -\frac{\Gamma\big(\frac12\cL(\rmT_0)+\eta\big)^2}
      {8B_f^2\cL(\rmT_0)+\frac83B_f^2\big(\frac12\cL(\rmT_0)+\eta\big)}
      \right)\\
    &\le\exp\left(
      -\frac{3\Gamma\big(\frac12\cL(\rmT_0)+\eta\big)}{56B_f^2}
      \right)
      \le\exp\left(-\frac{3\Gamma\eta}{56B_f^2}\right).
\end{align*}
By a union bound, with probability at least
$1-|\cT_\eta|\exp(-3\Gamma\eta/(56B_f^2))$, simultaneously for all
$\rmT\in\cT$,
\begin{align*}
    \cL(\rmT)-\cL_{\frakS}(\rmT)
    &\le\cL(\rmT_0)-\cL_{\frakS}(\rmT_0)
       +2\|\psi_{\rmT}-\psi_{\rmT_0}\|_\infty \le\frac12\cL(\rmT_0)+\eta
       +2\|\psi_{\rmT}-\psi_{\rmT_0}\|_\infty\\
    &\le\frac12\cL(\rmT)+\eta
       +\frac52\|\psi_{\rmT}-\psi_{\rmT_0}\|_\infty \le\frac12\cL(\rmT)+\frac32\eta.
\end{align*}
Taking $\rmT=\widehat{\rmT}_{\frakS}$ gives
\begin{equation}
\label{oracle-S2-bound}
    \PP\left\{S_2>\frac12\cL(\widehat{\rmT}_{\frakS})+\frac32\eta\right\}
    \le\exp\left(\log|\cT_\eta|-\frac{3\Gamma\eta}{56B_f^2}\right).
\end{equation}
Combining \eqref{oracle-risk-decomposition}--\eqref{oracle-S2-bound}, for
$\eta\ge R^2$,
\begin{equation}
\label{oracle-tail-scale}
\begin{aligned}
    \PP\left\{\cL(\widehat{\rmT}_{\frakS})>2R^2+5\eta\right\} &\le\PP\{S_1>\eta\}
    +\PP\left\{S_2>\frac12\cL(\widehat{\rmT}_{\frakS})+\frac32\eta\right\}\\
    &\le\exp\left(-\frac{3\Gamma\eta}{32B_f^2}\right)
    +\exp\left(\log|\cT_\eta|-\frac{3\Gamma\eta}{56B_f^2}\right).
\end{aligned}
\end{equation}

For an exponential tail bound on the population risk, set
\[
    \eta_R:=R^2+\frac{80B_f^2}{\Gamma}
    \left(A\log\frac{8(1+B_f)V}{R}+1\right).
\]
For $\eta\ge\eta_R$, the assumptions $R\le2B_f$ and $V\ge1$ imply
\[
    V\max\left\{1,\frac{64B_f}{\eta}\right\}
    \le V\max\left\{1,\frac{64B_f}{R^2}\right\}
    \le\left(\frac{8(1+B_f)V}{R}\right)^2.
\]
Consequently,
\begin{align*}
    \log|\cT_\eta|
    &\le A\log\left(V\max\left\{1,\frac{64B_f}{\eta}\right\}\right)
      \le2A\log\frac{8(1+B_f)V}{R}\\
    &\le\frac{15}{7}\left(A\log\frac{8(1+B_f)V}{R}+1\right)
      \le\frac{3\Gamma\eta_R}{112B_f^2}
      \le\frac{3\Gamma\eta}{112B_f^2}.
\end{align*}
Thus \eqref{oracle-tail-scale} yields
\[
    \PP\left\{\cL(\widehat{\rmT}_{\frakS})>2R^2+5\eta\right\}
    \le2\exp\left(-\frac{3\Gamma\eta}{112B_f^2}\right),
    \qquad\eta\ge\eta_R.
\]

Integrating this tail bound gives the expected population risk:
\begin{align*}
    \EE_{\frakS}\cL(\widehat{\rmT}_{\frakS})
    &=\int_0^\infty
       \PP\left\{\cL(\widehat{\rmT}_{\frakS})>t\right\}\,dt\\
    &\le2R^2+5\eta_R
       +\int_{2R^2+5\eta_R}^{\infty}
       \PP\left\{\cL(\widehat{\rmT}_{\frakS})>t\right\}\,dt\\
    &=2R^2+5\eta_R
       +5\int_{\eta_R}^{\infty}
       \PP\left\{\cL(\widehat{\rmT}_{\frakS})>2R^2+5\eta\right\}\,d\eta\\
    &\le2R^2+5\eta_R
       +10\int_{\eta_R}^{\infty}
       \exp\left(-\frac{3\Gamma\eta}{112B_f^2}\right)\,d\eta\\
    &\le2R^2+5\eta_R+\frac{1120B_f^2}{3\Gamma} =7R^2+\frac{B_f^2}{\Gamma}
       \left(400A\log\frac{8(1+B_f)V}{R}+\frac{2320}{3}\right)\\
    &\le640R^2+640B_f^2
       \frac{A\log(640(1+B_f)V/R)+1}{\Gamma},
\end{align*}
where the last inequality uses
\[
    \log\frac{8(1+B_f)V}{R}\ge\log4>\frac23,
    \qquad
    \frac{2320}{3}\le640+240A\log\frac{8(1+B_f)V}{R}.
\]
This proves \eqref{icl-oracle-expectation}.
\end{proof}

\bibliographystyle{plain}
\bibliography{ref}

\end{document}